\newcommand{\eqdef}{\coloneqq} 
\newcommand{\cO}{{\cal O}}
\newcommand{\cC}{{\cal C}}
\newcommand{\R}{\mathbb{R}}
\newcommand{\E}{\mathbb{E}}
\newcommand{\var}{\mathrm{Var}}
\newcommand{\cov}{\mathrm{Cov}}
\newcommand{\norm}[1]{\left\lVert#1\right\rVert}
\def\<{\left\langle}
\def\>{\right\rangle}
\def\[{\left[}
\def\]{\right]}
\def\({\left(}
\def\){\right)}
\newcommand{\overbar}[1]{\mkern 1.5mu\overline{\mkern-1.5mu#1\mkern-1.5mu}\mkern 1.5mu}
\newcommand\br[1]{\left ( #1 \right )}
\newcommand{\sqn}[1]{{\left\lVert#1\right\rVert}^2}
\newcommand{\squeeze}{}
\renewcommand{\paragraph}[1]{ \noindent \textbf{#1}}
\newtheorem{theorem}{Theorem}
\newtheorem{assumption}{Assumption}
\newtheorem{lemma}{Lemma}
\newtheorem{prop}{Proposition}
\title{\bf Knowledge Distillation Performs \\ Partial Variance Reduction}
\author{Mher Safaryan \and Alexandra Peste \and Dan Alistarh}
\date{Institute of Science and Technology Austria (ISTA)}
\begin{document}

\maketitle

\begin{abstract}
Knowledge distillation is a popular approach for enhancing the performance of ``student'' models, 
with lower representational capacity, by taking advantage of more powerful ``teacher'' models. 
Despite its apparent simplicity and widespread use, the underlying mechanics behind knowledge distillation (KD) are still not fully understood. 
In this work, we shed new light on the inner workings of this method, by examining it from an optimization perspective. 
We show that, in the context of linear and deep linear models, KD can be interpreted as a novel type of stochastic variance reduction mechanism. 
We provide a detailed convergence analysis of the resulting dynamics, which hold under standard assumptions for both strongly-convex and non-convex losses, 
showing that KD acts as a form of \emph{partial variance reduction}, which can reduce the stochastic gradient noise, but may not eliminate it completely, depending on the properties of the ``teacher'' model. 
Our analysis puts further emphasis on the need for careful parametrization of KD, in particular w.r.t. the weighting of the distillation loss, 
and is validated empirically on both linear models and deep neural networks. 
\end{abstract}

\section{Introduction}

Knowledge Distillation (KD)~\cite{Hinton2014KD, ModelCompression2006} is a standard tool for transferring information between a 
 machine learning model of lower representational capacity--usually called the \emph{student}--and a more accurate and powerful \emph{teacher} model. 
In the context of classification using neural networks, it is common to consider the student to be a \emph{smaller} network~\cite{DBLP:journals/corr/BaC13}, whereas the teacher is a network that is larger and more computationally-heavy, but also more accurate. 
Assuming a supervised classification task, distillation consists in training the student to minimize the cross-entropy with respect to the \emph{teacher's logits} on every given sample, in addition to minimizing the standard cross-entropy loss with respect to the ground truth labels.

Since its introduction~\cite{ModelCompression2006}, distillation has been developed and applied in a wide variety of settings, from obtaining compact high-accuracy encodings of model ensembles~\cite{Hinton2014KD}, to boosting the accuracy of compressed models~\cite{tann2017hardware, polino2018model, mishra2017apprentice}, to reinforcement learning~\cite{teh2017distral,rusu2015policy,parisotto2015actor, czarnecki2018mix,galashov2018information, schmitt2018kickstarting} and learning with privileged information~\cite{vapnik2015learning}. 
Given its apparent simplicity, there has been significant interest in finding explanations for the effectiveness of distillation~\cite{DBLP:journals/corr/BaC13, Hinton2014KD, phuong19}.
For instance, one hypothesis~\cite{DBLP:journals/corr/BaC13, Hinton2014KD} is that the smoothed labels resulting from distillation present the student with a decision surface that is \emph{easier to learn} than the one presented by the categorical (one-hot) outputs.  
Another hypothesis~\cite{DBLP:journals/corr/BaC13, Hinton2014KD, vapnik2015learning}  starts from the observation that the teacher's outputs have higher entropy than the ground truth labels, and therefore, higher information content. 
Despite this work, we still have a limited analytical understanding regarding \emph{why} knowledge distillation is so effective~\cite{phuong19}. 
Specifically, very little is known about the interplay between distillation and stochastic gradient descent (SGD), which is the standard optimization setting in which this method is applied.

\subsection{Contributions}
In this paper, we investigate the impact of knowledge distillation on 
the convergence of the ``student'' model when optimizing via SGD. 
Our approach starts from a simple re-formulation of distillation in the context of gradient-based optimization,  which allows us to connect KD to stochastic  
\emph{variance reduction} techniques, 
such as SVRG~\cite{johnson2013accelerating}, which are popular in stochastic optimization. 
Our results apply both to \emph{self-distillation}, where KD is applied while training relative to an earlier version of the same model, as well as \emph{distillation for compression}, where a compressed model leverages outputs from an uncompressed one during training.  

In a nutshell, in both cases, we show that SGD with distillation preserves the convergence speed of vanilla SGD, but that the teacher's outputs serve to reduce the gradient variance term, proportionally to the distance between the teacher model and the true optimum.  
Since the teacher model may not be at an optimum, this means that variance reduction is only \emph{partial}, as distillation may not completely eliminate noise, and in fact may introduce bias or even increase variance for certain parameter values. 
Our analysis precisely characterizes this effect, which can be controlled by the weight of the distillation loss, and is validated empirically for both linear models and deep networks. 

\subsection{Results Overview}
To illustrate our results, we consider the case of self-distillation~\cite{zhang2019your}, 
which is a popular supervised training technique in which both the student model $x \in \R^d$ and the teacher model $\theta \in \R^d$ have the same structure and dimensionality. 
The process starts from a teacher model $\theta$ trained using regular cross-entropy loss, 
and then trains the student model with respect to a weighted combination of cross-entropy w.r.t. the data labels, and cross-entropy w.r.t. the teacher's outputs. 
This process is often executed over multiple iterations, in the sense that the student at iteration $k$ becomes the teacher for iteration $k + 1$, and so on. 

Our first observation is that, in the case of self-distillation with teacher weight $1 \geq \lambda \geq 0$, the gradient of the distilled student model on a sample $i$, denoted by  $\nabla_x f_i(x\mid\theta,\lambda)$ at a given iteration can simply be written as 
\begin{equation*}
\nabla_x f_i(x\mid\theta,\lambda) \simeq \nabla f_i(x) - \lambda\nabla f_i(\theta), 
\end{equation*}
where $\nabla f_i(x)$ and $\nabla f_i(\theta)$ are the student's and teacher's standard gradients on sample $i$, respectively. 
This expression is exact for linear models and generalized linear networks, and we provide evidence that it holds approximately for general classifiers. 
With this re-formulation, self-distillation can be interpreted as a truncated form of the classic SVRG iteration~\cite{johnson2013accelerating}, which never employs full (non-stochastic) gradients.  

Our main technical contribution is in providing convergence guarantees for iterations of this form, for both strong quasi-convex, and non-convex functions under the Polyak-Łojasiewicz (PL) condition. 
Our analysis covers both self-distillation (where the teacher is a partially-trained version of the same model), 
and more general distillation, in which the student is a compressed version of the teacher model. 
The convergence rates we provide are similar in nature to SGD convergence, with one key difference: 
the rate dependence on the \emph{gradient variance}  $\sigma^2$ is dampened by a term depending on the gap between the teacher model and an optimum.  
Intuitively, this says that, if the teacher's accuracy is poor, then distillation will not have any positive effect. 
However, the better-trained the teacher is, the more it can help reduce the \emph{student's} variance during optimization. 
Importantly, this effect occurs even if the teacher is \emph{not} trained to near-zero loss, 
thus motivating the usefulness of the teacher in self-distillation. Our analysis highlights the importance of the \emph{distillation weight}, as a means to maximize the positive effects of distillation: for linear models, we can even derive a closed-form solution for the optimal distillation weight. 
We validate our findings experimentally for both linear models and deep networks, confirming the effects predicted by the analysis.

\section{Related Work}

We now provide an overview for some of the relevant related work regarding KD. Knowledge distillation, in its current formulation, was introduced in the seminal work of~\cite{Hinton2014KD}, which showed that the predictive performance of a model can be improved if it is trained to match the soft targets produced by a large and accurate model. This observation has motivated the adoption of KD as a standard mechanism to enhance the training of neural networks in a wide range of settings, such as compression~\cite{polino2018model}, learning with noisy labels~\cite{li2017learning}, and has also become an essential tool in training accurate compressed versions of large models~\cite{polino2018model, sun2020mobilebert, Wang2020MiniLMDS}.

Despite these important practical advantages, providing a thorough theoretical justification for the mechanisms driving the success of KD has so far been elusive. 
Several works have focused on studying KD from different theoretical perspectives. For example, Lopez et al.~\cite{lopez2015unifying} connected distillation with \emph{privileged information}~\cite{vapnik2015learning} by proposing the notion of \emph{generalized distillation}, and presented an intuitive explanation for why generalized distillation should allow for higher sample efficiency, relative to regular training. 
Phuong and Lampert~\cite{phuong19} studied distillation from the perspective of generalization bounds in the case of linear and deep linear models. 
They identify three factors which influence the success of KD: 
(1) the geometry of the data; 
(2) the fact that the expected risk of the student always decreases with more data; 
(3) the fact that gradient descent finds a favorable minimum of the distillation objective. 
By contrast to all these previous references, our work studies the impact of distillation on stochastic (SGD-based) optimization. 

More broadly, there has been extensive work on connecting KD with other areas in learning. Dao et al.~\cite{Dao2021KD} examined links between KD and semi-parametric Inference, whereas 
Li et al.~\cite{Li2022MKD} performed an empirical study on KD in the context of learning with noisy labels. 
Yuan et al.~\cite{Yuan2020KDLS} and Sultan et al.~\cite{Sultan2023KDLS} investigated the relationships between KD and label smoothing, 
a popular heuristic for training neural networks, showing both similarities and substantive differences. 
In this context, our work is the first to signal the connection between KD and variance-reduction, 
as well as investigating the convergence of KD in the context of stochastic optimization.

\section{Knowledge Distillation}\label{sec:sd-in-ml}

We start our technical presentation by a brief introduction to knowledge distillation.

\subsection{Background}
Assume we are given a finite dataset $\{(a_n, b_n) \mid n=1,2,\dots,N\}$, where inputs $a_n\in\mathcal{A}$ (e.g., vectors from $\R^d$) and outputs $b_n\in\mathcal{B}$ (e.g., categorical labels or real numbers). Consider a set of models $\mathcal{F} = \{\phi_x:\mathcal{A}\to\mathcal{B} \mid x\in{\cal P}\subseteq\R^d\}$ with fixed neural network architecture parameterized by vector $x$. Depending on the supervised learning task and the class $\mathcal{F}$ of models, we define a loss function $\ell\colon\mathcal{B}\times\mathcal{B}\to\R_+$ in order to measure the performance of the model. In particular, the loss associated with a data point $(a_n, b_n)$ and model $\phi_x\in\mathcal{F}$ would be $\ell(\phi_x(a_n), b_n)$.
In this framework, the standard Empirical Risk Minimization (ERM) takes the following form:
\begin{equation}\label{erm}
\squeeze
\min\limits_{x\in\R^d} \frac{1}{N}\sum_{n=1}^N \ell(\phi_x(a_n), b_n).
\end{equation}

In the objective above, the model $\phi_x$ is trained to match the true outputs $b_n$ given in the training dataset. Suppose that in addition to the true labels $b_n$, we have access to sufficiently well-trained and perhaps more complicated teacher model's outputs $\Phi_{\theta}(a_n)\in{\cal B}$ for each input $a_n\in{\cal A}$. Similar to the student model $\phi_x$, the teacher model $\Phi_{\theta}$ maps ${\cal A}\to{\cal B}$ but can have different architecture, more layers and parameters. The fundamental question is how to exploit the additional knowledge of the teacher $\Phi_{\theta}$ to facilitate the training of a more compact student model $\phi_{x}$ with lower representational capacity. {\em Knowledge Distillation} with parameter $\lambda\in[0,1]$ from teacher model $\Phi_\theta$ to student model $\phi_x$ is the following modification to the objective \eqref{erm}:
\begin{equation}\label{erm-kd-1}
\squeeze
\min\limits_{x\in\R^d} \frac{1}{N}\sum_{n=1}^N \Bigl[ (1-\lambda) \ell(\phi_x(a_n), b_n) + \lambda \ell(\phi_x(a_n), \Phi_\theta(a_n)) \Bigr].
\end{equation}
Here we customize the loss penalizing dissimilarities from the teacher's feedback $\Phi_\theta(a_n)$ in addition to the true outputs $b_n$. In case of $\ell$ is linear in the second argument (e.g., cross-entropy loss), the problem simplifies into
\begin{equation}\label{erm-kd-2}
\squeeze
\min\limits_{x\in\R^d} \frac{1}{N}\sum_{n=1}^N \ell(\phi_x(a_n), (1-\lambda)b_n + \lambda \Phi_\theta(a_n)),
\end{equation}
which is a standard ERM \eqref{erm} with modified ``soft'' labels $s_n \eqdef (1-\lambda) b_n + \lambda \Phi_\theta(a_n)$ as the target.

\subsection{Self-distillation}
As already mentioned, the teacher's model $\Phi_{\theta}$ can have more complicated neural network architecture and potentially larger parameter space $\theta\in{\cal Q}\subseteq\R^{D}$. In particular, $\Phi_{\theta}$ does not have to be from the same set of models ${\cal F}$ as the student model $\phi_x$. The special case when both the student and the teacher share the same structure/architecture is called {\em self-distillation} \citep{Mobahi2020SD,zhang2019your}, which is the key setup for our work. In this case, the teacher model $\Phi_{\theta}\equiv\phi_{\theta}\in{\cal F}$ with $\theta\in\R^d$ (i.e., ${\cal Q}={\cal P},\, D=d$) and the corresponding distillation objective would be
\begin{equation}\label{erm-sd}
\squeeze
\min\limits_{x\in\R^d} \frac{1}{N}\sum_{n=1}^N \Bigl[ (1-\lambda) \ell(\phi_x(a_n), b_n) + \lambda \ell(\phi_x(a_n), \phi_\theta(a_n)) \Bigr].
\end{equation}

Our primary focus in this work would be the objective mentioned above of self-distillation. For convenience, let $f_n(x) \eqdef \ell(\phi_x(a_n), b_n)$ be the prediction loss with respect to the output $b_n$, $f_n(x\mid\theta) \eqdef \ell(\phi_x(a_n), \phi_{\theta}(a_n))$ be the loss with respect to the teacher's output probabilities and $f_n(x\mid\theta,\lambda) \eqdef \lambda f_n(x) + (1-\lambda)f_n(x\mid\theta)$ be the distillation loss. See Algorithm~\ref{alg:KD-SGD} for an illustration.

\begin{algorithm*}[t]
    \caption{Knowledge Distillation via SGD}
    \label{alg:KD-SGD}
    \begin{algorithmic}[1]
        \STATE {\bf Input:} learning rate $\gamma > 0$, initial student model $x^0\in{\cal P}\subseteq\R^d$
        \FOR{each distillation iteration $m$}
        \STATE choose a teacher model $\theta^m\in{\cal Q}\subseteq\R^D$ and distillation weight $\lambda^m\in[0,1]$ (e.g., see Sec.~\ref{sec:svrg})
        \FOR{each training iteration $t$}
        \STATE sample an unbiased mini-batch $\xi\sim{\cal D}$ form the train set
        \STATE compute distillation gradient $\nabla f_{\xi}(x^t \mid \theta^m, \lambda^m) = \lambda^m\nabla f_{\xi}(x^t) + (1-\lambda^m)\nabla f_{\xi}(x^t\mid\theta^m)$
        \STATE update the student model via $x^{t+1} = x^t - \gamma \nabla f_{\xi}(x^t \mid \theta^m, \lambda^m)$
        \ENDFOR
        \ENDFOR
    \end{algorithmic}
\end{algorithm*}

\subsection{Distillation Gradient}
As the first step towards understanding how self-distillation affects the training procedure, we analyze the modified loss landscape \eqref{erm-sd} via stochastic gradients of \eqref{erm} and \eqref{erm-sd}. In particular, we put forward the following proposition regarding the form of distillation gradient in terms of gradients of \eqref{erm}.
\begin{prop}[Distillation Gradient]\label{prop:dist-grad}
For a student model $x\in\R^d$, teacher model $\theta\in\R^d$ and distillation weight $\lambda$, the distillation gradient corresponding to self-distillation \eqref{erm-sd} is given by
\begin{equation}\label{dist-grad}
\nabla_x f_n(x\mid\theta,\lambda) = \nabla f_n(x) - \lambda\nabla f_n(\theta).
\end{equation}
\end{prop}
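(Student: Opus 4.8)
\emph{Proof strategy.} The plan is to reduce the claim to a single \emph{gradient-matching identity} for the teacher term and then finish by a one-line linear-combination argument. Concretely, I would first establish that, in the self-distillation objective \eqref{erm-sd},
\begin{equation*}
\nabla_x f_n(x\mid\theta) = \nabla f_n(x) - \nabla f_n(\theta),
\end{equation*}
which is the $\lambda=1$ special case of \eqref{dist-grad} and carries all the content. Granting this, \eqref{dist-grad} follows by expanding $f_n(x\mid\theta,\lambda)$ as its defining convex combination of $f_n(x)$ and $f_n(x\mid\theta)$, applying linearity of $\nabla_x$, substituting the identity for $\nabla_x f_n(x\mid\theta)$, and collecting the two resulting copies of $\nabla f_n(x)$.

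To prove the identity I would work in the (generalized) linear regime, where it holds exactly. Write the model's logits on sample $n$ as $g_x(a_n) = A_n x$ with a matrix $A_n$ depending on the input $a_n$ only, so that the prediction is $\psi(A_n x)$ for a fixed link $\psi$ (the identity for least squares, the softmax for cross-entropy). Two structural facts then do all the work. First, for the distillation losses used in practice the gradient with respect to the logits has the residual form $\partial_u\ell(u,v) = \psi(u) - v$, i.e.\ the target enters only as an additive shift. Second, since $x\mapsto g_x(a_n)$ is linear, the parameter-Jacobian $\partial_x g_x(a_n) = A_n$ is independent of the parameter, hence is the \emph{same} matrix at the student $x$ and at the teacher $\theta$. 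The chain rule then gives $\nabla f_n(x) = A_n^\top(\psi(A_n x) - b_n)$ and $\nabla f_n(\theta) = A_n^\top(\psi(A_n\theta) - b_n)$, while --- because the teacher feedback used as the target in \eqref{erm-sd} is precisely the teacher's prediction $\psi(A_n\theta)$ --- one has $\nabla_x f_n(x\mid\theta) = A_n^\top(\psi(A_n x) - \psi(A_n\theta))$. Subtracting the first two displays, the $b_n$ contributions cancel and what remains is exactly the third, proving the identity.

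What is left --- expanding the convex combination and simplifying --- is immediate. I would finish by noting that the same computation goes through for generalized linear networks, and by flagging where it would fail for a genuinely nonlinear model: the cancellation rests entirely on (i) the loss being affine in its target in the precise form $\partial_u\ell(u,v)=\psi(u)-v$, and (ii) the parameter-Jacobian being independent of the parameters, so that the student's and teacher's Jacobians agree; neither holds for deep nonlinear networks, which is why \eqref{dist-grad} is only approximate there (as discussed and validated later). The main obstacle is thus conceptual rather than computational: pinning down this minimal pair of assumptions and verifying that the standard distillation losses (squared error, cross-entropy) satisfy the residual form in (i). Once that is done, the proposition is a two-line calculation.
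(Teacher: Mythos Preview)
Your proposal is correct and matches the paper's approach in substance: the paper establishes \eqref{dist-grad} by explicit chain-rule computation in the (generalized) linear regime, relying on exactly the two structural facts you isolate --- the residual form $\partial_u\ell(u,v)=\psi(u)-v$ of the loss gradient and the parameter-independence of the Jacobian $A_n$ --- and then observes that the second fails for deep nonlinear models. The only cosmetic difference is packaging: the paper treats linear regression, single-layer softmax, and generic classifiers as separate case-by-case calculations for general $\lambda$, whereas you unify them into one abstract argument and reduce to the $\lambda=1$ identity first before taking the convex combination; both routes are equally short and arrive at the same place.
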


Before justifying this proposition formally, let us provide some intuition behind the expression \eqref{dist-grad} and its connection to distillation loss \eqref{erm-sd}.
First, the gradient expression \eqref{dist-grad} suggests that the teacher has little or no effect on the data points classified correctly with high confidence (i.e. those for which $\nabla f_n(\theta)$ is close to 0 or $\phi_{\theta}(a_n)$ is close to $b_n$). In other words, the more accurate the teacher is, the less it can affect the learning process. In the extreme case, a perfect or overfitted teacher (one that $\nabla f_n(\theta)=0$ or $\phi_{\theta}(a_n) = b_n$ for all $n$) will have no effect. In fact, this is expected since, in this case, problems \eqref{erm} and \eqref{erm-sd} coincide.
Alternatively, if the teacher is not perfect, then the modified objective \eqref{erm-sd} intuitively suggests that the learning dynamics of a student model is adjusted based on the teacher's knowledge. As we can see in \eqref{dist-grad}, the adjustment from the teacher is enforced by $-\lambda\nabla f_n(\theta)$ term.
It is worth mentioning that the direction of distillation gradient $\nabla_x f_n(x\mid\theta,\lambda)$ can be different from the usual gradient's direction $\nabla f_n(x)$ due to the influence of the teacher. Thus, Proposition \ref{prop:dist-grad} explicitly shows how the teacher guides the student by adjusting its stochastic gradient.

As we will show later, distillation gradient \eqref{dist-grad} leads to partial variance reduction because of the additional $-\lambda\nabla f_{\xi}(\theta)$ term. When chosen properly (distillation weight $\lambda$ and proximity of $\theta$ to the optimal solution $x^*$), this additional stochastic gradient is capable of adjusting the student’s stochastic gradient since both are computed using the same batch from the train data. In other words, both gradients have the same source of randomness which makes partial cancellations feasible.

$\bullet$ {\bf Linear regression.} To support Proposition \ref{prop:dist-grad} rigorously, consider the simple setup of linear regression. Let $\mathcal{A}=\R^d,\; {\cal P}=\R^{d+1}$, $\phi_{x}(a) = x^{\top}\bar{a}\in\R$, where $\overbar{a} = [a\;1]^{\top}\in\R^{d+1}$ is the input vector in the lifted space (to include the bias term), $\mathcal{B}=\R$, and the loss is defined by $\ell(t,t') = (t-t')^2$ for all $t,t'\in\R$. Thus, based on \eqref{erm-sd}, we have
\begin{eqnarray*}
f_n(x\mid\theta,\lambda)
&=& (1-\lambda) (x^{\top}\bar{a}_n - b_n)^2 + \lambda (x^{\top}\bar{a}_n - \theta^{\top}\bar{a}_n)^2,
\end{eqnarray*}
from which we compute its gradient straightforwardly as
\begin{eqnarray*}
\nabla_x f_n(x\mid\theta,\lambda) 
&=& 2(1-\lambda) (x^{\top}\bar{a}_n - b_n) \bar{a}_n + 2\lambda (x^{\top}\bar{a}_n - \theta^{\top}\bar{a}_n) \bar{a}_n \\
&=& 2(x^{\top}\bar{a}_n - b_n) \bar{a}_n - 2\lambda(\theta^{\top}\bar{a}_n - b_n) \bar{a}_n
= \nabla f_n(x) - \lambda\nabla f_n(\theta).
\end{eqnarray*}

Hence, the distillation gradient for linear regression tasks has the form \eqref{dist-grad}.

$\bullet$ {\bf Classification with a single hidden layer.} We can extend the above argument and derivation for a $K$-class classification model with one hidden layer that has soft-max as the last layer, i.e. $\phi_X(a_n) = \sigma(X^\top a_n)\in\R^K$, where $X = [x_1\; x_2\; \dots\; x_K] \in\R^{d\times K}$ are the model parameters, $a_n\in\mathcal{A} = \R^d$ is the input data and $\sigma$ is the soft-max function. Then, we show in the Appendix \ref{apx:lin-classif} that for all $k=1,2\dots,K$ it holds
\begin{equation*}
\squeeze \nabla_{x_k} f_n(X\mid\Theta,\lambda) = \nabla_{x_k} f_n(X) - \lambda\nabla_{\theta_k} f_n(\Theta),
\end{equation*}
where $\Theta = [\theta_1\; \theta_2\; \dots\; \theta_K] \in\R^{d\times K}$ are the teacher's parameters.

$\bullet$ {\bf Generic classification.} Proposition \ref{prop:dist-grad} will not hold precisely for arbitrary deep non-linear neural networks. However, careful calculations reveal that, in general, distillation gradient takes a form similar to \eqref{dist-grad}. Detailed derivations are deferred to Appendix \ref{apx:nonlin-classif}, here we provide the sketch.

Consider an arbitrary neural network architecture for classification that ends with soft-max layer, i.e. $\phi_x(a_n) = \sigma(\psi_n(x))$, where $a_n\in\mathcal{A}$ is the input data, $\psi_n(x)$ are the produced logits with respect to the model parameters $x$, and $\sigma$ is the soft-max function. Denote $\varphi_n(z) \eqdef \ell(\sigma(z), b_n)$ the loss associated with logits $z$ and the true label $b_n$. In words, $\psi_n$ gives the logits from the input data, while $\varphi_n$ computes the loss from given logits. Then, the representation for the loss function is $f_n(x) = \varphi_n(\psi_n(x))$. We show in Appendix \ref{apx:nonlin-classif} that the distillation gradient can be written as
\begin{equation*}\label{eq:kd-grad-nets}
\squeeze \nabla_{x} f_n(x\mid\theta,\lambda)
= J\psi_n(x) \( \nabla\varphi_n(\psi_n(x)) - \lambda\nabla\varphi_n(\psi_n(\theta)) \)
= \frac{\partial \psi_n(x)}{\partial x} \frac{\partial f_n(x)}{\partial \psi_n(x)}
- \lambda\frac{\partial \psi_n(x)}{\partial x} \frac{\partial f_n(\theta)}{\partial \psi_n(\theta)},
\end{equation*}
where $J\psi_n(x) \eqdef \frac{\partial \psi_n(x)}{\partial x} = \[\nabla \psi_{n,1}(x) \; \nabla \psi_{n,2}(x) \dots \nabla \psi_{n,K}(x)\]\in\R^{d\times K}$ is the Jacobian of the vector-valued function $\psi_n$ for logits.
Notice that the first term $\frac{\partial \psi_n(x)}{\partial x} \frac{\partial f_n(x)}{\partial \psi_n(x)}$ coincides with the student's gradient $\nabla_x f_n(x) = \frac{\partial f_n(x)}{\partial x}$. However, the second term $\frac{\partial \psi_n(x)}{\partial x} \frac{\partial f_n(\theta)}{\partial \psi_n(\theta)}$ differs from the teacher's gradient as the partial derivatives of logits are with respect to the student model.

\begin{figure*}
    \centering
    \includegraphics[width=\textwidth]{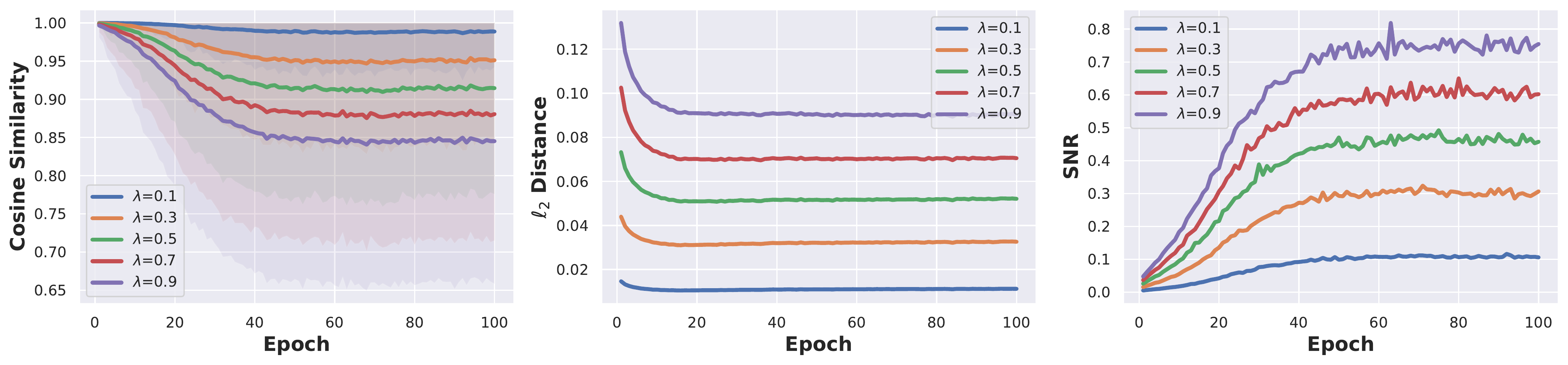}
    \caption{Cosine similarity, $l_2$ distance and SNR (i.e., $l_2$ distance over the gradient norm of standard KD) statistics between the true and approximated distillation gradient for a neural network during training. As predicted, larger $\lambda$ leads to larger differences, although gradients remain well-correlated.
    }
    \label{fig:mnist_net_cosine}
\end{figure*}

Despite these differences in the case of deep non-linear models, we observe that the distillation gradient defined by Equation~\ref{dist-grad} can approximate well the true distillation gradient from Equation~\ref{eq:kd-grad-nets}. Specifically, we consider a fully connected neural network with one hidden layer and ReLU activation~\cite{nair2010rectified}, trained on the MNIST dataset~\cite{mnist}, using regular self-distillation, from an SGD-teacher, with a fixed learning rate, and SGD with weight decay and no momentum. At each training iteration we compute the cosine similarity between the gradient of the distillation loss and the approximation from Equation~\ref{dist-grad}, and we average the results across each epoch. The results presented in Figure~\ref{fig:mnist_net_cosine} show that the distillation gradient approximates well the true distillation gradient. Moreover, the behavior is monotonic in the distillation weight $\lambda$ (higher similarity for smaller $\lambda$), as predicted by the analysis above, and it stabilizes as training progresses.
The decrease of cosine similarity can be explained as follows: at the beginning the cosine similarity is high (and SNR is low) since we start from the same model. Then, initial perturbations caused by either the KD or modified KD gradient don’t cause big shifts (the teacher has enough confidence and small gradients). These perturbations accumulate over the training leading to decreased cosine similarity and eventually stabilize.

\section{Convergence Theory for Self-Distillation}\label{sec:opt-sd}

To analyze the optimization dynamics with gradient distillation \eqref{dist-grad} and develop convergence theory, we make some assumptions on the overall loss function and stochastic gradients.

\subsection{Optimization setup and assumptions}
We abstract the standard ERM problem \eqref{erm} into a stochastic optimization problem of the form
\begin{equation}\label{main-problem}
\min_{x\in\R^d} \Bigl\{ f(x) \eqdef \E_{\xi\sim\mathcal{D}} \[ f_{\xi}(x) \] \Bigr\},
\end{equation}
where $f_{\xi}(x)$ is the loss associated with data sample $\xi\sim\mathcal{D}$ given model parameters $x\in\R^d$. For instance, if $\xi=(a_n, b_n)$ is a single data point, then the corresponding loss is $f_{\xi}(x) = \ell(\phi_x(a_n), b_n)$. The goal is to find parameters $x$ minimizing the risk $f(x)$. To solve the problem \eqref{main-problem}, we employ {\em Stochastic Gradient Descent (SGD)}. Based on Section \ref{sec:sd-in-ml}, applying SGD to the problem \eqref{main-problem} with self-distillation amounts to the following optimization updates in the parameter space:
\begin{equation}\label{iter-self-dist}
x^{t+1} = x^t - \gamma(\nabla f_{\xi}(x^t) - \lambda\nabla f_{\xi}(\theta)),
\end{equation}
with initialization $x^0\in\R^d$, step size or learning rate $\gamma>0$, teacher model's parameters $\theta\in\R^d$ and distillation weight $\lambda$. To analyze the convergence behavior of iterates \eqref{iter-self-dist}, we need to impose some assumptions in order to derive reasonable convergence guarantees. First, we assume that the problem \eqref{main-problem} has a non-empty solution set ${\cal X}\ne\emptyset$ and $f^* \eqdef f(x^*)$ for some minimizer $x^*\in{\cal X}$.

\begin{assumption}[Strong quasi-convexity]\label{asm:str-cvx} The function $f\colon\R^d\to\R$ is differentiable and $\mu$-strongly quasi-convex for some constant $\mu>0$, i.e., for any $x\in\R^d$ it holds
\begin{equation}\label{eq:str-cvx}
f(x^*) \ge f(x) + \<\nabla f(x), x^*-x\> + \frac{\mu}{2}\|x^*-x\|^2.
\end{equation}
\end{assumption}

Strong quasi-convexity \citep{Gorbunov2020UT-SGD} is a weaker version of strong convexity \citep{nesterov2013introductory}, which assumes that the quadratic lower bound above holds for at every point $y\in\R^d$ instead of $x^*\in{\cal X}$. Notice that strong quasi-convexity implies that the minimizer $x^*$ is unique\footnote{If $f(x^*) = f(x^{**})$, then $\frac{\mu}{2}\|x^*-x^{**}\| \le f(x^*) - f(x^{**}) =0$. Hence, $x^*=x^{**}$.}. A more relaxed version of this assumption is the Polyak-Łojasiewicz (PL) condition \citep{Polyak1963GM}.

\begin{assumption}[Polyak-Łojasiewicz condition]\label{asm:PL} Function $f\colon\R^d\to\R$ is differentiable and satisfies PL condition with parameter $\mu>0$, if for any $x\in\R^d$ it holds
\begin{equation}\label{ineq:PL}
\|\nabla f(x)\|^2 \ge 2\mu(f(x) - f^*).
\end{equation}
\end{assumption}

Note that the requirement imposed by the PL condition above is weaker than by strong convexity. Functions satisfying PL condition do not have to be convex and can have multiple minimizers \citep{Karimi2020PL}. We make use of the following form of smoothness assumption on the stochastic gradient commonly referred to as {\em expected smoothness} in the optimization literature \citep{Gower2019SGD,Gower2021JacSketch,Khaled2023SGD}.

\begin{assumption}[Expected Smoothness]\label{asm:exp-smooth} Functions $f_{\xi}(x)$ are differentiable and ${\cal L}$-smooth in expectation with respect to subsampling $\xi\sim{\cal D}$, i.e., for any $x\in\R^d$ it holds
\begin{equation}\label{eq:exp-smooth}
\E_{\xi\sim\mathcal{D}}\[\|\nabla f_{\xi}(x) - \nabla f_{\xi}(x^*)\|^2\] \le 2{\cal L}(f(x) - f^*)
\end{equation}
for some constant ${\cal L} = {\cal L}(f,{\cal D})$.
\end{assumption}

The expected smoothness condition above is a joint property of loss function $f$ and data subsampling strategy from the distribution ${\cal D}$. In particular, it subsumes the smoothness condition for $f(x)$ since \eqref{eq:exp-smooth} also implies $\|\nabla f(x) - \nabla f(x^*)\|^2 \le 2{\cal L}(f(x) - f^*)$ for any $x\in\R^d$. We denote by $L$ the smoothness constant of $f(x)$ and notice that $L\le{\cal L}$.

\subsection{Convergence Theory and Partial Variance Reduction}
Equipped with the assumptions described in the previous part, we now present our convergence guarantees for the iterates \eqref{iter-self-dist} for both strong quasi-convex and PL loss functions.

\begin{theorem}[See Appendix \ref{apx:thm-sd-str-cnx}]\label{thm:sd-str-cnx}
Let Assumptions \ref{asm:str-cvx} and \ref{asm:exp-smooth} hold. For any $\gamma\le\frac{1}{8{\cal L}}$ and properly chosen distillation weight $\lambda$, the iterates \eqref{iter-self-dist} of SGD with self-distillation using teacher's parameters $\theta$ converge as
\begin{equation}\label{rate-partial-vr}
\squeeze
\E\[\|x^t - x^*\|^2\] \le (1-\gamma\mu)^t \|x^0 - x^*\|^2 + \frac{2\sigma_*^2}{\mu} \min(\gamma, \cO(f(\theta) - f^*)),
\end{equation}
where $\sigma_*^2 \eqdef \E[\norm{\nabla f_{\xi}(x^*)}^2]$ is the stochastic noise at the optimum.
\end{theorem}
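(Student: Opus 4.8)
The plan is to adapt the standard one-step distance analysis of SGD to the \emph{biased} stochastic direction $g^t_\xi \eqdef \nabla f_\xi(x^t) - \lambda\nabla f_\xi(\theta)$ from \eqref{iter-self-dist}, exploiting that the student and teacher stochastic gradients share the same sample $\xi$ (so $\nabla f_\xi(\theta)$ acts as a control variate) together with Assumptions~\ref{asm:str-cvx} and~\ref{asm:exp-smooth}. Write $r^t \eqdef \sqn{x^t - x^*}$ and let $\E_t$ denote expectation over $\xi$ conditioned on $x^t$. Since $\E_t[\nabla f_\xi(x^t)] = \nabla f(x^t)$ and $\E_t[\nabla f_\xi(\theta)] = \nabla f(\theta)$, expanding the square gives
\begin{equation*}
\E_t[r^{t+1}] = r^t - 2\gamma\<\nabla f(x^t) - \lambda\nabla f(\theta),\, x^t - x^*\> + \gamma^2\,\E_t\sqn{g^t_\xi}.
\end{equation*}
For the linear term, strong quasi-convexity bounds the student part by $-2\gamma(f(x^t) - f^*) - \gamma\mu r^t$, and the teacher part $2\gamma\lambda\<\nabla f(\theta), x^t - x^*\>$ is handled by Young's inequality together with the smoothness consequence of \eqref{eq:exp-smooth}, namely $\sqn{\nabla f(\theta)} = \sqn{\nabla f(\theta) - \nabla f(x^*)} \le 2{\cal L}(f(\theta) - f^*)$; this contributes a multiple of $\gamma\lambda^2(f(\theta) - f^*)$ plus a small, controllable multiple of $\gamma r^t$ (small because $\nabla f(\theta)$ is small when $\theta$ is near-optimal).

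For the second moment I would use the control-variate decomposition
\begin{equation*}
g^t_\xi = \br{\nabla f_\xi(x^t) - \nabla f_\xi(x^*)} - \lambda\br{\nabla f_\xi(\theta) - \nabla f_\xi(x^*)} + (1-\lambda)\nabla f_\xi(x^*),
\end{equation*}
and bound $\sqn{a+b+c}\le 3(\sqn a + \sqn b + \sqn c)$ (or a tighter Young split with a free parameter). Expected smoothness gives $\E_t\sqn{\nabla f_\xi(x^t) - \nabla f_\xi(x^*)} \le 2{\cal L}(f(x^t)-f^*)$ and $\E_t\sqn{\nabla f_\xi(\theta) - \nabla f_\xi(x^*)} \le 2{\cal L}(f(\theta)-f^*)$, while $\E_t\sqn{\nabla f_\xi(x^*)} = \sigma_*^2$ by definition, so that $\E_t\sqn{g^t_\xi} \le c_1{\cal L}(f(x^t)-f^*) + c_2{\cal L}\lambda^2(f(\theta)-f^*) + c_3(1-\lambda)^2\sigma_*^2$ for absolute constants $c_1,c_2,c_3$. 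The crucial structural point is the factor $(1-\lambda)^2$ on the noise $\sigma_*^2$: this is where the \emph{partial} variance reduction comes from.

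Collecting the terms, the $(f(x^t)-f^*)$ contributions combine into $-2\gamma\br{1 - \cO(\gamma{\cal L})}(f(x^t)-f^*)$, which is nonpositive once $\gamma \le \frac{1}{8{\cal L}}$ and is therefore dropped, leaving
\begin{equation*}
\E_t[r^{t+1}] \le (1-\gamma\mu)\,r^t + 2\gamma^2\br{(1-\lambda)^2\sigma_*^2 + c\,{\cal L}\,\lambda^2\,(f(\theta)-f^*)}
\end{equation*}
for an absolute constant $c$. Taking total expectation, unrolling, and summing $\sum_{j\ge0}(1-\gamma\mu)^j = (\gamma\mu)^{-1}$ yields $\E[r^t] \le (1-\gamma\mu)^t r^0 + \frac{2\gamma}{\mu}\br{(1-\lambda)^2\sigma_*^2 + c{\cal L}\lambda^2(f(\theta)-f^*)}$. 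Minimizing the bracket over $\lambda\in[0,1]$: the unconstrained minimizer is $\lambda^\star = \frac{\sigma_*^2}{\sigma_*^2 + c{\cal L}(f(\theta)-f^*)}\in[0,1]$, with value $\frac{c{\cal L}(f(\theta)-f^*)\sigma_*^2}{\sigma_*^2 + c{\cal L}(f(\theta)-f^*)} \le \sigma_*^2\min\br{1,\ \cO(f(\theta)-f^*)}$; combined with the prefactor $\frac{2\gamma}{\mu}$ and $\gamma{\cal L}\le\frac18$ — and noting that $\lambda=0$ already recovers the pure-SGD term $\frac{2\gamma\sigma_*^2}{\mu}$ — this produces the bound of the form \eqref{rate-partial-vr}.

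I expect the main obstacle to be the teacher linear term $2\gamma\lambda\<\nabla f(\theta), x^t - x^*\>$: unlike in unbiased SGD it has indefinite sign, and a naive Young's inequality erodes the contraction factor below $(1-\gamma\mu)$. The resolution is to spend the leftover $-2\gamma(f(x^t)-f^*)$ slack and the smallness $\sqn{\nabla f(\theta)}\le 2{\cal L}(f(\theta)-f^*)$ judiciously — this is precisely where ``properly chosen $\lambda$'' enters, keeping $\lambda$ in a regime where the perturbation to the contraction is negligible while the residual remains of order $\lambda^2(f(\theta)-f^*)$. A secondary point needing care is carrying out the $\lambda$-optimization so as to land exactly on $\min\br{\gamma,\ \cO(f(\theta)-f^*)}$ with clean constants, including the boundary regime in which the unconstrained $\lambda^\star$ would exceed $1$.
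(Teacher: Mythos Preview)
Your strategy is sound and does reach the claimed bound, but it diverges from the paper in the second-moment step and the optimization over $\lambda$. The paper uses the two-term split $\sqn{g^t_\xi}\le 2\sqn{\nabla f_\xi(x^t)-\nabla f_\xi(x^*)}+2\sqn{\nabla f_\xi(x^*)-\lambda\nabla f_\xi(\theta)}$ and keeps the residual
\[
N(\lambda)=\lambda^2\sqn{\nabla f(\theta)}+c\gamma\,\E\sqn{\nabla f_\xi(x^*)-\lambda\nabla f_\xi(\theta)}
\]
intact. A separate key lemma then minimizes $N(\lambda)$ exactly, producing the closed-form $\lambda_*$ of \eqref{optimal-lambda} and the identity \eqref{impact-teacher} in terms of the correlation $\rho(x^*,\theta)$ and signal-to-noise ratio $\beta(\theta)$; only afterwards is the $\cO(f(\theta)-f^*)$ estimate invoked. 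Your three-term control-variate split followed by immediate expected-smoothness bounds is more elementary and skips that lemma entirely---which works, but forfeits the exact expressions \eqref{optimal-lambda}--\eqref{impact-teacher} that the paper presents as part of the contribution.

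One genuine slip to fix: the displayed one-step recursion cannot hold with an \emph{absolute} constant $c$. Handling the teacher cross term by Peter--Paul with parameter $s\sim 1/\mu$---which is forced, since the $r^t$ piece must be cancelled via quadratic growth (this is precisely the ``spend the $-(f(x^t)-f^*)$ slack'' fix you correctly anticipate later, and exactly what the paper does)---leaves behind $\tfrac{8\gamma}{\mu}\lambda^2\sqn{\nabla f(\theta)}$, which is $\cO(\gamma/\mu)$, not $\cO(\gamma^2)$. After unrolling, the $\lambda^2$ part of the neighborhood is therefore $\tfrac{\cO({\cal L})}{\mu^2}(f(\theta)-f^*)$ rather than $\tfrac{\gamma\cO({\cal L})}{\mu}(f(\theta)-f^*)$. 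This does not break the argument: minimizing $A(1-\lambda)^2+B\lambda^2$ still gives $AB/(A+B)\le\min(A,B)$ and hence the form \eqref{rate-partial-vr}, with the $\cO$ absorbing a factor ${\cal L}/(\mu\sigma_*^2)$. Relatedly, the parenthetical ``small because $\nabla f(\theta)$ is small'' misplaces the mechanism: the $r^t$-coefficient in Peter--Paul must be $\le\gamma\mu/8$ \emph{independently of $\theta$}; it is the $\lambda^2$-term, not the $r^t$-term, that benefits from near-optimality of the teacher.
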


\begin{theorem}[See Appendix \ref{apx:thm-sd-pl}]\label{thm:sd-pl}
Let Assumptions \ref{asm:PL} and \ref{asm:exp-smooth} hold. For any $\gamma\le\frac{1}{4{\cal L}}\frac{\mu}{L}$ and properly chosen distillation weight $\lambda$, the iterates \eqref{iter-self-dist} of SGD with self-distillation using teacher's parameters $\theta$ converge as
\begin{equation}\label{rate-partial-vr-PL}
\squeeze
\E\[f(x^t) - f^*\] \le (1-\gamma\mu)^t \(f(x^0) - f^*\) + \frac{L\sigma_*^2}{\mu} \min(\gamma, \cO(f(\theta) - f^*)),
\end{equation}
\end{theorem}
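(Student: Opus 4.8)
The plan is to treat the update \eqref{iter-self-dist} as a \emph{biased} SGD step with stochastic direction $g_\xi^t \eqdef \nabla f_\xi(x^t)-\lambda\nabla f_\xi(\theta)$, run the textbook one-step analysis for smooth functions under the PL inequality, and keep every quantity explicit in $\lambda$ so that it can be optimized at the very end. Since $\E_\xi[g_\xi^t\mid x^t] = \nabla f(x^t)-\lambda\nabla f(\theta)$, the analysis has to absorb a bias term proportional to $\nabla f(\theta)$, and this is exactly where the gap $f(\theta)-f^*$ enters.

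First I would apply the descent lemma for the $L$-smooth function $f$ (Assumption~\ref{asm:exp-smooth} implies $L\le{\cal L}<\infty$) to $x^{t+1}=x^t-\gamma g_\xi^t$ and take the conditional expectation over $\xi$, obtaining
\begin{equation*}
\E_\xi[f(x^{t+1})]-f^* \le (f(x^t)-f^*) - \gamma\|\nabla f(x^t)\|^2 + \gamma\lambda\langle\nabla f(x^t),\nabla f(\theta)\rangle + \frac{L\gamma^2}{2}\,\E_\xi\|g_\xi^t\|^2 .
\end{equation*}
The cross term is split by Young's inequality; the resulting $\|\nabla f(x^t)\|^2$ pieces are controlled by the PL inequality \eqref{ineq:PL}, while $\|\nabla f(\theta)\|^2 \le 2L(f(\theta)-f^*)$ (a consequence of $L$-smoothness together with $\nabla f(x^*)=0$) turns the teacher contribution into $f(\theta)-f^*$. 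For the second moment I would use the exact decomposition
\begin{equation*}
g_\xi^t = \bigl(\nabla f_\xi(x^t)-\nabla f_\xi(x^*)\bigr) + (1-\lambda)\,\nabla f_\xi(x^*) - \lambda\bigl(\nabla f_\xi(\theta)-\nabla f_\xi(x^*)\bigr),
\end{equation*}
apply a weighted triangle inequality, and invoke Assumption~\ref{asm:exp-smooth} once at $x^t$ and once at $\theta$ (legitimate, since \eqref{eq:exp-smooth} holds at all points), so the first and third summands are bounded by $2{\cal L}(f(x^t)-f^*)$ and $2{\cal L}\lambda^2(f(\theta)-f^*)$, while the middle summand contributes exactly $(1-\lambda)^2\sigma_*^2$.

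Collecting these estimates yields a one-step recursion $\E[f(x^{t+1})-f^*] \le q\,\E[f(x^t)-f^*] + D(\lambda)$, where the coefficient $q$ has the form $1 - c_1\gamma\mu + c_2 L{\cal L}\gamma^2$ for absolute constants $c_1\in(0,2)$, $c_2>0$, and $D(\lambda) = \cO\!\bigl(L\gamma^2(1-\lambda)^2\sigma_*^2\bigr) + \cO\!\bigl(L\gamma\lambda^2(f(\theta)-f^*)\bigr)$ collects the noise and teacher-gap terms (the $\cO(L{\cal L}\gamma^2\lambda^2)$ second-moment remainder is folded into the latter via the stepsize bound). Under $\gamma\le\frac{1}{4{\cal L}}\frac{\mu}{L}$ one has $c_2 L{\cal L}\gamma^2 \le \frac{c_2}{4}\gamma\mu$, exactly as in the vanilla PL--SGD proof, so the free Young parameters can be chosen with $c_1-\frac{c_2}{4}\ge 1$, giving $q\le 1-\gamma\mu$. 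Unrolling the geometric recursion then produces
\begin{equation*}
\E[f(x^t)-f^*] \le (1-\gamma\mu)^t\bigl(f(x^0)-f^*\bigr) + \frac{D(\lambda)}{\gamma\mu}, \qquad \frac{D(\lambda)}{\gamma\mu} = \cO\!\Bigl(\tfrac{L}{\mu}\Bigr)\bigl[\gamma(1-\lambda)^2\sigma_*^2 + \lambda^2(f(\theta)-f^*)\bigr].
\end{equation*}

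The final step is to pick $\lambda$ — this is the ``properly chosen distillation weight'' of the statement. The residual $\frac{D(\lambda)}{\gamma\mu}$ is a convex quadratic in $\lambda$; its unconstrained minimizer $\lambda^\star = \frac{\gamma\sigma_*^2}{\gamma\sigma_*^2+(f(\theta)-f^*)}$ already lies in $[0,1]$, and the optimal value is the harmonic-type combination $\frac{\gamma\sigma_*^2(f(\theta)-f^*)}{\gamma\sigma_*^2+(f(\theta)-f^*)} \le \min\bigl(\gamma\sigma_*^2,\,f(\theta)-f^*\bigr)$. Substituting $\lambda^\star$ bounds the residual by $\frac{L\sigma_*^2}{\mu}\min\bigl(\gamma,\,\cO(f(\theta)-f^*)\bigr)$, which is \eqref{rate-partial-vr-PL}; note that $\lambda=0$ recovers plain SGD while $\lambda^\star>0$ is where the (partial) variance reduction appears. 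I expect the two genuinely delicate points to be (i) fixing the Young-inequality constants so the contraction comes out as exactly $1-\gamma\mu$ under the stated stepsize restriction, and (ii) the $\lambda$-optimization — verifying the interior optimum is feasible and that the quadratic collapses to the advertised $\min$. Theorem~\ref{thm:sd-str-cnx} follows the same template, replacing the smoothness descent lemma by the expansion of $\|x^{t+1}-x^*\|^2$ and obtaining the contraction from strong quasi-convexity \eqref{eq:str-cvx} in place of the PL inequality.
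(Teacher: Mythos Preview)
Your proposal is correct and follows the same overall template as the paper (descent lemma, Young on the bias cross-term, PL to extract the contraction, unroll), but you handle the neighborhood term in a genuinely different and more elementary way. The paper keeps the quantity $\E\|\nabla f_\xi(x^*)-\lambda\nabla f_\xi(\theta)\|^2$ \emph{intact} and minimizes $N(\lambda)=\lambda^2\|\nabla f(\theta)\|^2 + L\gamma\,\E\|\nabla f_\xi(x^*)-\lambda\nabla f_\xi(\theta)\|^2$ exactly, which requires a separate key lemma (Lemma~\ref{lem:key}): the optimal weight comes out as $\lambda_* = \E[\langle\nabla f_\xi(x^*),\nabla f_\xi(\theta)\rangle]\big/\bigl(\E\|\nabla f_\xi(\theta)\|^2 + \tfrac{1}{L\gamma}\|\nabla f(\theta)\|^2\bigr)$, and showing $N(\lambda_*)/N(0)\le\min(1,\cO(\tfrac{1}{\gamma}(f(\theta)-f^*)))$ goes through a correlation/signal-to-noise decomposition (the quantities $\rho(x^*,\theta)$ and $\beta(\theta)$ in \eqref{impact-teacher}). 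You instead split $\nabla f_\xi(x^*)-\lambda\nabla f_\xi(\theta)$ once more, invoke expected smoothness at $\theta$, and land on the explicit quadratic $\gamma(1-\lambda)^2\sigma_*^2+\lambda^2(f(\theta)-f^*)$, whose minimizer and minimum value are the clean harmonic expressions you wrote. What the paper's route buys is a sharper, structurally informative identity \eqref{impact-teacher} for the reduction factor and an ``exact'' $\lambda_*$ tied to the actual covariance between teacher and optimal stochastic gradients; what your route buys is that Lemma~\ref{lem:key} and its somewhat delicate correlation calculus become unnecessary, at the price of a looser (upper-bounded) quadratic and a $\lambda^\star$ that is only a surrogate for the true optimizer. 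For the stated theorem with its $\cO(\cdot)$ slack, either route suffices; your caution about point~(i) is warranted, since with a naive three-term split the $(f(x^t)-f^*)$ coefficient becomes $3L{\cal L}\gamma^2$ rather than the paper's $2L{\cal L}\gamma^2$, so you will need a slightly smaller Young parameter on the cross term (e.g.\ $s\le\tfrac14$ rather than $\tfrac12$) to still land on the exact $1-\gamma\mu$ contraction under $\gamma\le\tfrac{\mu}{4L{\cal L}}$.
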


\begin{proof}[Proof overview]
Both proofs follow similar steps and can be divided into three logical parts.

{\em Part 1 (Descent inequality).} Generally, an integral part of essentially any convergence theory for optimization methods is a descent inequality quantifying the progress of an algorithm in one iteration. Our theory is not an exception: we first define our ``potential'' $e^t = \|x^t-x^*\|^2$ for the strongly quasi-convex setup, and $e^t = f(x^t) -f^*$ for the PL setup. Then, we start our derivations by bounding $\E_t[e^{t+1}] - (1-\gamma\mu) e^t$. Here, $\E_t$ is the conditional expectation with respect the randomness of previous iterate $x^t$. Specifically, up to constants, both setups allow the following bound: 
\begin{equation}\label{descent-ineq}
\squeeze
\E_t[e^{t+1}] \le (1-\gamma\mu) e^t - \cO(\gamma)(1-\cO(\gamma))(f(x^t)-f^*) + \cO(\gamma)N(\lambda),
\end{equation}
where $N(\lambda) = \lambda^2\|\nabla f(\theta)\|^2 + \gamma\E\[\|\nabla f_{\xi}(x^*) - \lambda\nabla f_{\xi}(\theta)\|^2\]$. Choosing the learning rate $\gamma$ to be small enough, we ensure that the second term is non-positive and hence negligible in the upper bound.

{\em Part 2 (Optimal distillation weight).} Next, we focus our attention on the third term in \eqref{descent-ineq} involving the iteration-independent neighborhood term $N(\lambda)$. Note that the $\cO(\gamma)$ factor next to $N(\lambda)$ will be absorbed once we unfold the recursion \eqref{descent-ineq} up to initial iterate. 
Hence, the convergence neighborhood is proportional to $N(\lambda)$. Now the question is how small this term can get if we properly tune the parameter $\lambda$. Notice that $N(0) = \gamma\sigma_*^2$ corresponds to the neighborhood size for plain SGD without any distillation involved. Luckily, due to the quadratic dependence, we can minimize $N(\lambda)$ analytically with respect to $\lambda$ and find the optimal value
\begin{equation}\label{optimal-lambda}
\squeeze
\lambda_*
= \frac{\E\[\<\nabla f_{\xi}(x^*), \nabla f_{\xi}(\theta)\>\]}{\E\[\|\nabla f_{\xi}(\theta)\|^2\] + \frac{1}{\gamma}\|\nabla f(\theta)\|^2}.
\end{equation}
Consequently, the analysis puts further emphasis on the need for careful parametrization with respect to the weighting $\lambda$ of the distillation loss as there exists a particularly privileged value $\lambda_*$.

{\em Part 3 (Impact of the teacher).} In the final step, we quantify the impact of the teacher on the reduction in the neighborhood term $N(\lambda_*)$ compared to the plain SGD neighborhood $N(0)$. Via algebraic transformations, we show that
\begin{equation}\label{impact-teacher}
\squeeze
\frac{N(\lambda_*)}{N(0)}
= 1 - \rho^2(x^*, \theta) \frac{1 - \beta(\theta)}{1 + \frac{1}{\gamma}\beta(\theta)},
\end{equation}
where $\beta(\theta) = \|\nabla f(\theta)\|^2/\E[\|\nabla f_{\xi}(\theta)\|^2] \in [0,1]$ is the signal-to-noise ratio, and $\rho(x^*,\theta)\in[-1,1]$ is the correlation coefficient between stochastic gradients $\nabla f_{\xi}(x^*)$ and $\nabla f_{\xi}(\theta)$. This representation gives us analytical means to measure the impact of the teacher. For instance, the optimal teacher $\theta=x^*$ satisfies $\rho(x^*,\theta)=1$ and $\beta(\theta)=0$, and thus $N(\lambda_*)=0$. In general, if the teacher is not optimal, then the reduction \eqref{impact-teacher} is of order $\frac{1}{\gamma}\cO(f(\theta)-f^*)$ and $N(\lambda_*) = \sigma_*^2\min(\gamma, \cO(f(\theta)-f^*))$.
\end{proof}

We discuss these results highlighting the key aspects and significance.

$\bullet$ {\bf Structure of the rates.} The structure of these two convergence rates is typical in gradient-based stochastic optimization literature: linear convergence up to some neighborhood controlled by the stochastic noise term $\sigma_*^2$ and learning rate $\gamma$. In fact, these results (including learning rate restrictions) are identical to ones for SGD \citep{Garrigos2023handbook} except the non-vanishing terms in \eqref{rate-partial-vr} and \eqref{rate-partial-vr-PL} include an additional $\cO(f(\theta) - f^*)$ factor due to distillation and proper selection of weight parameter $\lambda$.
For both setups, the rate of SGD is the same (11) or (12) with only one difference: $\min(\gamma, \cO(f(\theta) - f^*))$ term is replaced with $\gamma$. So, $\cO(f(\theta) - f^*)$ is the factor that makes our results better compared to SGD in terms of optimization performance.

$\bullet$ {\bf Importance of the results.} First, observe that in the worst case when the teacher's parameters are trained inadequately (or not trained at all), that is $\cO(f(\theta) - f^*) \ge \gamma$, then the obtained rates recover the known results for plain SGD. However, the crucial benefit of these results is to show that a sufficiently well-trained teacher, i.e. $\cO(f(\theta) - f^*) < \gamma$, provably reduces the neighborhood size of SGD without slowing down the speed of convergence. In the best case scenario, when the teacher's model is perfectly trained, namely $f(\theta) = f^*$, then the neighborhood term vanishes, and the method converges to the exact solution (see SGD{\rm -star} Algorithm 4 in \cite{Gorbunov2020UT-SGD}).
Thus, self-distillation in the form of iterates \eqref{iter-self-dist} acts as a form of {\em partial variance reduction}, which can reduce the stochastic gradient noise, but may not eliminate it completely, depending on the properties of the teacher model.

$\bullet$ {\bf Choice of distillation weight $\bm{\lambda}$.} As we discussed in the proof sketch above, our analysis reveals that the performance of distillation is optimized for a specific value \eqref{optimal-lambda} of distillation weight $\lambda_*$ depending on the teacher model. One way to interpret the expression \eqref{optimal-lambda} for weight parameter intuitively is that the better the teacher's model $\theta$ is, the bigger $\lambda_*\le1$ gets. In other words, $\lambda_*$ quantifies the quality of the teacher: $\lambda_*\approx0$ indicates a poor teacher model ($f(\theta)\gg f^*$) and $\lambda_*=1$ is for the optimal teacher ($f(\theta)=f^*$).

\subsection{Experimental Validation}\label{subsec:kd-convex-exps}
In this section we illustrate that our theoretical analysis in the convex case also holds empirically. Specifically, we consider classification problems using linear models in two different setups: training a linear model on the MNIST dataset~\cite{mnist} and linear probing on the CIFAR-10 dataset~\cite{cifar10}, using a ResNet50 model~\cite{resnet}, pre-trained on the ImageNet dataset~\cite{imagenet}.
For the second setup we train a linear classifier on top of the features extracted from a ResNet50 model pre-trained on ImageNet. This is a standard setting, commonly used in the transfer learning literature, see e.g. \citep{Kornblith2019TransferLearning,Salman2020TransferLearning,Iofinova2022TransferLearning}.
In both cases we train using SGD without momentum and regularization, with a fixed learning rate and mini-batch of size 10, for a total of 100 epochs. The models trained with SGD are compared against self-distillation (Equation~\ref{iter-self-dist}), using the same training hyper-parameters, where the teacher is the model trained with SGD. In the case of CIFAR-10 features, we also consider the ``optimal'' teacher, which is a model trained with L-BFGS~\cite{lbfgs}. We perform all experiments using multiple values of the distillation parameter $\lambda$ and measure the cross entropy loss between student and true labels. At each training epoch we computing the running average over all mini-batch losses seen during that epoch. 

The results presented in Figure~\ref{fig:convex_KD} show the minimum cross entropy train loss obtained over 100 epochs, as well as the average over the last 10 epochs, for models trained with SGD, as well as with self-distillation, with $\lambda\in[0.01, 1]$. We observe that when the teacher is the model trained with SGD ($\lambda=0$), there exists a $\lambda>0$ which achieves a lower training loss than SGD, which is in line with our statement from Theorem~\ref{thm:sd-str-cnx}. Furthermore, when the teacher is very close to the optimum, $\lambda$ closer to 1 reduces the training loss the most compared to SGD, which is also in line with the theory (see Theorem~\ref{thm:sd-str-cnx}). This behavior is illustrated in Figure~\ref{fig:cifar10_convex}, when using an L-BFGS teacher. 

\begin{figure*}[t]
    \centering
    \begin{subfigure}[t]{0.5\textwidth}
        \centering
        \includegraphics[width=\textwidth]{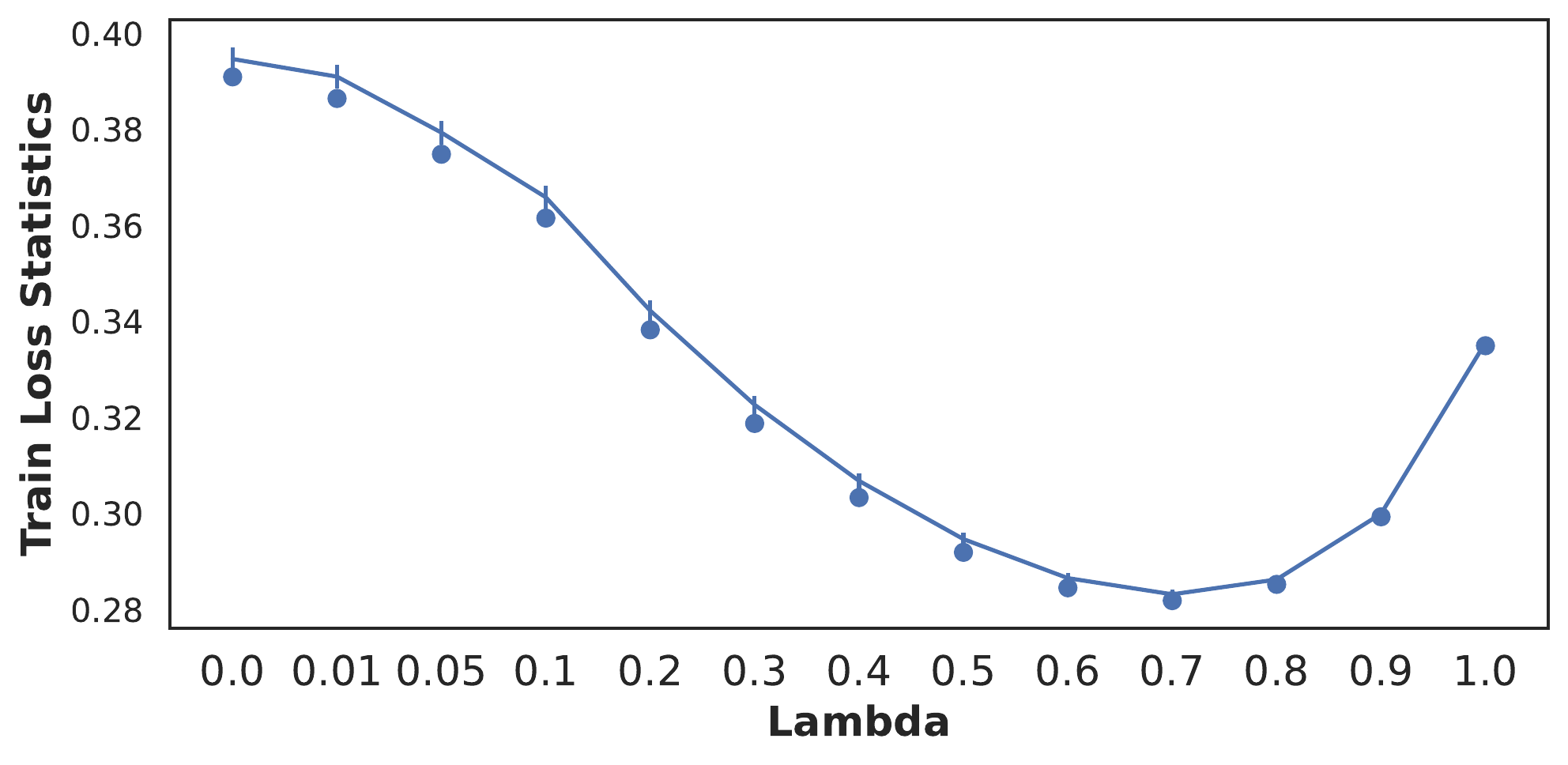}
        \caption{Self-distillation on MNIST.}
        \label{fig:mnist_convex}
    \end{subfigure}%
    ~ 
    \begin{subfigure}[t]{0.5\textwidth}
        \centering
        \includegraphics[width=\textwidth]{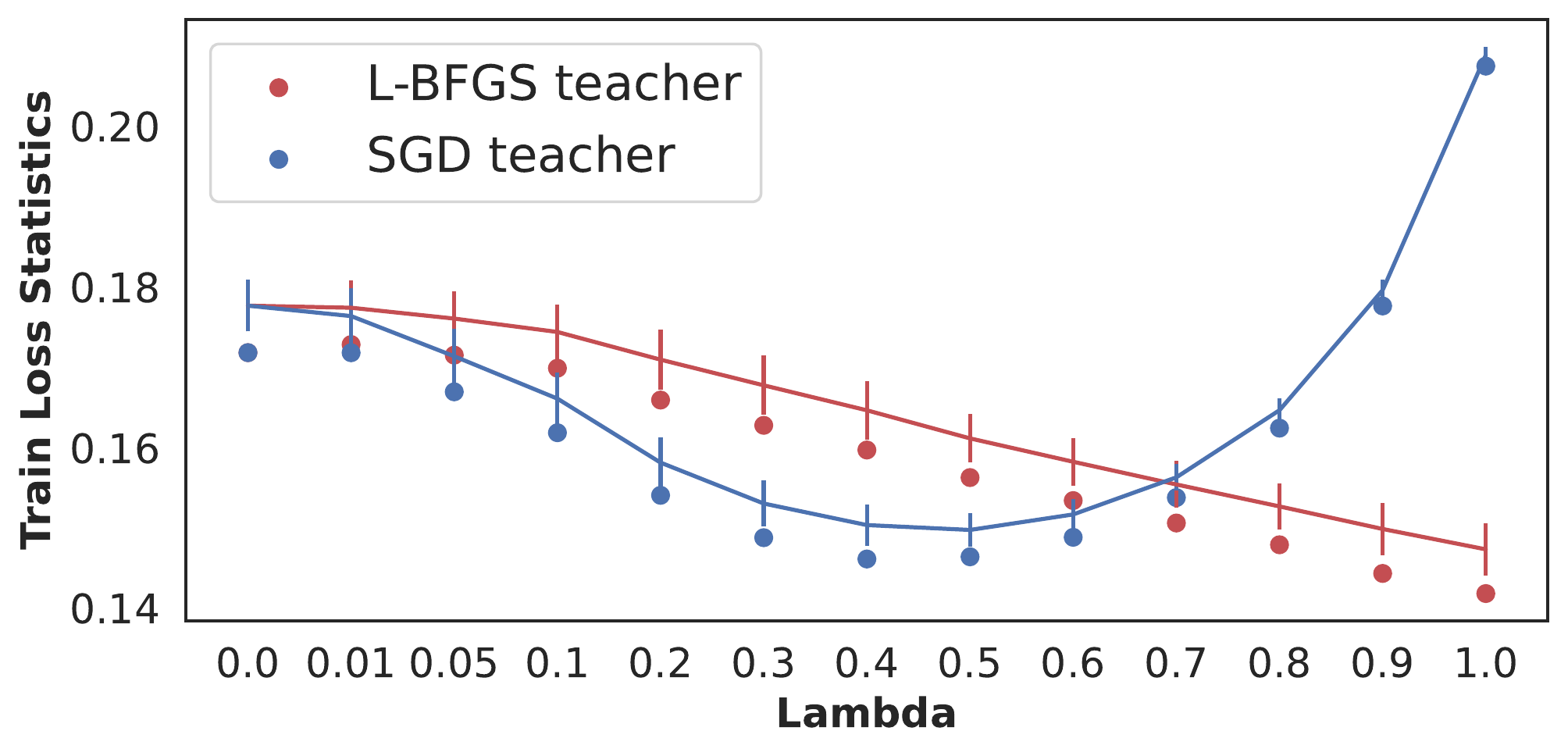}
        \caption{Self-distillation on CIFAR-10.}
        \label{fig:cifar10_convex}
    \end{subfigure}%
    ~
    \caption{The minimum training loss, and average over the last 10 epochs, for models trained with SGD and with self-distillation, using different values of the distillation parameter $\lambda$, on MNIST and CIFAR-10. SGD is equivalent to $\lambda=0$. 
    The curves for the SGD-based teachers (which do not have zero loss) reflect our analysis, corroborating the existence of the ``optimal'' distillation weight. 
    By contrast, in the L-BFGS teacher, higher distillation weight always leads to lower loss.  
    }
    \label{fig:convex_KD}
\end{figure*}

\section{Removing Bias and Improving Variance Reduction}\label{sec:svrg}

In this section, we investigate the cause of having variance reduction only partially and suggest a possible workaround to obtain complete variance reduction. 
In brief, the potential source of \emph{partial} variance reduction is the biased nature of distillation.
Essentially, distillation bias is reflected in the iterates \eqref{iter-self-dist} since the expected update direction $\E\[\nabla f_{\xi}(x^t) - \lambda\nabla f_{\xi}(\theta) \mid x^t\] = \nabla f(x^t) - \lambda\nabla f(\theta)$ can be different from $\nabla f(x^t)$. This comes from the fact that distillation loss \eqref{erm-sd} modifies the initial loss \eqref{erm} composed of true outputs. To make our argument compelling, next we correct the bias by adding $\lambda\nabla f(\theta)$ to iterates \eqref{iter-self-dist} and analyze the following dynamics:
\begin{equation}\label{eq:dist-grad-svrg}
\squeeze
x^{t+1} = x^t - \gamma(\nabla f_{\xi}(x^t) - \lambda\nabla f_{\xi}(\theta) + \lambda\nabla f(\theta)).
\end{equation}

Besides making the estimate unbiased, the advantage of this adjustment is that no tuning is required for the distillation weight $\lambda$; we may simply set $\lambda=1$. The obvious disadvantage is that $\nabla f(\theta)$ is the batch gradient over the whole train data that can be very costly to compute. However, we could compute it once and reuse it for all further iterates. The resulting iteration is similar to the popular and well-studied  SVRG~\citep{johnson2013accelerating,Xiao2014ProxSVRG,Reddi2016NonconvexSVRG,Konecny2017S2GD,Lei2017SCSG,Kovalev2020LSVRG,Malinovsky2023RR-SVRG} method, and therefore iterates \eqref{eq:dist-grad-svrg} will enjoy full variance reduction.

\begin{theorem}[See Appendix \ref{apx:svrg}]\label{thm:svrg}
Let Assumptions \ref{asm:PL} and \ref{asm:exp-smooth} hold. Then for any $\gamma \le \frac{\mu}{3L{\cal L}}$ the iterates \eqref{eq:dist-grad-svrg} with $\lambda=1$ converge as
\begin{equation}\label{rate-kd-svrg}
\squeeze
\E_t\[f(x^{t}) - f^*\] \le (1-\gamma\mu)^t (f(x^0) - f^*) + \frac{3L(L+{\cal L})}{\mu}\cdot \gamma\(f(\theta) - f^*\).
\end{equation}
\end{theorem}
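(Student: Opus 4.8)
The plan is to read the iteration \eqref{eq:dist-grad-svrg} with $\lambda=1$ as plain SGD on $f$ driven by the estimator $g^t \eqdef \nabla f_{\xi}(x^t) - \nabla f_{\xi}(\theta) + \nabla f(\theta)$, which -- unlike the biased update \eqref{iter-self-dist} -- is \emph{unbiased}: since $\E_{\xi}[\nabla f_{\xi}(\theta)] = \nabla f(\theta)$ for the fixed teacher $\theta$, we have $\E_t[g^t] = \nabla f(x^t)$. This is essentially SVRG with a single, never-updated snapshot $\theta$, so the argument follows the standard SGD-under-PL template, with the teacher playing the role of the variance anchor. First I would invoke the descent lemma for the $L$-smooth function $f$ at $x^{t+1} = x^t - \gamma g^t$, take the conditional expectation $\E_t$, and use unbiasedness to obtain
\begin{equation*}
\E_t[f(x^{t+1})] \le f(x^t) - \gamma\|\nabla f(x^t)\|^2 + \tfrac{L\gamma^2}{2}\,\E_t\big[\|g^t\|^2\big].
\end{equation*}

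The technical core is bounding $\E_t[\|g^t\|^2]$ entirely in terms of suboptimalities. I would insert $\nabla f_{\xi}(x^*)$ and group the terms as $g^t = \big(\nabla f_{\xi}(x^t) - \nabla f_{\xi}(x^*)\big) - \big(\nabla f_{\xi}(\theta) - \nabla f_{\xi}(x^*)\big) + \nabla f(\theta)$, apply $\|a+b+c\|^2 \le 3\|a\|^2 + 3\|b\|^2 + 3\|c\|^2$, then use expected smoothness (Assumption~\ref{asm:exp-smooth}) at $x^t$ and at $\theta$ on the first two blocks, and the standard consequence of $L$-smoothness $\|\nabla f(\theta)\|^2 \le 2L(f(\theta)-f^*)$ on the last. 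This yields $\E_t[\|g^t\|^2] \le 6{\cal L}(f(x^t)-f^*) + 6(L+{\cal L})(f(\theta)-f^*)$.

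Plugging this in, replacing $\|\nabla f(x^t)\|^2$ by $2\mu(f(x^t)-f^*)$ via the PL condition (Assumption~\ref{asm:PL}), and subtracting $f^*$ yields the one-step recursion
\begin{equation*}
\E_t[f(x^{t+1})-f^*] \le \big(1 - 2\gamma\mu + 3L{\cal L}\gamma^2\big)\big(f(x^t)-f^*\big) + 3L(L+{\cal L})\gamma^2\big(f(\theta)-f^*\big).
\end{equation*}
The step-size restriction $\gamma \le \frac{\mu}{3L{\cal L}}$ is exactly what is needed to ensure $3L{\cal L}\gamma^2 \le \gamma\mu$, so the contraction factor is at most $1-\gamma\mu$; since $\mu \le L \le {\cal L}$ this also forces $\gamma\mu < 1$, so the recursion genuinely contracts. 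Taking total expectations, unrolling, and summing the geometric series $\sum_{j\ge 0}(1-\gamma\mu)^j \le \frac{1}{\gamma\mu}$ turns the additive term $3L(L+{\cal L})\gamma^2(f(\theta)-f^*)$ into $\frac{3L(L+{\cal L})}{\mu}\gamma(f(\theta)-f^*)$, which is precisely \eqref{rate-kd-svrg}.

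I expect the only delicate step to be the second-moment bound: one has to insert the right reference point so that expected smoothness applies cleanly to both the moving iterate $x^t$ and the fixed teacher $\theta$, and then treat the now-deterministic correction $\nabla f(\theta)$ separately via smoothness of $f$. The payoff is that no residual $\sigma_*^2$ term survives -- the neighborhood is governed solely by $f(\theta)-f^*$ -- which is exactly why this debiased variant achieves full rather than partial variance reduction as the teacher approaches the optimum. Everything else is routine PL/SGD bookkeeping.
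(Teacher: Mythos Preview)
Your proposal is correct and follows essentially the same route as the paper's proof: descent lemma plus unbiasedness of $g^t$, the three-term split $\|a+b+c\|^2 \le 3(\|a\|^2+\|b\|^2+\|c\|^2)$ after inserting $\nabla f_{\xi}(x^*)$, expected smoothness on the two stochastic blocks, $\|\nabla f(\theta)\|^2 \le 2L(f(\theta)-f^*)$ on the deterministic correction, PL on $\|\nabla f(x^t)\|^2$, and then the step-size bound $\gamma \le \tfrac{\mu}{3L{\cal L}}$ to absorb $3L{\cal L}\gamma^2$ into the contraction before unrolling. The constants and the one-step recursion you obtain match the paper exactly.
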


The key improvement that bias correction brings in \eqref{rate-kd-svrg} is the convergence up to a neighborhood ${\cal O}(\gamma(f(\theta) - f^*))$ in contrast to $\min(\gamma, {\cal O}(f(\theta) - f^*))$ as in \eqref{rate-partial-vr} and \eqref{rate-partial-vr-PL}. The multiplicative dependence of learning rate and the quality of the teacher leads the method \eqref{eq:dist-grad-svrg} to full variance reduction.
Indeed, if we choose the teacher model as $\theta = x^0$, then the rate \eqref{rate-kd-svrg} becomes
$$
\squeeze
\E\[f(x^{t}) - f^*\] \le \[ (1-\gamma\mu)^t + \gamma \cdot 3L(L+{\cal L})/\mu\] (f(x^0) - f^*) \le \frac{1}{2}(f(x^0) - f^*),
$$
provided sufficiently small step-size $\gamma\le\frac{\mu}{12L{\cal L}}$ and enough training iterations $t=\cO(\nicefrac{1}{\gamma\mu})$. Hence, following SVRG and updating the teacher model in every $\tau=\cO(\nicefrac{1}{\gamma\mu})$ training iterations, that is choosing $\theta^m = x^{m\tau}$ as the teacher at the $m^{th}$ distillation iteration (see line 3 of Algorithm \ref{alg:KD-SGD}), we have
$$
\squeeze
\E\[f(x^{m\tau}) - f^*\] \le \frac{1}{2}(f(x^{(m-1)\tau}) - f^*) \le \frac{1}{2^m}(f(x^0) - f^*).
$$

\begin{figure*}[t]
    \centering
    \begin{subfigure}[t]{0.5\textwidth}
        \centering
        \includegraphics[height=1.5in]{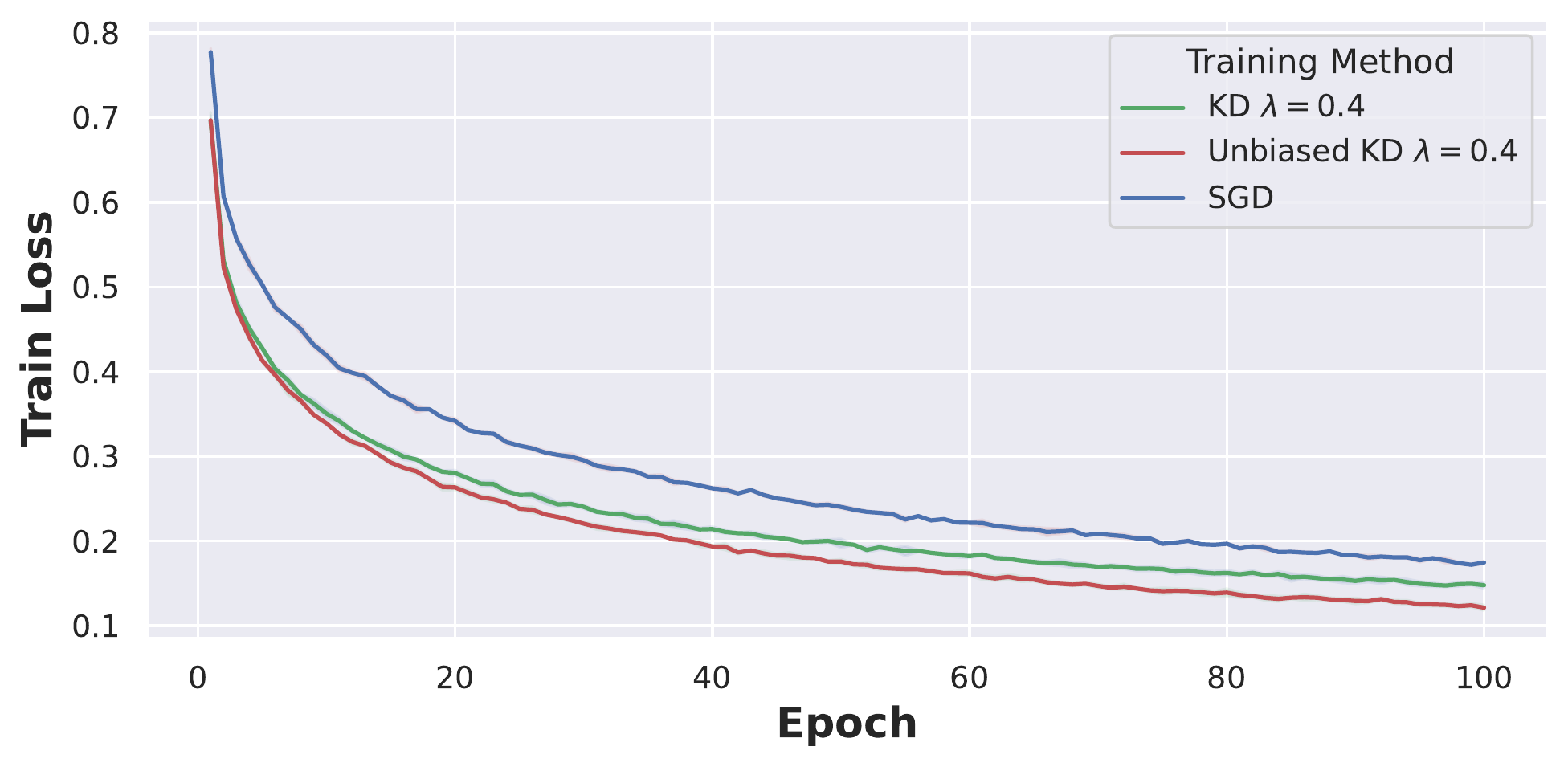}
    \end{subfigure}%
    \begin{subfigure}[t]{0.5\textwidth}
        \centering
        \includegraphics[height=1.5in]{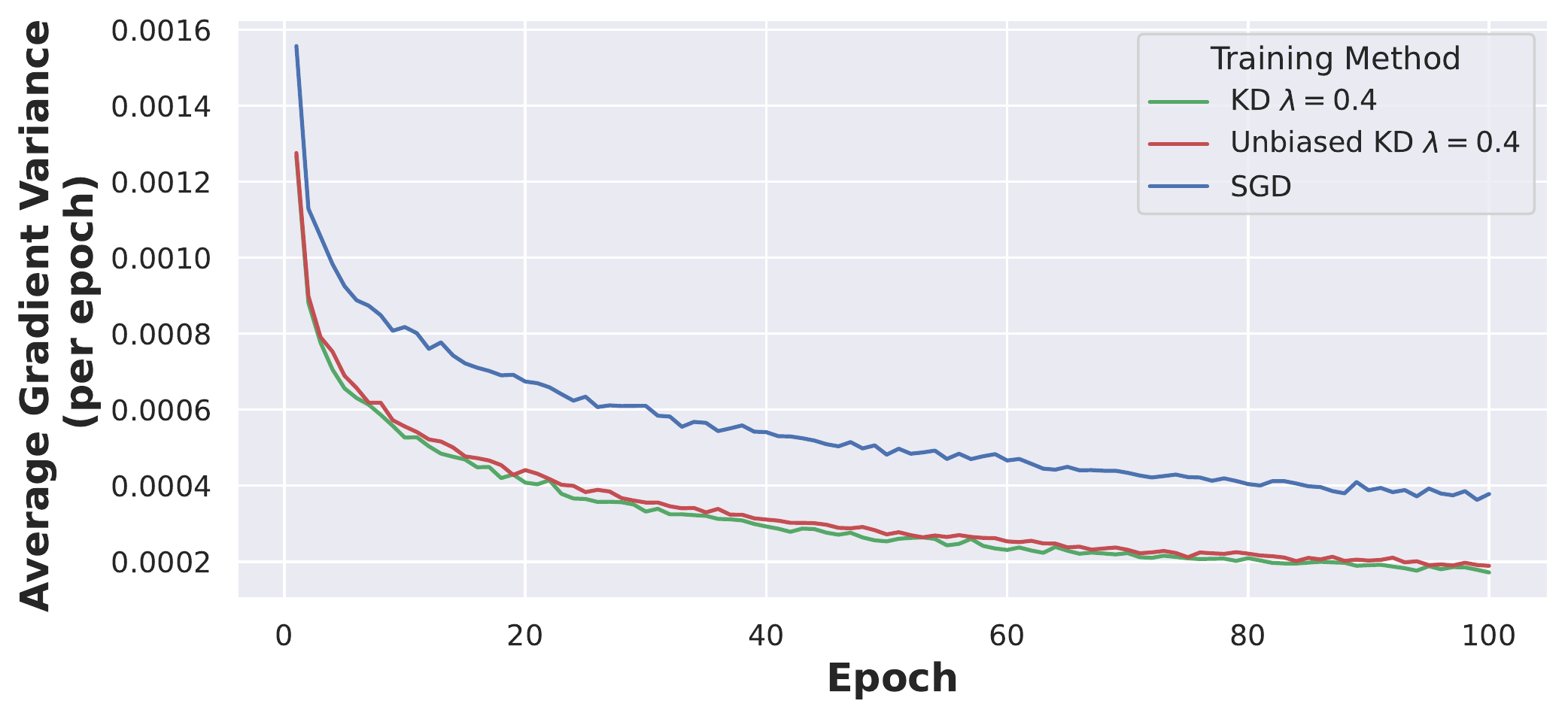}
    \end{subfigure}%
    \caption{(Left plot) The train loss of self-distillation, unbiased self-distillation and vanilla SGD training. (Right plot) The progress of gradient variances (averaged over the iterations within each epoch) for the same setup}
    \label{fig:convex_KD_var}
\end{figure*}

Thus, we need $\cO(\log\frac{1}{\epsilon})$ bias-corrected distillation iteration phases, each with $\tau=\cO(\nicefrac{1}{\gamma\mu})$ training iterates, to get $\epsilon$ accuracy in function value. Overall, this amounts to $\cO(\frac{1}{\gamma\mu}\log\frac{1}{\epsilon})$ iterations of \eqref{eq:dist-grad-svrg}.

\subsection{Experimental Validation}
Similarly to the previous section, we further validate empirically the result from Theorem~\ref{thm:svrg}. Specifically, we consider the convex setup described before, where we train linear models on features extracted on the CIFAR-10 dataset. Based on Figure~\ref{fig:cifar10_convex}, we select $\lambda=0.4$ achieving the largest reduction in train loss, compared to SGD, and we additionally perform unbiased self-distillation (Equation~\ref{eq:dist-grad-svrg}), using the same training hyperparameters. Similar to the setup from Figure~\ref{fig:convex_KD}, we measure the cross entropy train loss of the student and with the true labels, which is computed at each epoch by averaging the mini-batch losses. The results are averaged over three runs and presented in Figure~\ref{fig:convex_KD_var}. The first plot on the left shows that, indeed, the unbiased self-distillation update further reduces the training loss, compared to the update from Equation~\ref{iter-self-dist}. 
The second plot explicitly tracks gradient variance (averaged over the iterations within each epoch) for the same setup. As expected, both variants of KD (biased and unbiased) have reduced gradient variance compared to plain SGD. The plot also highlights that both variants of KD have similar variance reduction properties, while the unbiasedness of unbiased KD amplifies the reduction of train loss.

\section{Convergence for Distillation of Compressed Models}\label{sec:theory-kd}

So far, the theory we have presented is for self-distillation, i.e., the teacher's and student's architectures are identical. To understand the impact of knowledge distillation, we relax this requirement and allow the student's model to be a sub-network of the larger and more powerful teacher's model. Our approach to model this relationship between the student and the teacher is to view the student as a masked or, in general, compressed version of the teacher. Hence, as an extension to \eqref{iter-self-dist} we analyze the following dynamics of distillation with compressed iterates:
\begin{equation}\label{iter-kd}
\squeeze
x^{t+1} = \cC(x^t - \gamma(\nabla f_{\xi}(x^t) - \lambda\nabla f_{\xi}(\theta))),
\end{equation}
where student's parameters are additionally compressed in each iteration using an unbiased compression operator defined below.
\begin{assumption}\label{asm:compression-operator}
The compression operator $\cC:\R^d \to \R^d$ is unbiased and there exists finite $\omega\geq 0$ bounding the compression variance variance, i.e., for all $x\in\R^d$ we have
\begin{equation}\label{unbiased-compr}
\squeeze \E[\cC(x)] = x, \qquad \E[\norm{\cC(x) - x}^2] \leq \omega \norm{x}^2.
\end{equation}
\end{assumption}
Typical examples of compression operators satisfying conditions \eqref{unbiased-compr} are sparsification \citep{Wangni2018sparsification,Stich2018sparse} and quantization \cite{Alistarh2017QSGD,WXYWWCL}, which are heavily used in the context of communication efficient distributed optimization and federated learning \cite{Markov2023Quantized,Safaryan2021smoothness,Philippenko2021preserved,Vogels2019PowerSGD}. In this context, we obtain the following: 

\begin{theorem}[See Appendix \ref{apx:theory-kd}]\label{thm:theory-kd}
Let smoothness Assumption \ref{asm:exp-smooth} hold and $f$ be $\mu$-strongly convex. Choose any $\gamma\le\frac{1}{16{\cal L}}$ and compression operator with variance parameter $\omega=\cO(\nicefrac{\mu}{{\cal L}})$. Then, properly selecting distillation weight $\lambda$, the iterates \eqref{iter-kd} satisfy 
\begin{equation*}\label{rate-compr-iter}
\squeeze
\E\[\norm{x^{t} - x^*}^2\]
\leq \cO(\omega+1)\[ (1-\gamma \mu)^t \|x^0-x^*\|^2
    + \frac{\omega{\cal L}}{\mu}\|x^*\|^2
    + \frac{\sigma_*^2}{\mu} \min\(\gamma, \cO(f(\theta) - f^*)\) \].
\end{equation*}
\end{theorem}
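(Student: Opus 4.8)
\textbf{Proof proposal for Theorem~\ref{thm:theory-kd}.}

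The plan is to adapt the one-step descent analysis used for Theorem~\ref{thm:sd-str-cnx} to the compressed iteration \eqref{iter-kd}, absorbing the extra noise injected by $\cC$ through Assumption~\ref{asm:compression-operator}. First I would write $y^{t+1} \eqdef x^t - \gamma(\nabla f_\xi(x^t) - \lambda\nabla f_\xi(\theta))$ for the uncompressed update, so that $x^{t+1} = \cC(y^{t+1})$. Taking the conditional expectation over the compression randomness (independent of $\xi$) and using unbiasedness $\E[\cC(y^{t+1})] = y^{t+1}$ together with the variance bound, I get
\begin{equation*}
\E_{\cC}\[\|x^{t+1} - x^*\|^2\] = \|y^{t+1} - x^*\|^2 + \E_{\cC}\[\|\cC(y^{t+1}) - y^{t+1}\|^2\] \le (1+\omega)\|y^{t+1} - x^*\|^2 + \omega\|x^*\|^2,
\end{equation*}
where I used $\|y^{t+1}\|^2 \le 2\|y^{t+1}-x^*\|^2 + 2\|x^*\|^2$ (absorbing the factor into $\omega$ as needed, or carrying a cleaner $\le (1+2\omega)\|y^{t+1}-x^*\|^2 + 2\omega\|x^*\|^2$). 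This is the step that introduces the persistent $\frac{\omega{\cal L}}{\mu}\|x^*\|^2$ bias term and the $\cO(\omega+1)$ prefactor — it is the one genuinely new ingredient relative to the self-distillation proof.

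Next I would control $\|y^{t+1} - x^*\|^2$ exactly as in the proof of Theorem~\ref{thm:sd-str-cnx}: expand the square, take $\E_\xi$, use strong convexity (Assumption~\ref{asm:str-cvx} in its stronger genuine-convexity form, as the theorem hypothesis states $\mu$-strong convexity) to extract the contraction factor $(1-\gamma\mu)$, and use expected smoothness (Assumption~\ref{asm:exp-smooth}) via the decomposition $\nabla f_\xi(x^t) - \lambda\nabla f_\xi(\theta) = (\nabla f_\xi(x^t) - \nabla f_\xi(x^*)) + (\nabla f_\xi(x^*) - \lambda\nabla f_\xi(\theta))$ to bound the second moment of the stochastic direction. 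This reproduces a bound of the shape $\E_\xi\|y^{t+1}-x^*\|^2 \le (1-\gamma\mu)\|x^t-x^*\|^2 - \cO(\gamma)(1-\cO(\gamma{\cal L}))(f(x^t)-f^*) + \cO(\gamma^2) N(\lambda)$ with $N(\lambda)$ the neighborhood term from the proof overview. Multiplying through by $(1+2\omega)$, the contraction factor becomes $(1+2\omega)(1-\gamma\mu)$; here is where the hypothesis $\omega = \cO(\mu/{\cal L})$ and $\gamma \le \frac{1}{16{\cal L}}$ enter — they guarantee $(1+2\omega)(1-\gamma\mu) \le 1 - \tfrac{\gamma\mu}{2}$ (say), so we retain a genuine geometric decay, and simultaneously keep the $(1-\cO(\gamma{\cal L}))$ curvature coefficient nonnegative so that the $(f(x^t)-f^*)$ term can be dropped. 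I would then invoke Part~2 of the proof overview verbatim: minimize $N(\lambda)$ over $\lambda$ to get $N(\lambda_*) = \sigma_*^2 \min(\gamma, \cO(f(\theta)-f^*))$.

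Finally I would unroll the recursion $\E[e^{t+1}] \le (1-\tfrac{\gamma\mu}{2})\E[e^t] + (1+2\omega)\cO(\gamma^2)N(\lambda_*) + 2\omega\|x^*\|^2$ from $t$ down to $0$; the geometric series in the $N(\lambda_*)$ term contributes $\cO(\gamma/\mu)\cdot(1+\omega)N(\lambda_*)$, and the constant $2\omega\|x^*\|^2$ term — which does \emph{not} carry a $\gamma$ and hence is not damped — accumulates to $\cO(\omega/(\gamma\mu))\|x^*\|^2$; rewriting $\omega = \cO(\mu/{\cal L})$ and $\gamma \le \tfrac{1}{16{\cal L}}$ converts this into the claimed $\cO(\omega+1)\cdot\frac{\omega{\cal L}}{\mu}\|x^*\|^2$ form (more carefully: the bound $\gamma \le 1/(16{\cal L})$ gives $\omega/(\gamma\mu) \le 16\omega{\cal L}/\mu$), and substituting $N(\lambda_*)$ yields the stated $\frac{\sigma_*^2}{\mu}\min(\gamma, \cO(f(\theta)-f^*))$ term, all multiplied by the overall $\cO(\omega+1)$ factor. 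The main obstacle is bookkeeping the interaction between the compression-inflated contraction factor $(1+2\omega)(1-\gamma\mu)$ and the step-size/compression-variance constraints so that the effective rate stays $(1-\Omega(\gamma\mu))$ while the $\|x^*\|^2$-term comes out with exactly the advertised $\omega{\cal L}/\mu$ scaling; everything else is a routine re-run of the Theorem~\ref{thm:sd-str-cnx} argument.
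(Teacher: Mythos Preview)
Your argument is correct and in fact cleaner than the paper's, though the route is genuinely different. The paper does \emph{not} run the recursion on $\|x^t-x^*\|^2$: it introduces the auxiliary sequence $y^t$ with $x^t=\cC(y^t)$, rewrites the dynamics as $y^{t+1}=y^t+\delta^t-\gamma(\nabla f_\xi(y^t+\delta^t)-\lambda\nabla f_\xi(\theta))$ with $\delta^t=\cC(y^t)-y^t$, and derives a contraction for $\|y^t-x^*\|^2$ directly. Because the gradient is evaluated at the perturbed point $y^t+\delta^t$, this requires a battery of five lemmas (perturbed strong-convexity and smoothness inequalities in the spirit of Khaled--Richt\'arik) to control the extra $\delta^t$ terms; the compression noise enters through constants $\alpha=2\omega/\mu$, $\nu=2\omega\|x^*\|^2$ inside the descent inequality, and the $\omega=\cO(\mu/{\cal L})$ restriction arises from forcing $4\gamma A+2\alpha(L-\mu)\le 1$ rather than from the contraction factor. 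Only at the very end is $\|x^t-x^*\|^2$ recovered from $\|y^t-x^*\|^2$ via the single bound $\E\|x^t-x^*\|^2\le(2\omega+1)\E\|y^t-x^*\|^2+2\omega\|x^*\|^2$.

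Your approach trades that machinery for a multiplicative $(1+2\omega)$ inflation of the contraction factor at each step, which is why you need $(1+2\omega)(1-\gamma\mu)\le 1-\tfrac{\gamma\mu}{2}$, i.e.\ $\omega=\cO(\gamma\mu)$; this coincides with $\omega=\cO(\mu/{\cal L})$ only at $\gamma\asymp 1/{\cal L}$, not for arbitrarily small $\gamma$ --- but the paper has exactly the same caveat (its proof also needs $\omega=\cO(\gamma\mu)$ in general, and a footnote notes the stated rate is for $\gamma=1/(16{\cal L})$). The payoff is that you reuse the Theorem~\ref{thm:sd-str-cnx} descent verbatim and avoid all the perturbed-gradient lemmas; the paper's route keeps the clean $(1-\gamma\mu)$ contraction but pays with heavier bookkeeping. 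Both land on the same bound. One minor imprecision: your per-step noise is not purely $\cO(\gamma^2)N(\lambda)$ --- the bias piece $\lambda^2\|\nabla f(\theta)\|^2$ enters at order $\gamma/\mu$ via Peter--Paul, as in the Theorem~\ref{thm:sd-str-cnx} proof --- but this does not affect the final neighborhood once the key lemma is applied.
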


Clearly, there are several factors influencing the speed of the rate and the neighborhood of the convergence that require some discussion.
First of all, choosing the identity map as a compression operator ($\cC(x)=x$ for all $x\in\R^d$), we recover the same rate \eqref{rate-partial-vr} as before ($\omega=0$ in this case). Next, consider the case when the stochastic noise at the optimum vanishes ($\sigma_*^2=0$) and distillation is switched off ($\lambda=0$) in \eqref{iter-kd}. In this case, the convergence is still up to some neighborhood proportional to $\norm{x^*}^2$ since compression is applied to the iterates. Intuitively, the neighborhood term $\cO(\norm{x^*}^2)$ corresponds to the compression noise at the optimum $x^*$ (\eqref{unbiased-compr} when $x=x^*$). Also note that the presence of this non-vanishing term $\cO(\norm{x^*}^2)$ and the variance restriction $\omega=\cO(\nicefrac{\mu}{{\cal L}})$ is consistent with the prior work \citep{Khaled2019GDCI}.

So, the convergence neighborhood of iterates \eqref{iter-kd} has two terms, one from each source of randomness: compression noise/variance $\cO(\norm{x^*}^2)$ at the optimum and stochastic noise/variance $\cO(\sigma_*^2)$ at the optimum. Therefore, in this case as well, distillation with a properly chosen weight parameter (partially) reduces the stochastic variance of sub-sampling.

\section{Additional Experimental Validation}

In this section we provide additional experimental validation for our theory.

\subsection{The impact of the learning hyperparameters}

We begin by measuring the impact that the optimization parameters, in particular the step size, have on the convergence of SGD and KD. For this, we perform experiments on linear models trained on the MNIST dataset, without momentum and regularization, using a mini-batch size of 10, for a total of 100 epochs. We compute the cross entropy train loss between the student and true labels, measured as a running average over all iterations within an epoch, similar to Figure~\ref{fig:convex_KD}. We also compute the minimum train loss, as well as the average and standard deviation across the last 10 epochs. The results in Figure~\ref{fig:mnist_convex_more_lrs} show the impact that different learning rates have on the overall training dynamics of self-distillation. In all cases, the teacher was trained using the same hyperparamters as the self-distilled models ($\lambda=0$ in the plot). We can see that using a higher learning rate introduces more variance in the SGD update, and KD would have a more pronounced variance reduction effect. In all cases, however, we can find an optimum $\lambda$ achieving a lower train loss compared to SGD.

\subsection{The impact of the teacher's quality}

Next, we quantify the impact that a better trained teacher could have on self-distillation. Using the same setup of convex MNIST as above, we perform self-distillation using a better teacher, i.e. one that achieves $93.7\%$ train accuracy and $92.5\%$ test accuracy. (In comparison, the teacher trained using a step size of 0.05 achieved $92\%$ train accuracy and $90.9\%$ test accuracy) We can see in Figure~\ref{fig:mnist_convex_better_teacher} that this better-trained teacher has a more substantial impact on the models which inherently have higher variance, i.e. those trained with a higher learning rate; in this case, the optimal value of $\lambda$ is closer to 1, which is also suggested by the theory (see Equation~\ref{optimal-lambda}). We note that a similar behavior was also observed on the CIFAR-10 features linear classification task presented in Figure~\ref{fig:cifar10_convex}.

\begin{figure*}[t]
    \centering
    \begin{subfigure}[t]{0.45\textwidth}
        \centering
        \includegraphics[height=1.4in]{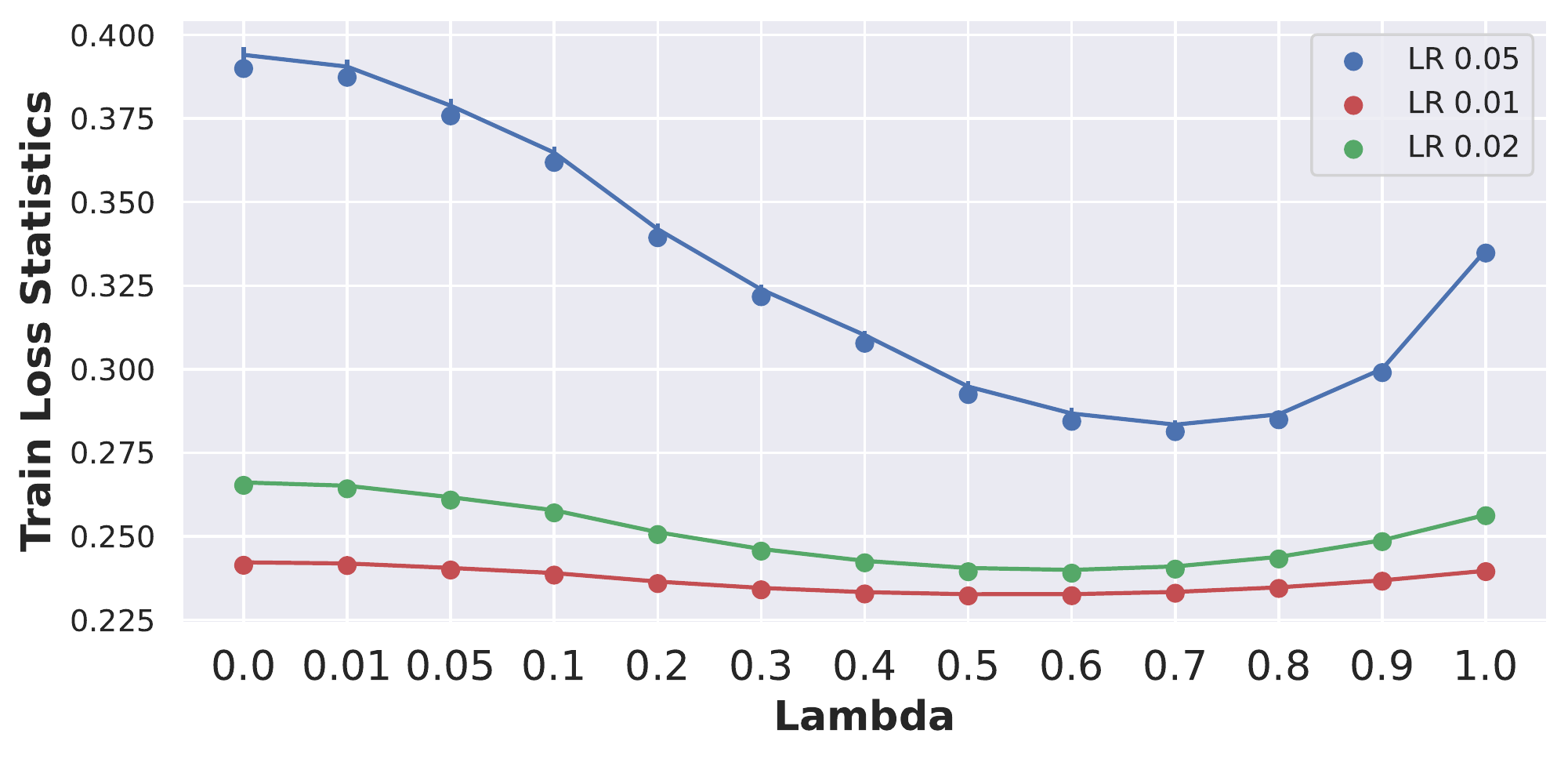}
        \caption{The impact of the step size on the learning dynamics of SGD and KD. The teacher was trained with the same hyperparameters as the corresponding self-distilled models.}
        \label{fig:mnist_convex_more_lrs}
    \end{subfigure}
    ~
    \begin{subfigure}[t]{0.45\textwidth}
        \centering
        \includegraphics[height=1.4in]{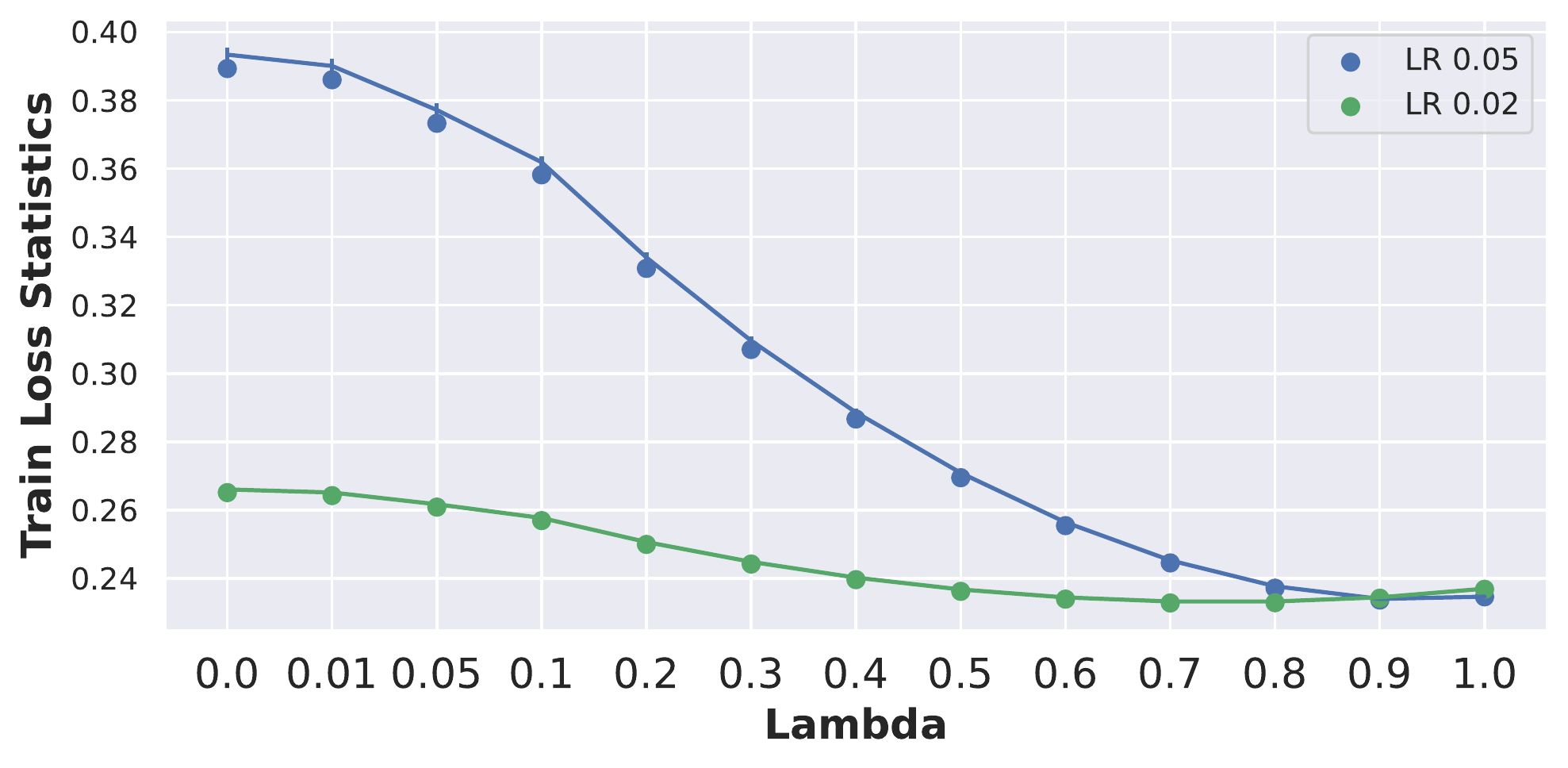}
        \caption{The impact of a better teacher on the learning dynamics of self-distillation. The same teacher was used in both setups.}
        \label{fig:mnist_convex_better_teacher}
    \end{subfigure}
    \caption{Ablation study on the training loss of self-distillation and SGD, when taking into account the training hyperparameters (learning rate) and quality of the teacher.}
    \label{fig:mnist_kd_ablation}
\end{figure*}

\subsection{The impact of knowledge distillation on compression}

Now we turn our attention towards validating the theoretical results developed in the context of knowledge distillation for compressed models, presented in Section~\ref{sec:theory-kd}. We consider again the convex MNIST setting, as described in the previous section, and we perform self-distillation from the better trained teacher. We prune the weights at initialization, using a random mask at a chosen sparsity level, and we apply this fixed mask after each parameter update. The results presented in Figure~\ref{fig:kd_mnist_pruning} show that self-distillation can indeed reduce the train loss, compared to SGD, even for compressed updates. Moreover, we observe that with increased sparsity the impact of self-distillation is less pronounced, as also suggested by the theory. 

\begin{figure*}
    \centering
    \includegraphics[width=0.5\textwidth]{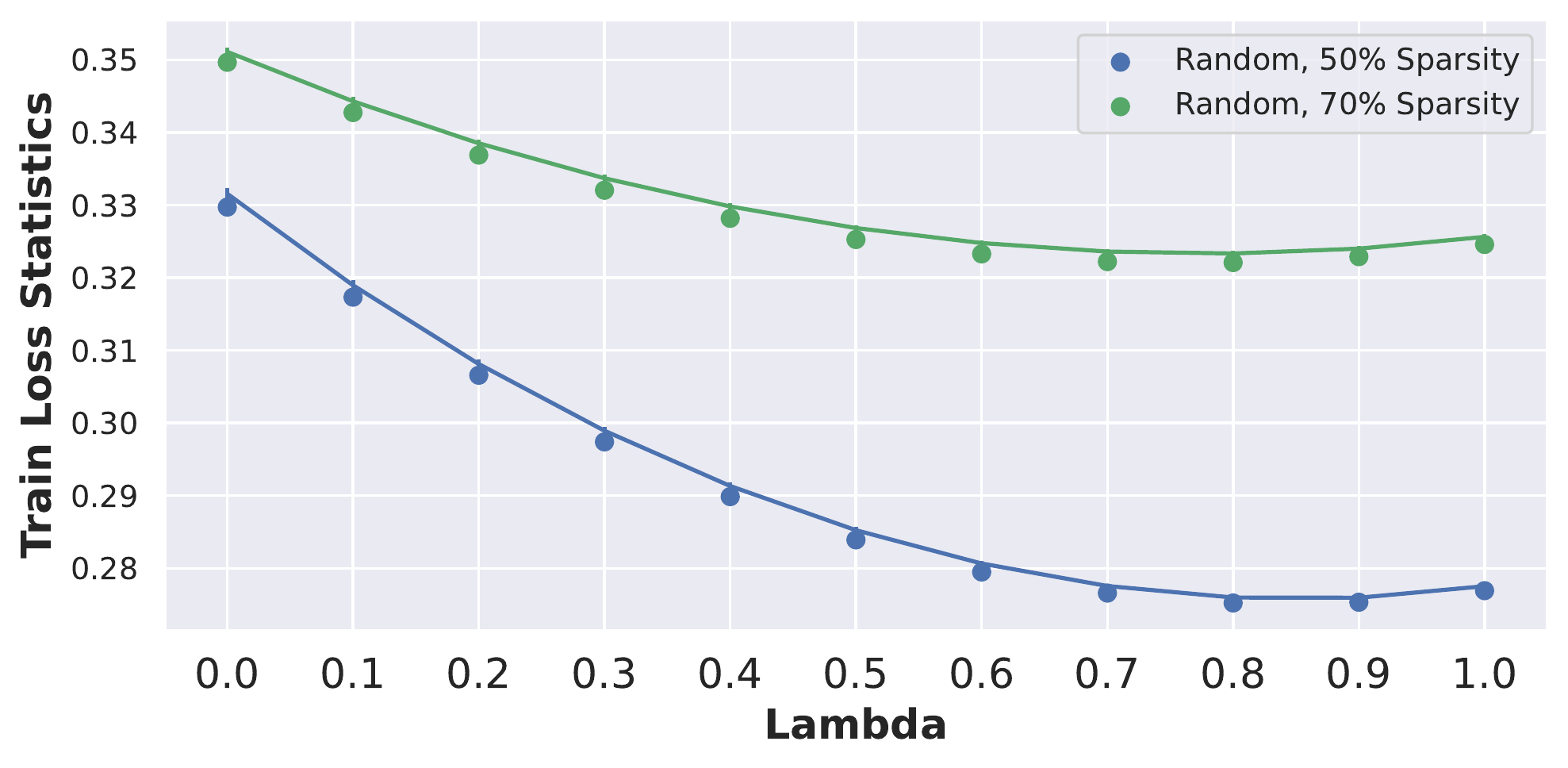}
    \caption{The impact of self-distillation on the train loss of compressed models. Here a random mask is computed at initialization, and kept fixed throughout training.}
    \label{fig:kd_mnist_pruning}
\end{figure*}

\begin{figure*}
    \centering
    \includegraphics[width=0.5\textwidth]{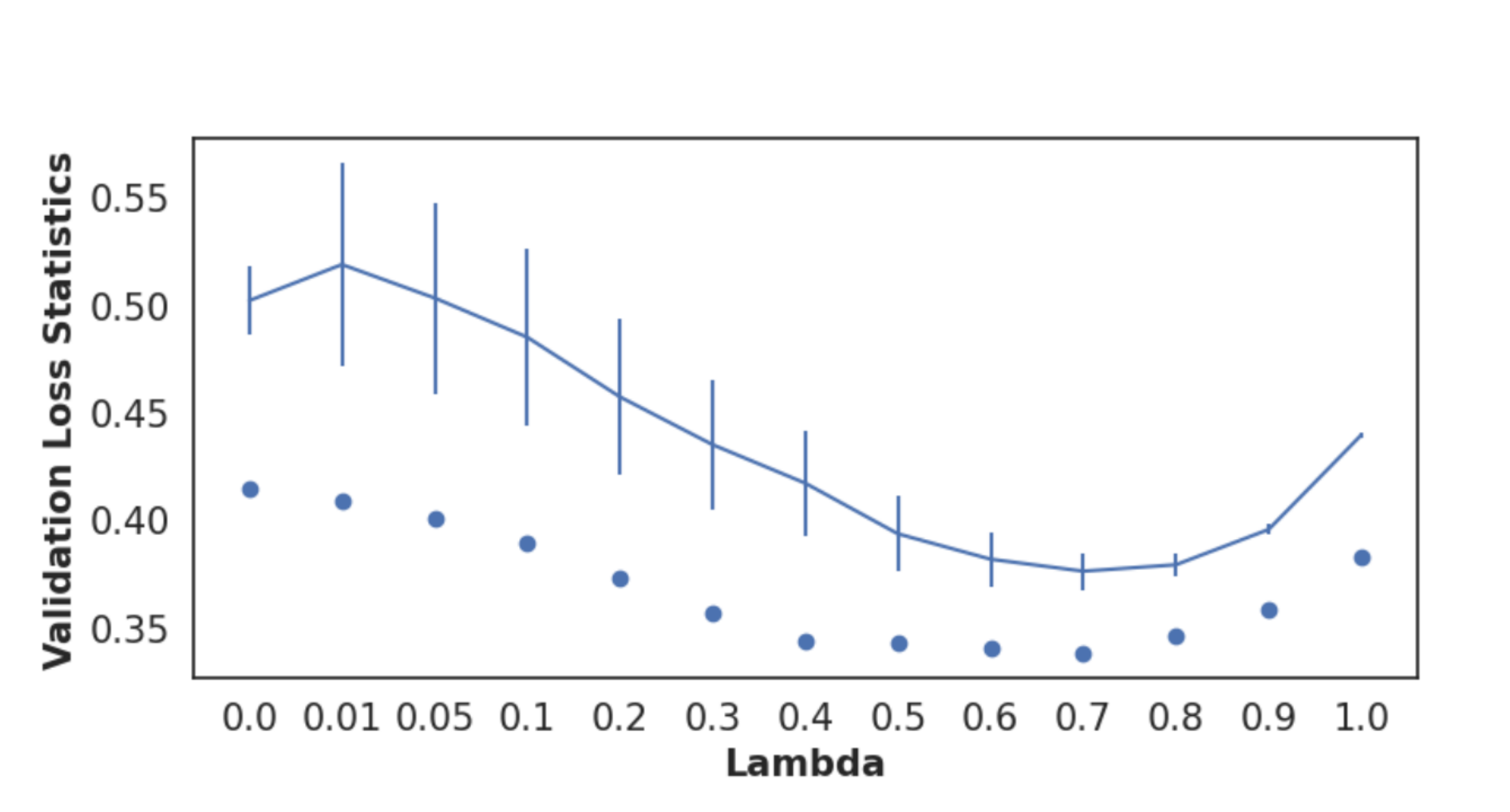}
    \caption{Validation loss statistics for the same setup as in Figure \ref{fig:mnist_convex}.}
\end{figure*}

\section{Discussion and Future Work}\label{sec:discussion}

Our work has provided a new interpretation of knowledge distillation, examining this mechanism for the first time from the point of view of optimization. Specifically, we have shown that knowledge distillation acts as a form of partial variance reduction, whose strength depends on the characteristics of the teacher model. This finding holds across several variants of distillation, such as self-distillation and distillation of compressed models, as well as across various families of objective functions.

Prior observations showed that significant capacity gap between the student and the teacher may in fact lead to poorer distillation performance \citep{Mirzadeh2020TA}. To reconcile the issue of large capacity gap our results, notice that, in our case ``better teacher'' means better parameter (i.e., weights and biases) values, evaluated in terms of training loss. In particular, in the case of self-distillation, covered in Sections 4 and 5, the teacher and student architectures are identical, and hence they have the same capacity. In our second regime, distillation for compressed models (Section 6), we actually consider the case when the student network is a subnetwork of the teacher; we consider a sparsification compression operator that selects $k$ parameters for the student out of $d$ parameters of the teacher. Then, clearly, the teacher has a larger capacity with a capacity ratio $\nicefrac{d}{k}\ge1$. However, our result in this direction (Theorem 4) does not allow the capacity ratio to be arbitrarily large. Indeed, the constraint $\omega = \cO(\mathcal{L}/\mu)$ on compression variance implies a constraint on capacity ratio since $\omega = \nicefrac{d}{k} - 1$ for the sparsification operator. Thus, our result holds when the teacher’s size is not significantly larger than the student’s size, which is in line with the prior observations on large capacity gap.

As we mentioned, our Proposition \ref{prop:dist-grad} does not hold precisely for arbitrary deep non-linear neural networks. However, we showed that this simple model \eqref{dist-grad} of distillation gradient approximates the true distillation gradient reasonably well both empirically (see Figure \ref{fig:mnist_net_cosine}) and analytically (see Appendix \ref{apx:nonlin-classif}). 
There is much more to investigate for the case of non-convex deep networks where exact tracking of teacher’s impact across multiple layers of non-linearities becomes harder. We see our results as a promising first step towards a more complete understanding of the effectiveness of distillation.
One interesting direction of future work would be to construct more complex models for distillation gradient and to investigate further connections with more complex variance-reduction methods, e.g.~\cite{cutkosky2019momentum}, which may yield even better-performing variants of KD. 

\section*{Acknowledgements}
MS has received funding from the European Union’s Horizon 2020 research and innovation programme under the Marie Skłodowska-Curie grant agreement No 101034413. AP and DA would like to acknowledge useful discussions with Razvan Pascanu at the inception of this project.

\bibliographystyle{plain}
\bibliography{references}

\begin{thebibliography}{10}

\bibitem{Alistarh2017QSGD}
Dan Alistarh, Demjan Grubic, Jerry Li, Ryota Tomioka, and Milan Vojnovic.
\newblock {QSGD}: Communication-efficient {SGD} via gradient quantization and
  encoding.
\newblock In {\em Advances in Neural Information Processing Systems 30}, pages
  1709--1720, 2017.

\bibitem{DBLP:journals/corr/BaC13}
Lei~Jimmy Ba and Rich Caruana.
\newblock Do deep nets really need to be deep?
\newblock {\em CoRR}, abs/1312.6184, 2013.

\bibitem{ModelCompression2006}
C.~Bucilu\v{a}, R.~Caruana, and A.~Niculescu-Mizil.
\newblock {Model compression}.
\newblock {\em Proceedings of the 12th ACM SIGKDD International Conference on
  Knowledge Discovery and Data Mining, KDD ’06, pages 535–541, New York,
  NY, USA}, 2006.

\bibitem{cutkosky2019momentum}
Ashok Cutkosky and Francesco Orabona.
\newblock {Momentum-based variance reduction in non-convex SGD}.
\newblock {\em Advances in Neural Information Processing Systems}, 32, 2019.

\bibitem{czarnecki2018mix}
Wojciech Czarnecki, Siddhant Jayakumar, Max Jaderberg, Leonard Hasenclever,
  Yee~Whye Teh, Nicolas Heess, Simon Osindero, and Razvan Pascanu.
\newblock Mix \& match agent curricula for reinforcement learning.
\newblock In {\em Proceedings of the 35th International Conference on Machine
  Learning}, 2018.

\bibitem{Dao2021KD}
Tri Dao, Govinda~M Kamath, Vasilis Syrgkanis, and Lester Mackey.
\newblock {Knowledge Distillation as Semiparametric Inference}.
\newblock {\em International Conference on Learning Representations (ICLR)},
  2021.

\bibitem{galashov2018information}
Alexandre Galashov, Siddhant Jayakumar, Leonard Hasenclever, Dhruva Tirumala,
  Jonathan Schwarz, Guillaume Desjardins, Wojtek~M. Czarnecki, Yee~Whye Teh,
  Razvan Pascanu, and Nicolas Heess.
\newblock Information asymmetry in {KL}-regularized {RL}.
\newblock In {\em International Conference on Learning Representations}, 2019.

\bibitem{Garrigos2023handbook}
Guillaume Garrigos and Robert~M Gower.
\newblock {Handbook of convergence theorems for (stochastic) gradient methods}.
\newblock {\em arXiv preprint arXiv:2301.11235}, 2023.

\bibitem{Gorbunov2020UT-SGD}
Eduard Gorbunov, Filip Hanzely, and Peter Richt\'{a}rik.
\newblock {A Unified Theory of SGD: Variance Reduction, Sampling, Quantization
  and Coordinate Descent}.
\newblock {\em 23rdInternational Conference on Artificial Intelligence and
  Statistics (AISTATS)}, 2020.

\bibitem{Gower2021JacSketch}
R.M. Gower, P.~Richt\'{a}rik, and F.~Bach.
\newblock {Stochastic quasi-gradient methods: variance reduction via Jacobian
  sketching}.
\newblock {\em Math. Program. 188}, pages 135--192, 2021.

\bibitem{Gower2019SGD}
Robert~Mansel Gower, Nicolas Loizou, Xun Qian, Alibek Sailanbayev, Egor
  Shulgin, and Peter Richt\'{a}rik.
\newblock {SGD: General Analysis and Improved Rates}.
\newblock {\em Proceedings of the 36th International Conference on Machine
  Learning}, 2019.

\bibitem{resnet}
Kaiming He, Xiangyu Zhang, Shaoqing Ren, and Jian Sun.
\newblock Deep residual learning for image recognition.
\newblock {\em Conference on Computer Vision and Pattern Recognition}, 2016.

\bibitem{Hinton2014KD}
G.~Hinton, O.~Vinyals, and J.~Dean.
\newblock {Distilling the knowledge in a neural network}.
\newblock {\em Deep Learning Workshop at NIPS}, 2014.

\bibitem{Iofinova2022TransferLearning}
Eugenia Iofinova, Alexandra Peste, Mark Kurtz, and Dan Alistarh.
\newblock {How Well Do Sparse Imagenet Models Transfer?}
\newblock {\em Conference on Computer Vision and Pattern Recognition}, 2022.

\bibitem{johnson2013accelerating}
Rie Johnson and Tong Zhang.
\newblock Accelerating stochastic gradient descent using predictive variance
  reduction.
\newblock {\em Advances in Neural Information Processing Systems}, 26, 2013.

\bibitem{Karimi2020PL}
Hamed Karimi, Julie Nutini, and Mark Schmidt.
\newblock {Linear Convergence of Gradient and Proximal-Gradient Methods Under
  the Polyak-Łojasiewicz Condition}.
\newblock {\em arXiv:1608.04636}, 2020.

\bibitem{Khaled2019GDCI}
Ahmed Khaled and Peter Richt\'{a}rik.
\newblock {Gradient descent with compressed iterates}.
\newblock {\em arXiv preprint arXiv:1909.04716}, 2019.

\bibitem{Khaled2023SGD}
Ahmed Khaled and Peter Richt\'{a}rik.
\newblock {Better Theory for SGD in the Nonconvex World}.
\newblock {\em Transactions on Machine Learning Research}, 2023.

\bibitem{Konecny2017S2GD}
Jakub Kon\v{e}cn\'{y} and Peter Richt\'{a}rik.
\newblock {Semi-Stochastic Gradient Descent Methods}.
\newblock {\em Frontiers in Applied Mathematics and Statistics 3:9}, 2017.

\bibitem{Kornblith2019TransferLearning}
Simon Kornblith, Jonathon Shlens, and Quoc~V. Le.
\newblock {Do Better ImageNet Models Transfer Better?}
\newblock {\em Conference on Computer Vision and Pattern Recognition}, 2019.

\bibitem{Kovalev2020LSVRG}
Dmitry Kovalev, Samuel Horváth, and Peter Richtárik.
\newblock {Don't jump through hoops and remove those loops: SVRG and Katyusha
  are better without the outer loop}.
\newblock {\em 31st International Conference on Learning Theory (ALT)}, 2020.

\bibitem{cifar10}
Alex Krizhevsky, Geoffrey Hinton, et~al.
\newblock Learning multiple layers of features from tiny images.
\newblock {\em {CiteSeer}}, 2009.

\bibitem{mnist}
Yann LeCun and Corinna Cortes.
\newblock {MNIST} handwritten digit database.
\newblock http://yann.lecun.com/exdb/mnist/, 2010.

\bibitem{Lei2017SCSG}
Lihua Lei, Cheng Ju, Jianbo Chen, and Michael~I. Jordan.
\newblock {Non-Convex Finite-Sum Optimization Via SCSG Methods}.
\newblock {\em 31st Conference on Neural Information Processing Systems
  (NIPS)}, 2017.

\bibitem{li2017learning}
Yuncheng Li, Jianchao Yang, Yale Song, Liangliang Cao, Jiebo Luo, and Li-Jia
  Li.
\newblock Learning from noisy labels with distillation.
\newblock In {\em 2017 IEEE International Conference on Computer Vision
  (ICCV)}, pages 1928--1936. IEEE, 2017.

\bibitem{lbfgs}
Dong~C Liu and Jorge Nocedal.
\newblock On the limited memory bfgs method for large scale optimization.
\newblock {\em Mathematical programming}, 45(1):503--528, 1989.

\bibitem{lopez2015unifying}
David Lopez-Paz, L{\'e}on Bottou, Bernhard Sch{\"o}lkopf, and Vladimir Vapnik.
\newblock Unifying distillation and privileged information.
\newblock {\em arXiv preprint arXiv:1511.03643}, 2015.

\bibitem{Malinovsky2023RR-SVRG}
Grigory Malinovsky, Alibek Sailanbayev, and Peter Richtárik.
\newblock {Random reshuffling with variance reduction: new analysis and better
  rates}.
\newblock {\em 39th Conference on Uncertainty in Artificial Intelligence
  (UAI)}, 2023.

\bibitem{Markov2023Quantized}
Ilia Markov, Adrian Vladu, Qi~Guo, and Dan Alistarh.
\newblock {Quantized Distributed Training of Large Models with Convergence
  Guarantees}.
\newblock {\em arXiv preprint arXiv:2302.02390}, 2023.

\bibitem{Mirzadeh2020TA}
Seyed~Iman Mirzadeh, Mehrdad Farajtabar, Ang Li, Nir Levine, Akihiro Matsukawa,
  and Hassan Ghasemzadeh.
\newblock {Improved knowledge distillation via teacher assistant}.
\newblock {\em Proceedings of the AAAI conference on artificial intelligence,
  volume 34, pages 5191–5198}, 2020.

\bibitem{mishra2017apprentice}
Asit Mishra and Debbie Marr.
\newblock Apprentice: Using knowledge distillation techniques to improve
  low-precision network accuracy.
\newblock {\em arXiv preprint arXiv:1711.05852}, 2017.

\bibitem{Mobahi2020SD}
Hossein Mobahi, Mehrdad Farajtabar, and Peter~L. Bartlett.
\newblock {Self-Distillation Amplifies Regularization in Hilbert Space}.
\newblock {\em 34th Conference on Neural Information Processing Systems
  (NeurIPS)}, 2020.

\bibitem{nair2010rectified}
Vinod Nair and Geoffrey~E Hinton.
\newblock Rectified linear units improve restricted boltzmann machines.
\newblock In {\em International Conference on Machine Learning}, 2010.

\bibitem{nesterov2013introductory}
Yurii Nesterov.
\newblock {\em Introductory lectures on convex optimization: A basic course},
  volume~87.
\newblock Springer Science \& Business Media, 2013.

\bibitem{parisotto2015actor}
Emilio Parisotto, Jimmy~Lei Ba, and Ruslan Salakhutdinov.
\newblock Actor-mimic: Deep multitask and transfer reinforcement learning.
\newblock In {\em International Conference on Learning Representations}, 2016.

\bibitem{Philippenko2021preserved}
Constantin Philippenko and Aymeric Dieuleveut.
\newblock {Preserved central model for faster bidirectional compression in
  distributed settings}.
\newblock {\em 35th Advances in Neural Information Processing Systems}, 2021.

\bibitem{phuong19}
Mary Phuong and Christoph Lampert.
\newblock Towards understanding knowledge distillation.
\newblock In Kamalika Chaudhuri and Ruslan Salakhutdinov, editors, {\em
  Proceedings of the 36th International Conference on Machine Learning},
  volume~97 of {\em Proceedings of Machine Learning Research}, pages
  5142--5151, Long Beach, California, USA, 09--15 Jun 2019. PMLR.

\bibitem{polino2018model}
Antonio Polino, Razvan Pascanu, and Dan Alistarh.
\newblock Model compression via distillation and quantization.
\newblock {\em International Conference on Learning Representations (ICLR)},
  2018.

\bibitem{Polyak1963GM}
B.~T. Polyak.
\newblock {Gradient methods for minimizing functionals (in Russian)}.
\newblock {\em Zh. Vychisl. Mat. Mat. Fiz., pages 643–653}, 1963.

\bibitem{Reddi2016NonconvexSVRG}
Sashank~J. Reddi, Ahmed Hefny, Suvrit Sra, Barnabas Poczos, and Alex Smola.
\newblock {Stochastic Variance Reduction for Nonconvex Optimization}.
\newblock {\em Proceedings of the 33rd International Conference on Machine
  Learning}, 2016.

\bibitem{imagenet}
Olga Russakovsky, Jia Deng, Hao Su, Jonathan Krause, Sanjeev Satheesh, Sean Ma,
  Zhiheng Huang, Andrej Karpathy, Aditya Khosla, Michael Bernstein, et~al.
\newblock {ImageNet large scale visual recognition challenge}.
\newblock {\em IJCV}, 115(3):211--252, 2015.

\bibitem{rusu2015policy}
Andrei~A Rusu, Sergio~Gomez Colmenarejo, Caglar Gulcehre, Guillaume Desjardins,
  James Kirkpatrick, Razvan Pascanu, Volodymyr Mnih, Koray Kavukcuoglu, and
  Raia Hadsell.
\newblock Policy distillation.
\newblock In {\em International Conference on Learning Representations}, 2016.

\bibitem{Safaryan2021smoothness}
Mher Safaryan, Filip Hanzely, and Peter Richt\'{a}rik.
\newblock {Smoothness Matrices Beat Smoothness Constants: Better Communication
  Compression Techniques for Distributed Optimization}.
\newblock In {\em 35th Conference on Neural Information Processing Systems},
  2021.

\bibitem{Salman2020TransferLearning}
Hadi Salman, Andrew Ilyas, Logan Engstrom, Ashish Kapoor, and Aleksander Madry.
\newblock {Do Adversarially Robust ImageNet Models Transfer Better?}
\newblock {\em Advances in Neural Information Processing Systems}, 2020.

\bibitem{schmitt2018kickstarting}
Simon Schmitt, Jonathan~J Hudson, Augustin Zidek, Simon Osindero, Carl Doersch,
  Wojciech~M Czarnecki, Joel~Z Leibo, Heinrich Kuttler, Andrew Zisserman, Karen
  Simonyan, et~al.
\newblock Kickstarting deep reinforcement learning.
\newblock {\em arXiv preprint arXiv:1803.03835}, 2018.

\bibitem{Stich2018sparse}
Sebastian~U Stich, Jean-Baptiste Cordonnier, and Martin Jaggi.
\newblock Sparsified {SGD} with memory.
\newblock In S.~Bengio, H.~Wallach, H.~Larochelle, K.~Grauman, N.~Cesa-Bianchi,
  and R.~Garnett, editors, {\em Advances in Neural Information Processing
  Systems 31}, pages 4452--4463. Curran Associates, Inc., 2018.

\bibitem{Sultan2023KDLS}
Md~Arafat Sultan.
\newblock {Knowledge Distillation $\approx$ Label Smoothing: Fact or Fallacy?}
\newblock {\em arXiv preprint arXiv:2301.12609}, 2023.

\bibitem{sun2020mobilebert}
Zhiqing Sun, Hongkun Yu, Xiaodan Song, Renjie Liu, Yiming Yang, and Denny Zhou.
\newblock Mobilebert: a compact task-agnostic bert for resource-limited
  devices.
\newblock In {\em Proceedings of the 58th Annual Meeting of the Association for
  Computational Linguistics}, pages 2158--2170, 2020.

\bibitem{tann2017hardware}
Hokchhay Tann, Soheil Hashemi, R~Iris Bahar, and Sherief Reda.
\newblock Hardware-software codesign of accurate, multiplier-free deep neural
  networks.
\newblock In {\em Design Automation Conference (DAC), 2017 54th ACM/EDAC/IEEE},
  pages 1--6. IEEE, 2017.

\bibitem{teh2017distral}
Yee Teh, Victor Bapst, Wojciech~M Czarnecki, John Quan, James Kirkpatrick, Raia
  Hadsell, Nicolas Heess, and Razvan Pascanu.
\newblock Distral: Robust multitask reinforcement learning.
\newblock In {\em Advances in Neural Information Processing Systems}, pages
  4496--4506, 2017.

\bibitem{vapnik2015learning}
Vladimir Vapnik and Rauf Izmailov.
\newblock Learning using privileged information: similarity control and
  knowledge transfer.
\newblock {\em Journal of machine learning research}, 16(20232049):55, 2015.

\bibitem{Vogels2019PowerSGD}
Thijs Vogels, Sai~Praneeth Karimireddy, and Martin Jaggi.
\newblock {PowerSGD: Practical Low-Rank Gradient Compression for Distributed
  Optimization}.
\newblock {\em 33th Advances in Neural Information Processing Systems}, 2019.

\bibitem{Wang2020MiniLMDS}
Wenhui Wang, Furu Wei, Li~Dong, Hangbo Bao, Nan Yang, and Ming Zhou.
\newblock Minilm: Deep self-attention distillation for task-agnostic
  compression of pre-trained transformers.
\newblock {\em Advances in Neural Information Processing Systems},
  33:5776--5788, 2020.

\bibitem{Wangni2018sparsification}
Jianqiao Wangni, Jialei Wang, Ji~Liu, and Tong Zhang.
\newblock Gradient sparsification for communication-efficient distributed
  optimization.
\newblock In {\em Advances in Neural Information Processing Systems}, pages
  1306--1316, 2018.

\bibitem{WXYWWCL}
Wei Wen, Cong Xu, Feng Yan, Chunpeng Wu, Yandan Wang, Yiran Chen, and Hai Li.
\newblock Terngrad: Ternary gradients to reduce communication in distributed
  deep learning.
\newblock In {\em Advances in Neural Information Processing Systems}, page
  1509–1519, 2017.

\bibitem{Xiao2014ProxSVRG}
Lin Xiao and Tong Zhang.
\newblock {A Proximal Stochastic Gradient Method with Progressive Variance
  Reduction}.
\newblock {\em SIAM Journal on Optimization}, 2014.

\bibitem{Yuan2020KDLS}
Li~Yuan, Francis~EH Tay, Guilin Li, Tao Wang, and Jiashi Feng.
\newblock {Revisiting Knowledge Distillation via Label Smoothing
  Regularization}.
\newblock {\em Conference on Computer Vision and Pattern Recognition}, 2020.

\bibitem{zhang2019your}
Linfeng Zhang, Jiebo Song, Anni Gao, Jingwei Chen, Chenglong Bao, and Kaisheng
  Ma.
\newblock Be your own teacher: Improve the performance of convolutional neural
  networks via self distillation.
\newblock In {\em Proceedings of the IEEE/CVF International Conference on
  Computer Vision}, pages 3713--3722, 2019.

\bibitem{Li2022MKD}
Xinshao~Wang Ziyun~Li, Di~Hu, Neil~M. Robertson, David~A. Clifton, Christoph
  Meinel, and Haojin Yang.
\newblock {Not All Knowledge Is Created Equal: Mutual Distillation of Confident
  Knowledge}.
\newblock {\em NeurIPS 2022 Workshop(Trustworthy and Socially Responsible
  Machine Learning)}, 2022.

\end{thebibliography}

\newpage

\appendix

\part*{Appendix}

{
\tableofcontents
}

\newpage
\section{Basic Facts and Inequalities}

To facilitate the reading of technical part of the work, here we present several standard inequalities and basic facts that we are going to use in the proofs.

$\bullet$ Usually we need to bound the sum of two error terms by individual errors using the following simple bound 
\begin{equation}\label{ineq:simple}
    \norm{a+b}^2 \leq 2\norm{a}^2 + 2\norm{b}^2.
\end{equation}
A direct generalization of this inequality for arbitrary number of summation terms is the following one:
\begin{equation}\label{ineq:simple+}
    \norm{\sum_{i=1}^n a_i}^2 \leq n\sum_{i=1}^n \norm{a_i}^2.
\end{equation}
This inequality can be seen as a special case of Jensen's inequality for convex functions $h\colon\R^d\to\R$,
\begin{equation*}\label{ineq:jensen}
h\(\sum_{i=1}^n \alpha_i x_i \) \le \sum_{i=1}^n \alpha_i h(x_i),
\end{equation*}
where $x_i\in\R^d$ are any vectors and $\alpha_i\in[0,1]$ with $\sum_{i=1}^n \alpha_i = 1$. Then, \eqref{ineq:simple} follows from Jensen's inequality when $h(x) = \|x\|^2$ and $\alpha_1=\cdots=\alpha_n=\frac{1}{n}$. A more general version of the Jensen's inequality, from the perspective of probability theory, is
\begin{equation}\label{ineq:jensen-prob}
h(\E z) \le \E h(z)
\end{equation}
for any convex function $h$ and random vector $z\in\R^d$.

Another extension of \eqref{ineq:simple}, which we actually apply in that form, is Peter-Paul inequality given by
\begin{equation}\label{ineq:peter-paul}
\<a, b\> \le \frac{1}{2s}\norm{a}^2 + \frac{s}{2}\norm{b}^2,
\end{equation}
for any positive $s>0$. Then, \eqref{ineq:simple} is the special case of \eqref{ineq:peter-paul} with $s=1$.

$\bullet$ Typically, a function $f$ is called $L$-smooth if its gradient is Lipschitz continuous with Lipschitz constant $L\ge0$, namely
\begin{equation}\label{ineq:L-grad}
\norm{\nabla f(x) - \nabla f(y)} \leq L \norm{x-y},
\end{equation}

In particular, smoothness inequality \eqref{ineq:L-grad} implies the following quadratic upper bound \citep{nesterov2013introductory}:
\begin{equation}\label{ineq:L-smooth}
    f(y) \leq f(x) + \langle \nabla f(x), y-x \rangle + \frac{L}{2} \norm{y-x}^2, 
\end{equation} 
for all points $x,y\in\R^d$. If the function $f$ is additionally convex, then the following lower bound holds too:
\begin{equation}\label{ineq:L-smooth+convex}
    f(y) \ge f(x) + \langle \nabla f(x), y-x \rangle + \frac{1}{2L} \norm{\nabla f(y) - \nabla f(x)}^2. 
\end{equation}

$\bullet$ As we mentioned in the main part of the paper, function $f$ is called $\mu$-strongly convex if the following inequality holds
\begin{equation}
    \label{ineq:mu-convex}
    f(y) \geq f(x) + \langle \nabla f(x), y-x \rangle + \frac{\mu}{2}\norm{y-x}^2,
\end{equation}
for all points $x,y\in\R^d$. Recall that strong quasi-convexity assumption \eqref{asm:str-cvx} is the special case of \eqref{ineq:mu-convex} when $y=x^*$. In other words, strong convexity \eqref{ineq:mu-convex} implies strong quasi-convexity \eqref{asm:str-cvx}. Continuing this chain of conditions, strong quasi-convexity \eqref{asm:str-cvx} implies PL condition \eqref{ineq:PL}, which, in turn, implies the so-called {\em Quadratic Growth} condition given below
\begin{equation}\label{ineq:quad-growth}
f(x) -f^* \ge \frac{\mu}{8}\|x-x^*\|^2,
\end{equation}
for all $x\in\R^d$. Derivations of these implications and relationship with other conditions can be found in \citep{Karimi2020PL}.

$\bullet$ As many analyses with linear or exponential convergence speed, our analysis also uses a very standard transformation from a single-step recurrence relation to convergence inequality. Specifically, assume we have the following recursion for $e^t,\, t=0,1,2,\dots$:

$$
e^{t+1} \le (1-\eta)e^t + N,
$$

with constants $\eta\in(0,1]$ and $N\ge0$. Then, repeated application of the above recursion gives

\begin{multline}\label{ineq:lin-rate}
e^t
\le (1-\eta)^t e^{0} + (1-\eta)^{t-1} N + (1-\eta)^{t-2} N + \cdots + (1-\eta)^{0} N \\
\le (1-\eta)^t e^{0} + N \sum_{j=0}^{\infty} (1-\eta)^j 
=   (1-\eta)^t e^{0} + \frac{N}{\eta}.
\end{multline}

$\bullet$ If $z\in\R^d$ is a random vector and $x\in\R^d$ is fixed (or has randomness independent of $z$), then the following decomposition holds:
\begin{equation}\label{eq:mean0norm2}
\E\[\norm{x+z}^2\] = \norm{x}^2 + \E\[\norm{z}^2\]
\end{equation}

\section{Proofs for Section \ref{sec:sd-in-ml}}\label{apx:sd-in-ml}

For the sake of presentation, we first consider binary logistic regression problem as a special case of multi-class classification.

\subsection{Binary Logistic Regression}

In this case we have $d$-dimensional input vectors $a_n\in\R^d$ with their binary true labels $b_n\in\mathcal{B} = [0,1]$,  predictor $\phi_{x}(a) = \sigma(x^{\top}\overbar{a})\in(0,1)$ with parameters $x\in{\cal P}=\R^{d+1}$ and lifted input vector $\overbar{a} = [a\;1]^{\top}\in\R^{d+1}$ (to avoid additional notation for the bias terms), where $\sigma(t) = \frac{1}{1+e^{-t}},\, t\in\R$ is the sigmoid function. Besides, the loss is given by the cross entropy loss below
$$
\ell(p,q)
= H\(\begin{bsmallmatrix}q\\1-q\end{bsmallmatrix}, \begin{bsmallmatrix}p\\1-p\end{bsmallmatrix}\)
= -q\log p - (1-q)\log(1-p), \quad p,q\in[0,1].
$$

Based on \eqref{erm-kd-1} and \eqref{erm-kd-2}, we have
\begin{eqnarray*}
f_n(x\mid\theta,\lambda)
&=& (1-\lambda) f_n(x) + \lambda f_n(x\mid\theta) \\
&=& (1-\lambda) \ell(\phi_x(a_n), b_n) + \lambda \ell(\phi_x(a_n), \phi_\theta(a_n)) \\
&=& \ell(\phi_x(a_n), (1-\lambda) b_n + \lambda \phi_\theta(a_n)) \\
&=& \ell(\sigma(x^{\top}\overbar{a}_n), (1-\lambda) b_n + \lambda \sigma(\theta^{\top}\overbar{a}_n))
= \ell(\sigma(x^{\top}\overbar{a}_n), s_n) \\
&=& s_n\log\(1 + e^{-x^{\top}\overbar{a}_n}\) + (1-s_n)\log\(1 + e^{x^{\top}\overbar{a}_n}\),
\end{eqnarray*}
where $s_n = (1-\lambda) b_n + \lambda \sigma(\theta^{\top}\overbar{a}_n)$ are the soft labels. Notice that $f_n(x\mid\theta,\lambda=0) = f_n(x)$. Next, we derive an expression for the stochastic gradient for the above objective, namely the gradient $\nabla_x f_n(x\mid\theta,\lambda)$ of the loss associated with $n^{th}$ data point $(a_n,b_n)$.
\begin{eqnarray*}
\nabla_x \[ s_n\log\(1 + e^{-x^{\top}\overbar{a}_n}\) \]
&=& - s_n \; \frac{e^{-x^{\top}\overbar{a}_n}}{1 + e^{-x^{\top}\overbar{a}_n}} \; \bar{a}_n
= -s_n \sigma(-x^{\top}\overbar{a}_n) \; \bar{a}_n, \\
\nabla_x \[ (1-s_n)\log\(1 + e^{x^{\top}\overbar{a}_n}\) \]
&=&  (1 - s_n) \; \frac{e^{x^{\top}\overbar{a}_n}}{1 + e^{x^{\top}\overbar{a}_n}} \; \bar{a}_n
= (1 - s_n) \sigma(x^{\top}\overbar{a}_n) \; \bar{a}_n.
\end{eqnarray*}

Hence, using the identity $\sigma(t) + \sigma(-t) = 1$ for the sigmoid function, we get
\begin{eqnarray*}
\nabla_x f_n(x\mid\theta,\lambda)
&=& \[ -s_n (1 - \sigma(x^{\top}\overbar{a}_n)) + (1 - s_n) \sigma(x^{\top}\overbar{a}_n) \] \bar{a}_n \\
&=& \[\sigma(x^{\top}\overbar{a}_n) - s_n\] \bar{a}_n \\
&=& \[\sigma(x^{\top}\overbar{a}_n) - (1-\lambda) b_n - \lambda \sigma(\theta^{\top}\overbar{a}_n)\] \bar{a}_n \\
&=& \(\sigma(x^{\top}\overbar{a}_n) - b_n\)\bar{a}_n - \lambda\(\sigma(\theta^{\top}\overbar{a}_n) - b_n\)\bar{a}_n
= \nabla f_n(x) - \lambda\nabla f_n(\theta).
\end{eqnarray*}

Thus, the distillation gradient for binary logistic regression tasks the same form as \eqref{dist-grad}.

\subsection{Multi-class Classification with Soft-max}\label{apx:lin-classif}

Now we extend the above steps for multi-class problem. Here we consider a $K$-classification model with one hidden layer that has soft-max as the last layer, i.e., the forward pass has the following steps:
$$
a_n \to X^\top a_n \to \phi_X(a_n) \eqdef \sigma(X^\top a_n)\in\R^K,
$$
where $X = [x_1\; x_2\; \dots\; x_K] \in\R^{d\times K}$ are the student's model parameters, $a_n\in\mathcal{A} = \R^d$ is the input data and $\sigma$ is the soft-max function (as a generalization of sigmoid function). Then we simplify the loss
\begin{eqnarray*}
f_n(X\mid\Theta,\lambda)
&=& (1-\lambda) f_n(X) + \lambda f_n(X\mid\Theta) \\
&=& (1-\lambda) \ell(\phi_X(a_n), b_n) + \lambda \ell(\phi_X(a_n), \phi_\Theta(a_n)) \\
&=& \ell(\phi_X(a_n), (1-\lambda) b_n + \lambda \phi_\Theta(a_n)) \\
&=& \ell(\sigma(X^{\top}a_n), s_n) \\
&=& -\sum_{k=1}^K s_{n,k} \log \sigma(X^\top a_n)_k
= -\sum_{k=1}^K s_{n,k} \log \frac{e^{x_k^{\top}a_n}}{\sum_{j=1}^K e^{x_j^{\top}a_n}} \\
&=& \sum_{k=1}^K s_{n,k} \( \log\sum_{j=1}^K e^{x_j^{\top}a_n} - \log e^{x_k^{\top}a_n} \)
= \sum_{k=1}^K s_{n,k} \log\sum_{j=1}^K e^{x_j^{\top}a_n} - \sum_{k=1}^K s_{n,k} \log e^{x_k^{\top}a_n} \\\
&=& \log\sum_{k=1}^K e^{x_k^{\top}a_n} - \sum_{k=1}^K s_{n,k} \log e^{x_k^{\top}a_n}
= \log\sum_{k=1}^K e^{x_k^{\top}a_n} - \sum_{k=1}^K s_{n,k} x_k^{\top}a_n,
\end{eqnarray*}
where $s_n = (1-\lambda)b_n + \lambda\phi_\Theta(a_n)\in\R^K$ are the soft labels. Next, we derive an expression for the stochastic gradient for the above objective, namely the gradient $\nabla_{x_k} f_n(X\mid\Theta,\lambda)$ of the loss associated with $n^{th}$ data point $(a_n,b_n)$.
\begin{eqnarray*}
\nabla_{x_k} f_n(X\mid\Theta,\lambda)
&=& \nabla_{x_k} \[ \log\sum_{k=1}^K e^{x_k^{\top}a_n} - \sum_{k=1}^K s_{n,k} x_k^{\top}a_n \] 
= \frac{e^{x_k^{\top}a_n}}{\sum_{i=1}^K e^{x_i^{\top}a_n}} a_n - s_{n,k}a_n \\
&=& \( \sigma(X^\top a_n) - s_n \)_k a_n 
= \(\sigma(X^\top a_n) - b_n\)_k a_n - \lambda\(\sigma(\Theta^\top a_n) - b_n\) a_n \\
&=& \nabla_{x_k} f_n(X) - \lambda\nabla_{\theta_k} f_n(\Theta).
\end{eqnarray*}

Again, we get the same expression \eqref{dist-grad} for the distillation gradient
$$
\nabla_{X} f_n(X\mid\Theta,\lambda) = \nabla_{X} f_n(X) - \lambda\nabla_{\Theta} f_n(\Theta).
$$

\subsection{Generic Non-linear Classification}\label{apx:nonlin-classif}

Finally, consider arbitrary classification model that ends with linear layer and soft-max as the last layer, i.e., the forward pass has the following steps
$$
a_n \to \psi_n(x) \to \phi_x(a_n) \eqdef \sigma(\psi_n(x)) \in\R^K,
$$
where $a_n\in\mathcal{A} = \R^d$ is the input data and $\psi_n(x)\in\R^K$ are the logits with respect to the model parameters $x$. Denote $\varphi_n(z) \eqdef \ell(\sigma(z), b_n)$ the loss associated with logits $z$ and the true label $b_n$. In words, $\psi_n$ gives the logits from the input data, while $\varphi_n$ gives the loss from given logits. Then, clearly we have the following representation for the loss function $f_n(x) = \varphi_n(\psi_n(x))$. Next, let us simplify the distillation loss as
\begin{eqnarray*}
f_n(x\mid\theta,\lambda)
&=& (1-\lambda) f_n(x) + \lambda f_n(x\mid\theta) \\
&=& (1-\lambda) \ell(\phi_x(a_n), b_n) + \lambda \ell(\phi_x(a_n), \phi_\theta(a_n)) \\
&=& \ell(\phi_x(a_n), (1-\lambda) b_n + \lambda \phi_\theta(a_n)) \\
&=& \ell(\sigma(\psi_n(x), (1-\lambda) b_n + \lambda \sigma(\psi_n(\theta))) \\
&=& \ell(\sigma(\psi_n(x), s_n) \\
&=& -\sum_{k=1}^K s_{n,k} \log \sigma(\psi_n(x))_k
= -\sum_{k=1}^K s_{n,k} \log \frac{e^{\psi_{n,k}(x)}}{\sum_{j=1}^K e^{\psi_{n,j}(x)}} \\
&=& \sum_{k=1}^K s_{n,k} \( \log\sum_{j=1}^K e^{\psi_{n,j}(x)} - \psi_{n,k}(x) \) \\
&=& \sum_{k=1}^K s_{n,k} \log\sum_{j=1}^K e^{\psi_{n,j}(x)} - \sum_{k=1}^K s_{n,k} \psi_{n,k}(x) \\\
&=& \log\sum_{k=1}^K e^{\psi_{n,k}(x)} - \sum_{k=1}^K s_{n,k} \psi_{n,k}(x),
\end{eqnarray*}
where $s_n = (1-\lambda)b_n + \lambda\phi_\theta(a_n)$. Now we need to differentiate obtained expression and derive an expression for the stochastic gradient for the above objective, namely the gradient $\nabla_x f_n(x\mid\theta,\lambda)$ of the loss associated with $n^{th}$ data point $(a_n,b_n)$. Applying the gradient operator, we get
\begin{eqnarray*}
\nabla_{x} f_n(x\mid\theta,\lambda)
&=& \nabla_x \[ \log\sum_{k=1}^K e^{\psi_{n,k}(x)} - \sum_{k=1}^K s_{n,k} \psi_{n,k}(x) \] \\
&=& \sum_{k=1}^K \frac{e^{\psi_{n,k}(x)}}{\sum_{j=1}^K e^{\psi_{n,j}(x)}} \nabla_x \psi_{n,k}(x) - \sum_{k=1}^K s_{n,k}\nabla_x\psi_{n,k}(x) \\
&=& \sum_{k=1}^K \(\sigma(\psi_n(x)) - s_n\)_k \nabla_x \psi_n(x) \\
&=& J\psi_n(x) \(\sigma(\psi_n(x)) - s_n\),
\end{eqnarray*}
where
$$
J\psi_n(x) \eqdef \frac{\partial \psi_n(x)}{\partial x} = \[\nabla \psi_{n,1}(x) \; \nabla \psi_{n,2}(x) \dots \nabla \psi_{n,K}(x)\]\in\R^{d\times K}
$$
is the Jacobian of vector-valued function $\psi_n\colon\R^d\to\R^K$. From the derivation so far we imply that

\begin{eqnarray*}
\sigma(\psi_n(x)) - s_n
= \(\sigma(\psi_n(x)) - b_n\) - \lambda\(\sigma(\psi_n(\theta)) - b_n\)
= \nabla\varphi_n(\psi_n(x)) - \lambda\nabla\varphi_n(\psi_n(\theta)).
\end{eqnarray*}

Taking into account that $f_n(x) = \varphi_n(\psi_n(x))$, we show the following form for the distilled gradient

\begin{eqnarray*}
\nabla_{x} f_n(x\mid\theta,\lambda)
&=& J\psi_n(x) \( \nabla\varphi_n(\psi_n(x)) - \lambda\nabla\varphi_n(\psi_n(\theta)) \) \\
&=& \frac{\partial \psi_n(x)}{\partial x} \frac{\partial f_n(x)}{\partial \psi_n(x)}
- \lambda\frac{\partial \psi_n(x)}{\partial x} \frac{\partial f_n(\theta)}{\partial \psi_n(\theta)}.
\end{eqnarray*}

\newpage
\section{Proofs for Section \ref{sec:opt-sd}}\label{apx:opt-sd}

Before we proceed to the proofs of Theorems \ref{thm:sd-str-cnx} and \ref{thm:sd-pl}, we prove a key lemma that will be useful in both proofs. The lemma we are about to present covers {\em Part 2 (Optimal distillation weight)} and {\em Part 3 (Impact of the teacher)} of the proof overview discussed in the main content.

\subsection{Key lemma}

To simplify the expressions in our proofs, let us introduce some notation describing stochastic gradients. Denote the signal-to-noise ratio with respect to parameters $\theta$ by
\begin{equation}\label{signal-noise-ratio}
\beta(\theta) \eqdef \frac{\|\nabla f(\theta)\|^2}{\E\[\|\nabla f_{\xi}(\theta)\|^2\]} \in [0,1],
\end{equation}
and correlation coefficient between stochastic gradients $\nabla f_{\xi}(x)$ and $\nabla f_{\xi}(y)$ by
\begin{equation}\label{correlation-coeff}
\rho(x, y) \eqdef \frac{\cov(x, y)}{\var(x)\var(y)} \in [-1,1],
\end{equation}
where
\begin{equation*}
\cov(x, y) \eqdef \E\[\<\nabla f_{\xi}(x) - \nabla f(x), \nabla f_{\xi}(y) - \nabla f(y)\>\],
\qquad
\var(x) \eqdef \sqrt{\cov(x, x)},
\end{equation*}
are the covariance and variance respectively.

\begin{lemma}\label{lem:key}
Let $N(\lambda) = \lambda^2\|\nabla f(\theta)\|^2 + c\gamma\E\[\|\nabla f_{\xi}(x^*) - \lambda\nabla f_{\xi}(\theta)\|^2\]$ for some constant $c\ge0$. Then, the optimal $\lambda$ that minimizes $N(\lambda)$ is given by
\begin{equation}\label{lambda-star}
\lambda_*
= \frac{\E\[\<\nabla f_{\xi}(x^*), \nabla f_{\xi}(\theta)\>\]}{\E\[\|\nabla f_{\xi}(\theta)\|^2\] + \frac{1}{c\gamma}\|\nabla f(\theta)\|^2}.
\end{equation}
Moreover,
$$
\frac{N(\lambda_*)}{N(0)}
= 1 - \rho^2(x^*, \theta) \frac{1 - \beta(\theta)}{1 + \frac{1}{c\gamma}\beta(\theta)}
\le \min\(1, \cO\(\frac{1}{\gamma}(f(\theta) - f^*)\)\).
$$
\end{lemma}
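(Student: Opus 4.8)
The plan is to prove the lemma in three stages corresponding to the three parts of the proof overview, but compressed since the statement is purely algebraic once the quadratic structure of $N(\lambda)$ is exposed.

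\textbf{Stage 1: Minimizing the quadratic.} First I would expand $N(\lambda)$ as a quadratic in $\lambda$. Writing $\E[\|\nabla f_\xi(x^*) - \lambda\nabla f_\xi(\theta)\|^2] = \sigma_*^2 - 2\lambda\,\E[\langle\nabla f_\xi(x^*),\nabla f_\xi(\theta)\rangle] + \lambda^2\,\E[\|\nabla f_\xi(\theta)\|^2]$, we get
\begin{equation*}
N(\lambda) = \lambda^2\Bigl(\|\nabla f(\theta)\|^2 + c\gamma\,\E[\|\nabla f_\xi(\theta)\|^2]\Bigr) - 2c\gamma\lambda\,\E[\langle\nabla f_\xi(x^*),\nabla f_\xi(\theta)\rangle] + c\gamma\sigma_*^2.
\end{equation*}
The coefficient of $\lambda^2$ is strictly positive (assuming $\nabla f(\theta)\neq 0$; the degenerate case $\nabla f(\theta)=0$ makes $\theta$ an exact optimum and both sides are handled trivially), so this is a strictly convex parabola. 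Setting the derivative to zero gives
\begin{equation*}
\lambda_* = \frac{c\gamma\,\E[\langle\nabla f_\xi(x^*),\nabla f_\xi(\theta)\rangle]}{\|\nabla f(\theta)\|^2 + c\gamma\,\E[\|\nabla f_\xi(\theta)\|^2]} = \frac{\E[\langle\nabla f_\xi(x^*),\nabla f_\xi(\theta)\rangle]}{\E[\|\nabla f_\xi(\theta)\|^2] + \tfrac{1}{c\gamma}\|\nabla f(\theta)\|^2},
\end{equation*}
which is exactly \eqref{lambda-star}.

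\textbf{Stage 2: The ratio identity.} Using the standard formula for the minimum of a parabola $a\lambda^2 - 2b\lambda + e$, namely $e - b^2/a$, I would compute $N(\lambda_*) = c\gamma\sigma_*^2 - \dfrac{c^2\gamma^2\,\E[\langle\nabla f_\xi(x^*),\nabla f_\xi(\theta)\rangle]^2}{\|\nabla f(\theta)\|^2 + c\gamma\,\E[\|\nabla f_\xi(\theta)\|^2]}$. Since $N(0) = c\gamma\sigma_*^2$, dividing gives
\begin{equation*}
\frac{N(\lambda_*)}{N(0)} = 1 - \frac{c\gamma\,\E[\langle\nabla f_\xi(x^*),\nabla f_\xi(\theta)\rangle]^2}{\sigma_*^2\bigl(\|\nabla f(\theta)\|^2 + c\gamma\,\E[\|\nabla f_\xi(\theta)\|^2]\bigr)}.
\end{equation*}
Now the key observation is that $\E[\langle\nabla f_\xi(x^*),\nabla f_\xi(\theta)\rangle] = \cov(x^*,\theta) + \langle\nabla f(x^*),\nabla f(\theta)\rangle = \cov(x^*,\theta)$ because $\nabla f(x^*)=0$; and $\sigma_*^2 = \var(x^*)^2$ (again since $\nabla f(x^*)=0$). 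So the subtracted fraction equals $\dfrac{c\gamma\,\cov(x^*,\theta)^2}{\var(x^*)^2\bigl(\|\nabla f(\theta)\|^2 + c\gamma\,\E[\|\nabla f_\xi(\theta)\|^2]\bigr)}$. Dividing numerator and denominator by $\E[\|\nabla f_\xi(\theta)\|^2] = \var(\theta)^2 + \|\nabla f(\theta)\|^2$ (note $\var(\theta)^2 = \E[\|\nabla f_\xi(\theta)\|^2](1-\beta(\theta))$) and recognizing $\rho^2(x^*,\theta) = \cov(x^*,\theta)^2/(\var(x^*)^2\var(\theta)^2)$, everything collapses to
\begin{equation*}
\frac{N(\lambda_*)}{N(0)} = 1 - \rho^2(x^*,\theta)\,\frac{1-\beta(\theta)}{1+\tfrac{1}{c\gamma}\beta(\theta)}.
\end{equation*}
I should double-check the definition \eqref{correlation-coeff}: as written it reads $\cov/(\var\cdot\var)$ without squares on the variances, which looks like a typo for $\cov(x,y)/(\var(x)\var(y))$ with $\var(x)=\sqrt{\cov(x,x)}$ — I will use the corrected reading, which is the only one making $\rho\in[-1,1]$.

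\textbf{Stage 3: The upper bound.} Finally, to get $\le\min(1,\cO(\tfrac1\gamma(f(\theta)-f^*)))$: the bound by $1$ is immediate since $\rho^2\le1$, $\beta(\theta)\in[0,1]$ make the subtracted term nonnegative. For the other bound, I would drop $\rho^2\le1$ and $\beta(\theta)\ge 0$ in the denominator to get $N(\lambda_*)/N(0)\le 1 - (1-\beta(\theta))\cdot\frac{1}{1+1/(c\gamma)} $; rearranging, $1 - \frac{N(\lambda_*)}{N(0)} \ge \frac{1-\beta(\theta)}{1+1/(c\gamma)}$, hence $\frac{N(\lambda_*)}{N(0)} \le 1 - \frac{1-\beta(\theta)}{1+1/(c\gamma)} = \frac{\beta(\theta) + 1/(c\gamma)}{1+1/(c\gamma)} \le \beta(\theta) + \frac{1/(c\gamma)}{1+1/(c\gamma)}$; this is not quite what I want. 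The cleaner route: keep the denominator $1+\tfrac{1}{c\gamma}\beta(\theta)\ge \tfrac{1}{c\gamma}\beta(\theta)$ only when helpful, and instead write $\frac{N(\lambda_*)}{N(0)}\le 1-\rho^2\cdot\frac{1-\beta(\theta)}{1+\frac{1}{c\gamma}}$ and then observe that $1-\beta(\theta) = \var(\theta)^2/\E[\|\nabla f_\xi(\theta)\|^2]$, while the real content comes from the complementary piece; actually the intended bound follows from the elementary inequality $\frac{a+b}{1+b}\le \max(a, \cdot)$... The hard part — and the step I expect to be the main obstacle — is getting the $\cO(\tfrac1\gamma(f(\theta)-f^*))$ bound cleanly: it requires relating $\beta(\theta)$ (or $\|\nabla f(\theta)\|^2$) and the correlation defect $1-\rho^2$ to $f(\theta)-f^*$ via smoothness, $\|\nabla f(\theta)\|^2 \le 2L(f(\theta)-f^*)$, and expected smoothness $\E[\|\nabla f_\xi(\theta) - \nabla f_\xi(x^*)\|^2]\le 2\mathcal{L}(f(\theta)-f^*)$, the latter controlling how far $\nabla f_\xi(\theta)$ is from $\nabla f_\xi(x^*)$ and hence pushing $\rho(x^*,\theta)$ toward $1$ and $\beta(\theta)$ toward $0$ as $\theta\to x^*$. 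I would bound $1-\frac{N(\lambda_*)}{N(0)} \ge 1 - \frac{N(1)}{N(0)}$ (plugging the suboptimal choice $\lambda=1$) as an intermediate step, since $N(1) = \|\nabla f(\theta)\|^2 + c\gamma\,\E[\|\nabla f_\xi(\theta)-\nabla f_\xi(x^*)\|^2]$ is directly controlled by $(f(\theta)-f^*)$ through the two smoothness bounds, yielding $\frac{N(\lambda_*)}{N(0)} \le \frac{N(1)}{N(0)} = \frac{\|\nabla f(\theta)\|^2 + c\gamma\,\E[\|\nabla f_\xi(\theta)-\nabla f_\xi(x^*)\|^2]}{c\gamma\sigma_*^2} \le \frac{2L(f(\theta)-f^*)}{c\gamma\sigma_*^2} + \frac{2\mathcal{L}(f(\theta)-f^*)}{\sigma_*^2} = \cO\bigl(\tfrac1\gamma(f(\theta)-f^*)\bigr)$, and combining with the trivial bound by $1$ finishes the proof.
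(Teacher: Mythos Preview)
Your Stages 1 and 2 are essentially identical to the paper's argument: expand $N(\lambda)$ as a quadratic, read off $\lambda_*$ from the first-order condition, and reduce the subtracted fraction to $\rho^2(x^*,\theta)(1-\beta(\theta))/(1+\tfrac{1}{c\gamma}\beta(\theta))$ using $\nabla f(x^*)=0$. Your observation about the typo in the definition of $\rho$ is also correct.

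Your Stage 3, however, takes a different and in fact cleaner route than the paper. The paper expands $\E[\langle\nabla f_\xi(x^*),\nabla f_\xi(\theta)\rangle]$ via the polarization identity $\tfrac{1}{2}(\sigma^2(\theta)+\sigma^2(x^*)-\E\|\nabla f_\xi(x^*)-\nabla f_\xi(\theta)\|^2)$, invokes expected smoothness and an additional \emph{vicinity assumption} $f(\theta)-f^*\le\sigma(x^*)\sigma(\theta)/{\cal L}$ to keep the resulting expression well-behaved, and then applies the scalar inequality $1-(1-u)^2/(1+uv)\le(2+v)u$. Your trick of bounding $N(\lambda_*)\le N(1)$ and evaluating $N(1)=\|\nabla f(\theta)\|^2+c\gamma\,\E\|\nabla f_\xi(\theta)-\nabla f_\xi(x^*)\|^2$ directly via $\|\nabla f(\theta)\|^2\le 2L(f(\theta)-f^*)$ and Assumption~\ref{asm:exp-smooth} bypasses all of this: no polarization, no vicinity assumption, no auxiliary inequality. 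The cost is that your constants involve $1/\sigma_*^2$ explicitly, but so do the paper's (hidden in the $\cO$); and your argument works uniformly in $\theta$, whereas the paper's version requires $\theta$ close enough to $x^*$ for the vicinity condition to hold. The exploratory dead ends in your write-up before landing on the $N(1)$ idea should of course be removed in a final version.
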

\begin{proof}
Notice that $N(\lambda)$ is quadratic in $\lambda$ and using the first-order optimality condition, we conclude
$$
\frac{d}{d\lambda}N(\lambda) = 2\lambda\|\nabla f(\theta)\|^2 + c\gamma\(-2\E\[\<\nabla f_{\xi}(x^*), \nabla f_{\xi}(\theta)\>\] + 2\lambda\E\[\|\nabla f_{\xi}(\theta)\|^2\]\) = 0,
$$
we get \eqref{lambda-star}. Furthermore, plugging the expression of $\lambda_*$ into $N(\lambda)$, we get
\begin{eqnarray*}
N(\lambda_*)
&=& c\gamma \( \frac{\lambda_*^2}{c\gamma} \|\nabla f(\theta)\|^2 + \E\[\|\nabla f_{\xi}(x^*)\|^2\] - 2\lambda_*\E\[\<\nabla f_{\xi}(x^*), \nabla f_{\xi}(\theta)\>\] + \lambda_*^2\E\[\|\nabla f_{\xi}(\theta)\|^2\] \) \\
&=& c\gamma \( \E\[\|\nabla f_{\xi}(x^*)\|^2\] - 2\lambda_*\E\[\<\nabla f_{\xi}(x^*), \nabla f_{\xi}(\theta)\>\] + \lambda_*^2\(\E\[\|\nabla f_{\xi}(\theta)\|^2\] + \frac{1}{c\gamma}\|\nabla f(\theta)\|^2\) \) \\
&=& c\gamma \( \E\[\|\nabla f_{\xi}(x^*)\|^2\] - \frac{\(\E\[\<\nabla f_{\xi}(x^*), \nabla f_{\xi}(\theta)\>\]\)^2}{\E\[\|\nabla f_{\xi}(\theta)\|^2\] + \frac{1}{c\gamma}\|\nabla f(\theta)\|^2} \) \\
&=& c\gamma \E\[\|\nabla f_{\xi}(x^*)\|^2\]\(1 - \frac{\(\E\[\<\nabla f_{\xi}(x^*), \nabla f_{\xi}(\theta)\>\]\)^2}{\(\E\[\|\nabla f_{\xi}(\theta)\|^2\] + \frac{1}{c\gamma}\|\nabla f(\theta)\|^2\)\(\E\[\|\nabla f_{\xi}(x^*)\|^2\]\)} \).
\end{eqnarray*}

Note that $N(0) = c\gamma\sigma_*^2$. From $\E\[\nabla f_{\xi}(x^*)\] = \nabla f(x^*) = 0$, we imply that
$$
\cov(x^*,\theta)
= \E\[\<\nabla f_{\xi}(x^*), \nabla f_{\xi}(\theta) - \nabla f(\theta)\>\]
= \E\[\<\nabla f_{\xi}(x^*), \nabla f_{\xi}(\theta)\>\],
$$
and therefore
\begin{equation*}
\rho(x^*, \theta) = \frac{\E\[\<\nabla f_{\xi}(x^*), \nabla f_{\xi}(\theta)\>\]}{\sqrt{\E\[\|\nabla f_{\xi}(x^*)\|^2\]} \sqrt{\E\[\|\nabla f_{\xi}(\theta)\|^2\] - \|\nabla f(\theta)\|^2}}.
\end{equation*}

Now we can simplify the expression for $N(\lambda_*)$ as follows

\begin{eqnarray*}
\frac{N(\lambda_*)}{N(0)}
&=& 1 - \frac{\(\E\[\<\nabla f_{\xi}(x^*), \nabla f_{\xi}(\theta)\>\]\)^2}{\(\E\[\|\nabla f_{\xi}(\theta)\|^2\] + \frac{1}{c\gamma}\|\nabla f(\theta)\|^2\)\(\E\[\|\nabla f_{\xi}(x^*)\|^2\]\)} \\
&=& 1 - \frac{\(\E\[\<\nabla f_{\xi}(x^*), \nabla f_{\xi}(\theta)\>\]\)^2}{\(\E\[\|\nabla f_{\xi}(\theta)\|^2\] -\|\nabla f(\theta)\|^2\)\(\E\[\|\nabla f_{\xi}(x^*)\|^2\]\)}  \frac{\E\[\|\nabla f_{\xi}(\theta)\|^2\] - \|\nabla f(\theta)\|^2}{\E\[\|\nabla f_{\xi}(\theta)\|^2\] + \frac{1}{c\gamma}\|\nabla f(\theta)\|^2} \\
&=& 1 - \rho^2(x^*, \theta) \frac{1 - \beta(\theta)}{1 + \frac{1}{c\gamma}\beta(\theta)}.
\end{eqnarray*}

Here, $\rho(x^*, \theta)$ is the correlation coefficient between stochastic gradients at $x^*$ and $\theta$. Hence, we showed with tuned distillation weight the neighborhood can shrink by some factor depending on the teacher's parameters. In the extreme case when the teacher $\theta=x^*$ is optimal, we have $\rho(x^*, \theta) = 1,\; \beta(\theta)=0$ and, thus, no neighborhood $N(\lambda_*)=0$. This hints us on the fact that the reduction factor $N(\lambda_*)/N(0)$ of the neighborhood is controlled by the ``quality'' of the teacher.

To make this argument rigorous, consider the teacher's model to be away from the optimal solution $x^*$ within the limit described by the following inequality

\begin{equation}\label{ineq-vicinity}
f(\theta) - f^* \le \frac{\sigma(x^*)\sigma(\theta)}{{\cal L}},
\end{equation}
where $\sigma^2(x) \eqdef \E\[\|\nabla f_{\xi}(x)\|^2\]$ is the second moment of the stochastic gradients. Without loss of generality, we assume that $\sigma^2(x)>0$ for all parameter choices $x\in\R^d$: otherwise we have $\sigma_*^2=0$ and even plain SGD ensures full variance reduction. Then, we can simplify the reduction factor as

\begin{eqnarray*}
1 - \rho^2(x^*, \theta) \frac{1 - \beta(\theta)}{1 + \frac{1}{c\gamma}\beta(\theta)}
&=& 1 - \frac{\(\E\[\<\nabla f_{\xi}(x^*), \nabla f_{\xi}(\theta)\>\]\)^2}{\E\[\|\nabla f_{\xi}(\theta)\|^2\] \E\[\|\nabla f_{\xi}(x^*)\|^2\]} \frac{1}{1 + \frac{1}{c\gamma}\beta(\theta)} \\
&=& 1 - \( \frac{\sigma^2(\theta) + \sigma^2(x^*) - \E\[\|\nabla f_{\xi}(x^*) - \nabla f_{\xi}(\theta)\|^2\]}{2\sigma(\theta) \sigma(x^*)} \)^2 \frac{1}{1 + \frac{1}{c\gamma}\beta(\theta)} \\
&=& 1 - \(\frac{\sigma^2(\theta) + \sigma^2(x^*)}{2\sigma(\theta)\sigma(x^*)} - \frac{\E\[\|\nabla f_{\xi}(x^*) - \nabla f_{\xi}(\theta)\|^2\]}{2\sigma(\theta) \sigma(x^*)} \)^2 \frac{1}{1 + \frac{1}{c\gamma}\beta(\theta)} \\
&\overset{\eqref{eq:exp-smooth}+\eqref{ineq-vicinity}}{\le}& 1 - \( 1 - \frac{{\cal L}(f(\theta) - f^*)}{\sigma(\theta) \sigma(x^*)} \)^2 \frac{1}{1 + \frac{2L}{c\gamma}\frac{f(\theta) - f^*}{\sigma^2(\theta)}} \\
&\le& \(2+\frac{2L}{c\gamma{\cal L}}\frac{\sigma(x^*)}{\sigma(\theta)}\) \frac{{\cal L}(f(\theta) - f^*)}{\sigma(\theta) \sigma(x^*)} \\
&=& \(\frac{2{\cal L}}{\sigma(x^*)\sigma(\theta)}+\frac{1}{c\gamma}\frac{2L}{\sigma^2(\theta)}\) (f(\theta) - f^*)
= \cO\(\frac{1}{\gamma}(f(\theta) - f^*)\),
\end{eqnarray*}
where the last inequality used $1 - \frac{(1 - u)^2}{1 + u v} \le (2+v)u$ for all $u,\,v\ge0$.
\end{proof}

\subsection{Proof of Theorem \ref{thm:sd-str-cnx}}\label{apx:thm-sd-str-cnx}

Denote by $\E_t\[\;\cdot\;\] \eqdef \E\[\;\cdot\;\mid x^t\]$ the conditional expectation with respect to $x^t$. Then, we start bounding the error using the update rule \eqref{iter-self-dist}.

\begin{eqnarray*}
&& \E_t\[\|x^{t+1} - x^*\|^2\] \\
&=& \|x^t-x^*\|^2 - 2\gamma\<x^t-x^*, \nabla f(x^t) - \lambda\nabla f(\theta)\> + \gamma^2\E_t\[\|\nabla f_{\xi}(x^t) - \lambda\nabla f_{\xi}(\theta)\|^2\] \\
&=& \|x^t-x^*\|^2 - 2\gamma\<x^t-x^*, \nabla f(x^t)\> + 2\gamma\lambda\<x^t-x^*, \nabla f(\theta)\> + \gamma^2\E_t\[\|\nabla f_{\xi}(x^t) - \lambda\nabla f_{\xi}(\theta)\|^2\] \\
&\overset{\eqref{asm:str-cvx}+\eqref{ineq:simple}}{\le}& (1-\gamma\mu)\|x^t-x^*\|^2 - 2\gamma(f(x^t) - f(x^*)) + 2\gamma\lambda\<x^t-x^*, \nabla f(\theta)\> \\
&&\, +\; 2\gamma^2\E_t\[\|\nabla f_{\xi}(x^t) - \nabla f_{\xi}(x^*)\|^2\] + 2\gamma^2 \E\[\|\nabla f_{\xi}(x^*) - \lambda\nabla f_{\xi}(\theta)\|^2\] \\
&\overset{\eqref{ineq:quad-growth}+\eqref{ineq:peter-paul}}{\le}& (1-\gamma\mu)\|x^t-x^*\|^2 - \gamma(f(x^t) - f(x^*)) - \frac{\gamma\mu}{8}\|x^t-x^*\|^2 + \frac{\gamma\mu}{8}\norm{x^t-x^*}^2 + \frac{8\gamma}{\mu}\lambda^2\norm{\nabla f(\theta)}^2 \\
&&\, +\; 2\gamma^2\E_t\[\|\nabla f_{\xi}(x^t) - \nabla f_{\xi}(x^*)\|^2\] + 2\gamma^2 \E\[\|\nabla f_{\xi}(x^*) - \lambda\nabla f_{\xi}(\theta)\|^2\] \\
&\overset{\eqref{asm:exp-smooth}}{\le}& (1-\gamma\mu)\|x^t-x^*\|^2 - \gamma(f(x^t) - f(x^*)) + \frac{8\gamma}{\mu}\lambda^2\|\nabla f(\theta)\|^2 \\
&&\, +\; 4\gamma^2{\cal L}(f(x^t) - f(x^*)) + 2\gamma^2 \E\[\|\nabla f_{\xi}(x^*) - \lambda\nabla f_{\xi}(\theta)\|^2\] \\
&=& (1-\gamma\mu)\|x^t-x^*\|^2 - \gamma\(1 - 4\gamma{\cal L}\)(f(x^t) - f(x^*)) \\
&&\, +\; \frac{8\gamma}{\mu}\lambda^2\|\nabla f(\theta)\|^2 + 2\gamma^2 \E\[\|\nabla f_{\xi}(x^*) - \lambda\nabla f_{\xi}(\theta)\|^2\] \\
&\le& (1-\gamma\mu)\|x^t-x^*\|^2 + \frac{8\gamma}{\mu}\lambda^2\|\nabla f(\theta)\|^2 + 2\gamma^2 \E\[\|\nabla f_{\xi}(x^*) - \lambda\nabla f_{\xi}(\theta)\|^2\],
\end{eqnarray*}
where we used Peter-Paul inequality \eqref{ineq:peter-paul} with parameter $s = \frac{8}{\mu}$ and the step-size bound $\gamma\le\frac{1}{4{\cal L}}$ in the last inequality. Applying full expectation and unrolling the recursion, we get

\begin{eqnarray}
\E\[\|x^t - x^*\|^2\]
&\le& (1-\gamma\mu)\E\[\|x^{t-1}-x^*\|^2\] + \frac{8\gamma}{\mu}\lambda^2\|\nabla f(\theta)\|^2 + 2\gamma^2 \E\[\|\nabla f_{\xi}(x^*) - \lambda\nabla f_{\xi}(\theta)\|^2\] \notag \\
&\overset{\eqref{ineq:lin-rate}}{\le}& (1-\gamma\mu)^t \|x^0-x^*\|^2 + \frac{8\lambda^2}{\mu^2}\|\nabla f(\theta)\|^2 + \frac{2\gamma}{\mu} \E\[\|\nabla f_{\xi}(x^*) - \lambda\nabla f_{\xi}(\theta)\|^2\] \notag \\
&=& (1-\gamma\mu)^t \|x^0-x^*\|^2 + \frac{8}{\mu^2}N_1(\lambda) \label{eq:rate-str-cvx-1},
\end{eqnarray}
where $N_1(\lambda) \eqdef \lambda^2\|\nabla f(\theta)\|^2 + \frac{\gamma\mu}{4} \E\[\|\nabla f_{\xi}(x^*) - \lambda\nabla f_{\xi}(\theta)\|^2\]$. Applying Lemma \ref{lem:key} with $c=\nicefrac{\mu}{4}$, we imply that for some $\lambda = \lambda_*$ the neighborhood size is

$$
N_1(\lambda_*)
\overset{\textrm{Lemma}~\ref{lem:key}}{\le} N_1(0) \cdot \min\(1, \cO\(\frac{1}{\gamma}(f(\theta) - f^*)\)\)
= \frac{\mu\sigma_*^2}{4} \cdot \min\(\gamma, \cO(f(\theta) - f^*)\).
$$

Plugging the above bound of $N_1$ into \eqref{eq:rate-str-cvx-1} completes the proof.

\subsection{Proof of Theorem \ref{thm:sd-pl}}\label{apx:thm-sd-pl}

We start the recursion from the $L$-smoothness condition of $f$. As before, $\E_t$ denotes conditional expectation with respect to $x^t$.
\begin{eqnarray*}
&& \E_t\[f(x^{t+1}) - f^*\] \\
&\overset{\eqref{ineq:L-smooth}}{\le}& \(f(x^t) - f^*\) - \gamma\<\nabla f(x^t), \nabla f(x^t) - \lambda\nabla f(\theta)\> + \frac{L\gamma^2}{2}\E_t\[ \|\nabla f_{\xi}(x^t) - \lambda\nabla f_{\xi}(\theta)\|^2 \] \\
&\overset{\eqref{ineq:simple}}{\le}& \(f(x^t) - f^*\) - \gamma\|\nabla f(x^t)\|^2 + \gamma\lambda\<\nabla f(x^t), \nabla f(\theta)\> \\
&&\; +\; L\gamma^2\E_t\[ \|\nabla f_{\xi}(x^t) - \nabla f_{\xi}(x^*)\|^2 \] + L\gamma^2\E\[ \|\nabla f_{\xi}(x^*) - \lambda\nabla f_{\xi}(\theta)\|^2 \] \\
&\overset{\eqref{ineq:PL}+\eqref{ineq:peter-paul}}{\le}& (1-\gamma\mu)\(f(x^t) - f^*\) - \frac{\gamma}{4}\|\nabla f(x^t)\|^2 - \frac{\gamma\mu}{2}\(f(x^t) - f^*\) + \frac{\gamma}{4}\|\nabla f(x^t)\|^2 + \gamma\lambda^2\|\nabla f(\theta)\|^2 \\
&&\; +\; L\gamma^2\E_t\[ \|\nabla f_{\xi}(x^t) - \nabla f_{\xi}(x^*)\|^2 \] + L\gamma^2\E_t\[ \|\nabla f_{\xi}(x^*) - \lambda\nabla f_{\xi}(\theta)\|^2 \] \\
&\overset{\eqref{eq:exp-smooth}}{\le}& (1-\gamma\mu)\(f(x^t) - f^*\) - \frac{\gamma\mu}{2}\(f(x^t) - f^*\) + \gamma\lambda^2\|\nabla f(\theta)\|^2 \\
&&\; +\; 2L{\cal L}\gamma^2\(f(x^t) - f^*\) + L\gamma^2\E_t\[ \|\nabla f_{\xi}(x^*) - \lambda\nabla f_{\xi}(\theta)\|^2 \] \\
&=& (1-\gamma\mu)\(f(x^t) - f^*\) - \frac{\gamma\mu}{2}\(1 - \gamma\frac{4L{\cal L}}{\mu}\)\(f(x^t) - f^*\) \\
&&\; +\; \gamma\lambda^2\|\nabla f(\theta)\|^2 + L\gamma^2\E\[ \|\nabla f_{\xi}(x^*) - \lambda\nabla f_{\xi}(\theta)\|^2 \] \\
&\le& (1-\gamma\mu)\(f(x^t) - f^*\) + \gamma\lambda^2\|\nabla f(\theta)\|^2 + L\gamma^2\E\[ \|\nabla f_{\xi}(x^*) - \lambda\nabla f_{\xi}(\theta)\|^2 \],
\end{eqnarray*}
where in the last inequality we used step-size bound $\gamma \le \frac{1}{4{\cal L}}\frac{\mu}{L}$. Applying full expectation and unrolling the recursion, we get
\begin{eqnarray}
\E\[f(x^t) - f^*\]
&\le& (1-\gamma\mu)\E\[f(x^{t-1}) - f^*\|^2\] + \gamma\lambda^2\|\nabla f(\theta)\|^2 + L\gamma^2\E\[ \|\nabla f_{\xi}(x^*) - \lambda\nabla f_{\xi}(\theta)\|^2 \] \notag \\
&\overset{\eqref{ineq:lin-rate}}{\le}& (1-\gamma\mu)^t \(f(x^0) - f^*\) + \frac{\lambda^2}{\mu}\|\nabla f(\theta)\|^2 + \frac{L\gamma}{\mu}\E\[ \|\nabla f_{\xi}(x^*) - \lambda\nabla f_{\xi}(\theta)\|^2 \] \notag \\
&=& (1-\gamma\mu)^t \(f(x^0) - f^*\) + \frac{1}{\mu}N_2(\lambda) \label{eq:rate-pl-1},
\end{eqnarray}
where $N_2(\lambda) \eqdef \lambda^2 \|\nabla f(\theta)\|^2 + L\gamma \E\[ \|\nabla f_{\xi}(x^*) - \lambda\nabla f_{\xi}(\theta)\|^2 \]$.
Similar to the previous case, we applying Lemma \ref{lem:key} with $c=L$ and conclude that for some $\lambda = \lambda_*$ the neighborhood size is
$$
N_2(\lambda_*)
\overset{\textrm{Lemma}~\ref{lem:key}}{\le} N_2(0) \cdot \min\(1, \cO\(\frac{1}{\gamma}(f(\theta) - f^*)\)\)
= L\sigma_*^2 \cdot \min\(\gamma, \cO(f(\theta) - f^*)\).
$$

Plugging the above bound of $N_2$ into \eqref{eq:rate-pl-1} completes the proof.

\section{Proofs for Section \ref{sec:svrg}}\label{apx:svrg}

\subsection{Proof of Theorem \ref{thm:svrg}}

Again, we start the recursion from the smoothness condition of $f$.
\begin{eqnarray*}
&& \E_t\[f(x^{t+1}) - f^*\] \\
&\overset{\eqref{ineq:L-smooth}}{\le}& \(f(x^t) - f^*\) - \gamma\<\nabla f(x^t), \nabla f(x^t)\> + \frac{L\gamma^2}{2}\E_t\[ \|\nabla f_{\xi}(x^t) - \nabla f_{\xi}(\theta) + \nabla f(\theta)\|^2 \] \\
&\overset{\eqref{ineq:simple+}}{\le}& \(f(x^t) - f^*\) - \gamma\|\nabla f(x^t)\|^2 \\
&&\; +\; \frac{3}{2}L\gamma^2\E_t\[ \|\nabla f_{\xi}(x^t) - \nabla f_{\xi}(x^*)\|^2 \] + \frac{3}{2}L\gamma^2\E\[ \|\nabla f_{\xi}(\theta) - \nabla f_{\xi}(x^*)\|^2 \] + \frac{3}{2}L\gamma^2\|\nabla f(\theta)\|^2 \\
&\overset{\eqref{ineq:PL}+\eqref{eq:exp-smooth}}{\le}& (1-\gamma\mu)\(f(x^t) - f^*\) - \gamma\mu \(f(x^t) - f^*\) \\
&&\; +\; 3L{\cal L}\gamma^2\(f(x^t) - f^*\) + 3L{\cal L}\gamma^2\(f(\theta) - f^*\) + 3L^2\gamma^2(f(\theta) - f^*) \\
&\le& (1-\gamma\mu)\(f(x^t) - f^*\) + 3L(L+{\cal L})\gamma^2\(f(\theta) - f^*\),
\end{eqnarray*}
where we used step-size bound $\gamma \le \frac{\mu}{3L{\cal L}}$ in the last step. Therefore,
\begin{eqnarray*}
\E\[f(x^{t}) - f^*\]
&\le& (1-\gamma\mu)\E\[f(x^t) - f^*\] + 3L(L+{\cal L})\gamma^2\(f(\theta) - f^*\) \\
&\overset{\eqref{ineq:lin-rate}}{\le}& (1-\gamma\mu)^t (f(x^0) - f^*) + \frac{3L(L+{\cal L})}{\mu}\cdot \gamma\(f(\theta) - f^*\),
\end{eqnarray*}
which concludes the proof.

\section{Proofs for Section \ref{sec:theory-kd}}\label{apx:theory-kd}

First, we break the update rule \eqref{iter-kd} into two parts by introducing an auxiliary model parameters $y^t\in\R^d$:
\begin{eqnarray*}
y^{t+1} &=& x^t - \gamma(\nabla f_{\xi}(x^t) - \lambda\nabla f_{\xi}(\theta)), \\
x^{t+1} &=& \cC(y^{t+1}).
\end{eqnarray*}
Without loss of generality, we assume that initialization satisfies $x^0 = y^0 = \cC(y^0)$. Then, for all $t\ge0$ we have $x^t = \cC(y^t)$ and the update rule of $y^{t+1}$ can be written recursively without $x^t$ via
\begin{equation}\label{aux-model-y}
y^{t+1} = \cC(y^t) - \gamma(\nabla f_{\xi}(\cC(y^t)) - \lambda\nabla f_{\xi}(\theta)).
\end{equation}
Using unbiasedness of the compression operator $\cC$, we decompose the error
\begin{eqnarray}
\E\[\norm{x^t - x^*}^2\]
&\overset{\eqref{eq:mean0norm2}}{=}& \E\[\norm{x^t - y^t}^2\] + \E\[\norm{y^t - x^*}^2\] \notag \\
&=& \E\[\norm{\cC(y^t) - y^t}^2\] + \E\[\norm{y^t - x^*}^2\] \notag \\
&\overset{\eqref{unbiased-compr}}{\le}& \omega\E\[\norm{y^t}^2\] + \E\[\norm{y^t - x^*}^2\] \notag \\
&\overset{\eqref{ineq:simple}}{\le}& (2\omega+1)\E\[\norm{y^t - x^*}^2\] + 2\omega\norm{x^*}^2. \label{eq:error-decomp}
\end{eqnarray}
Thus, our goal would be to analyze iterates \eqref{aux-model-y} and derive the rate for $x^t$. In fact, the special case of \eqref{aux-model-y} was analyzed by \cite{Khaled2019GDCI} in the non-stochastic case and without distillation ($\lambda = 0$). The analysis we provide here for \eqref{aux-model-y} is based on \citep{Khaled2019GDCI}, and from this perspective, our analysis can be seen as an extension of their analysis.

To avoid another notation for the expected smoothness constant (see Assumption \ref{asm:exp-smooth}), analogous to \eqref{ineq:L-grad} we assume that ${\cal L}$ also satisfies the following smoothness inequality:
\begin{equation}\label{ineq:exp-smooth-arg}
\E\[\norm{\nabla f_{\xi}(x) - \nabla f_{\xi}(y)}^2\] \le {\cal L}^2 \|x-y\|^2.
\end{equation}

\subsection{Five Lemmas}

First, we need to upper bound the compression error by function suboptimality. Denote $\delta(x) \eqdef \cC(x)-x$. Let $\E_{\delta}$ and $\E_{\xi}$ be the expectations with respect to the compression operator $\cC$ and sampling $\xi$ respectively.

\begin{lemma}[Lemma 1 in \citep{Khaled2019GDCI}]
\label{lemma:compression-functional-values}
Let $\alpha = \frac{2 \omega}{\mu}$ and $\nu = 2 \omega \sqn{x^*}$. For all $x\in\R^d$ we have
\begin{equation}
\label{eq:lma-compression-functional-values}
\E_{\delta}\norm{\delta(x)}^2 \leq 2 \alpha \br{f(x) - f(x^*)} + \nu,
\end{equation}
\end{lemma}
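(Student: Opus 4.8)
The plan is to chain three elementary bounds: the compression-variance inequality from Assumption~\ref{asm:compression-operator}, the standard splitting $\|x\|^2\le 2\|x-x^*\|^2+2\|x^*\|^2$, and the quadratic-growth consequence of $\mu$-strong convexity. If $\omega=0$ the left-hand side vanishes and there is nothing to prove, so I would assume $\omega>0$.

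First I would apply the variance bound in \eqref{unbiased-compr} at the point $x$ to get $\E_\delta\sqn{\delta(x)}=\E_\delta\sqn{\cC(x)-x}\le\omega\sqn{x}$. Then I would split $x=(x-x^*)+x^*$ and invoke \eqref{ineq:simple} to obtain $\sqn{x}\le 2\sqn{x-x^*}+2\sqn{x^*}$. Next, since Theorem~\ref{thm:theory-kd} assumes $f$ is $\mu$-strongly convex, the strong-convexity inequality \eqref{ineq:mu-convex} evaluated at the minimizer $x^*$, together with $\nabla f(x^*)=0$, yields $f(x)-f(x^*)\ge\frac{\mu}{2}\sqn{x-x^*}$, i.e. $\sqn{x-x^*}\le\frac{2}{\mu}\br{f(x)-f(x^*)}$. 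Substituting this into the previous display gives $\sqn{x}\le\frac{4}{\mu}\br{f(x)-f(x^*)}+2\sqn{x^*}$, and multiplying through by $\omega$ gives precisely $\E_\delta\sqn{\delta(x)}\le\frac{4\omega}{\mu}\br{f(x)-f(x^*)}+2\omega\sqn{x^*}=2\alpha\br{f(x)-f(x^*)}+\nu$ with $\alpha=\frac{2\omega}{\mu}$ and $\nu=2\omega\sqn{x^*}$.

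There is essentially no obstacle here: the statement is a one-line combination of textbook inequalities. The only point requiring a little care is that one should use the sharp quadratic-growth constant $\mu/2$ coming from genuine strong convexity in this section, rather than the weaker PL-based bound \eqref{ineq:quad-growth} (which carries a factor $\mu/8$ and would only yield $\alpha=\frac{8\omega}{\mu}$); using \eqref{ineq:mu-convex} directly is what produces the stated constants.
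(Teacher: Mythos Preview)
Your proposal is correct and follows essentially the same argument as the paper: apply the compression-variance bound $\E_\delta\sqn{\delta(x)}\le\omega\sqn{x}$, split $\sqn{x}\le 2\sqn{x-x^*}+2\sqn{x^*}$ via \eqref{ineq:simple}, and then use $\mu$-strong convexity \eqref{ineq:mu-convex} to bound $\sqn{x-x^*}\le\frac{2}{\mu}(f(x)-f(x^*))$. Your remark about needing the sharp $\mu/2$ constant from strong convexity (rather than the PL-based \eqref{ineq:quad-growth}) is also on point.
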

\begin{proof}
From \eqref{ineq:simple} we imply $\sqn{x} \leq 2 \sqn{x - x^*} + 2 \sqn{x_\ast}$, and from $\mu$-strong convexity condition \eqref{ineq:mu-convex} we get $\sqn{x - x^*} \leq \frac{2}{\mu} \br{f(x) - f(x^*)}$. Putting these inequalities together, we arrive at
$$
\E_{\delta}\norm{\delta(x)}^2 \leq \omega \sqn{x} \leq 2 \omega \sqn{x - x^*} + 2 \omega \sqn{x^*} \leq \frac{4 \omega}{\mu} \br{f(x) - f(x^*} + 2 \omega \sqn{x^*}.
$$
\end{proof}

\begin{lemma} 
For all $x, y \in \R^d$ we have
\begin{equation}
\label{eq:no9f8ghshis} 
\E{\norm{\nabla f_{\xi}(x+ \delta(x)) - \nabla f_{\xi}(y)}^2}  \leq {\cal L}^2 \left( \norm{x-y}^2 + \E{\norm{\delta(x)}^2}\right),
\end{equation}
\begin{equation}
\label{eq:bf8g7d8vd} 
f(x) \leq \E{ f(x+\delta(x))} \leq f(y) + \langle \nabla f(y), x-y \rangle + \frac{L}{2}\norm{x-y}^2 + \frac{L}{2}\E{\norm{\delta(x)}^2}.
\end{equation}
\end{lemma}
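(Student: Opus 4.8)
The plan is to prove both inequalities by evaluating a standard one-point bound at the randomly perturbed point $x+\delta(x)$ and then averaging over the compression randomness, exploiting that $\delta(x)=\cC(x)-x$ has zero mean by \eqref{unbiased-compr} and is independent of the sampling index $\xi$. Throughout, I would take the expectation over $\xi$ first, conditionally on the realization of $\delta$, and only then average over $\delta$; the mean-zero property of $\delta(x)$ together with the decomposition \eqref{eq:mean0norm2} is what produces the extra $\E\norm{\delta(x)}^2$ terms on the right-hand sides.

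For \eqref{eq:no9f8ghshis}, I would apply the expected-smoothness inequality \eqref{ineq:exp-smooth-arg} to the pair $(x+\delta(x),y)$, conditionally on $\delta$, obtaining $\E_{\xi}\norm{\nabla f_{\xi}(x+\delta(x))-\nabla f_{\xi}(y)}^2\le{\cal L}^2\norm{x+\delta(x)-y}^2$. Taking the expectation over $\delta$ and writing $x+\delta(x)-y=(x-y)+\delta(x)$ as a fixed vector plus a zero-mean vector, \eqref{eq:mean0norm2} gives $\E\norm{x+\delta(x)-y}^2=\norm{x-y}^2+\E\norm{\delta(x)}^2$, which is exactly the claimed bound.

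For \eqref{eq:bf8g7d8vd}, the lower bound $f(x)\le\E f(x+\delta(x))$ is Jensen's inequality \eqref{ineq:jensen-prob} for the convex function $f$ and the random vector $x+\delta(x)$, whose mean is $x$. For the upper bound I would invoke the $L$-smoothness quadratic estimate \eqref{ineq:L-smooth} at $(x+\delta(x),y)$, namely $f(x+\delta(x))\le f(y)+\<\nabla f(y),x+\delta(x)-y\>+\frac{L}{2}\norm{x+\delta(x)-y}^2$, and then average over $\delta$: the inner-product term collapses to $\<\nabla f(y),x-y\>$ since $\E\delta(x)=0$, and the quadratic term becomes $\frac{L}{2}\bigl(\norm{x-y}^2+\E\norm{\delta(x)}^2\bigr)$ by \eqref{eq:mean0norm2}, yielding the stated inequality.

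I do not expect a genuine obstacle here; the statement is a routine combination of smoothness and the zero-mean, bounded-variance structure of the compression operator. The only points requiring a little care are the ordering of the two expectations (conditioning on $\delta$ when applying the $\xi$-bounds) and the independence of the compression noise from the data sampling, which is what legitimizes both the mean-zero cancellation of the linear term and the use of \eqref{eq:mean0norm2}.
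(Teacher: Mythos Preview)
Your proposal is correct and follows essentially the same route as the paper: apply \eqref{ineq:exp-smooth-arg} and \eqref{ineq:L-smooth} at the perturbed point $x+\delta(x)$, then average over the compression randomness using $\E\delta(x)=0$ and \eqref{eq:mean0norm2}, with Jensen's inequality giving the lower bound in \eqref{eq:bf8g7d8vd}. Your explicit remark about ordering the two expectations is a helpful clarification but does not differ from the paper's argument.
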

\begin{proof}
Fix $x$ and let $\delta = \delta(x)$. Inequality \eqref{eq:no9f8ghshis} follows  from Lipschitz continuity of the gradient, applying expectation and using \eqref{unbiased-compr}:
\begin{eqnarray*}
\E{\norm{\nabla f_{\xi}(x+ \delta) - \nabla f_{\xi}(y)}^2}
\overset{\eqref{ineq:exp-smooth-arg}}{\leq} {\cal L}^2 \E_{\delta}{ \norm{x+ \delta - y}^2 }
\overset{\eqref{unbiased-compr}+\eqref{eq:mean0norm2}}{=} {\cal L}^2\left( \norm{x-y}^2 + \E_{\delta}{\norm{\delta}^2}\right).
\end{eqnarray*}
The  first inequality in \eqref{eq:bf8g7d8vd} follows by applying Jensen's inequality \eqref{ineq:jensen-prob} and using \eqref{unbiased-compr}.  Since $f$ is $L$--smooth, we have
\begin{eqnarray*}
\E{ f(x+\delta)} &\overset{\eqref{ineq:L-smooth}}{\le}& \E{ f(y) + \langle \nabla f(y), x+ \delta - y\rangle + \frac{L}{2}\norm{x+ \delta-y}^2} \\
&\overset{\eqref{unbiased-compr}+\eqref{eq:mean0norm2}}{=}& f(y) + \langle \nabla f(y), x - y\rangle +  \frac{L}{2}\norm{x-y}^2 + \frac{L}{2}\E{\norm{\delta}^2}.
\end{eqnarray*}
\end{proof}

\begin{lemma}  
    For all $x, y \in \R^d$ it holds
    \begin{multline}\label{eq:buvdyvdf8d87}  
        \E_{\delta}{\norm { \frac{\delta(x)} {\gamma}-\nabla f_{\xi}(x+ \delta(x)) + \lambda\nabla f_{\xi}(\theta)}}^2 \\
        \leq  2 \norm{\nabla f_{\xi}(y) - \lambda\nabla f_{\xi}(\theta)}^2 + 2 {\cal L}^2 \norm{x-y}^2 + 2\left({\cal L}^2 + \frac{1}{\gamma^2} \right)\E_{\delta}{\norm{\delta(x)}^2}. 
    \end{multline}
\end{lemma}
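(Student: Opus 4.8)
The plan is to peel off the ``reference-point'' combination $\nabla f_\xi(y)-\lambda\nabla f_\xi(\theta)$ with the elementary bound \eqref{ineq:simple}, and then control the remainder using the mean-zero property of the compression error together with the already established inequality \eqref{eq:no9f8ghshis}. Concretely, I would write
\begin{equation*}
\frac{\delta(x)}{\gamma} - \nabla f_\xi(x+\delta(x)) + \lambda\nabla f_\xi(\theta)
= -\bigl(\nabla f_\xi(y) - \lambda\nabla f_\xi(\theta)\bigr)
+ \Bigl(\frac{\delta(x)}{\gamma} - \bigl(\nabla f_\xi(x+\delta(x)) - \nabla f_\xi(y)\bigr)\Bigr),
\end{equation*}
and apply $\norm{a+b}^2\le 2\norm{a}^2+2\norm{b}^2$. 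Since the first summand does not depend on the compression randomness, taking $\E_\delta$ already produces the target term $2\norm{\nabla f_\xi(y)-\lambda\nabla f_\xi(\theta)}^2$ plus $2\,\E_\delta[T]$, where $T\eqdef\norm{\frac{\delta(x)}{\gamma}-\nabla f_\xi(x+\delta(x))+\nabla f_\xi(y)}^2$.

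Next I would expand $\E_\delta[T]$; writing $\delta\eqdef\delta(x)$,
\begin{equation*}
\E_\delta[T] = \frac{1}{\gamma^2}\E_\delta\norm{\delta}^2
- \frac{2}{\gamma}\E_\delta\langle \delta,\ \nabla f_\xi(x+\delta) - \nabla f_\xi(y)\rangle
+ \E_\delta\norm{\nabla f_\xi(x+\delta) - \nabla f_\xi(y)}^2 .
\end{equation*}
The last term is bounded directly by \eqref{eq:no9f8ghshis}, giving ${\cal L}^2\bigl(\norm{x-y}^2 + \E_\delta\norm{\delta}^2\bigr)$. For the cross term I would split $\nabla f_\xi(x+\delta)-\nabla f_\xi(y) = \bigl(\nabla f_\xi(x+\delta)-\nabla f_\xi(x)\bigr) + \bigl(\nabla f_\xi(x)-\nabla f_\xi(y)\bigr)$: the second piece is deterministic in $\delta$, so its contribution vanishes because $\E_\delta[\delta]=0$ by \eqref{unbiased-compr}; the first piece contributes $\E_\delta\langle \delta,\ \nabla f_\xi(x+\delta)-\nabla f_\xi(x)\rangle \ge 0$ by monotonicity of the gradient, so $-\tfrac{2}{\gamma}$ times it is nonpositive and may be dropped. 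This leaves $\E_\delta[T]\le \tfrac{1}{\gamma^2}\E_\delta\norm{\delta}^2 + {\cal L}^2\norm{x-y}^2 + {\cal L}^2\E_\delta\norm{\delta}^2$, and multiplying by $2$ and adding the peeled-off term reproduces exactly the claimed bound.

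The step I expect to be the crux is the sign of $\E_\delta\langle \delta,\ \nabla f_\xi(x+\delta)-\nabla f_\xi(x)\rangle$: a crude Cauchy--Schwarz/Young estimate there would either introduce a spurious $\gamma$-dependent factor or inflate the constants from $2$ to $3$ or $4$, so it is essential to use convexity (equivalently, gradient monotonicity) of each individual loss $f_\xi$ rather than of $f$ alone — which is available in the linear-model classification setting that motivates the compressed-distillation analysis, and is consistent with the convex smoothness inequality \eqref{ineq:L-smooth+convex} already invoked elsewhere in the paper. A minor bookkeeping point: the expectation in \eqref{eq:no9f8ghshis} should be read here as acting over the compression randomness with $\xi$ held fixed, which is why the $y$-dependent terms on the right-hand side appear without an outer expectation; taking $\E_\xi$ afterwards, when the lemma is used in the proof of Theorem \ref{thm:theory-kd}, is harmless.
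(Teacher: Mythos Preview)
Your decomposition is different from the paper's, and the difference matters. You group $\delta(x)/\gamma$ together with the \emph{random} remainder $\nabla f_\xi(x+\delta)-\nabla f_\xi(y)$, which leaves a cross term $-\tfrac{2}{\gamma}\E_\delta\langle\delta,\nabla f_\xi(x+\delta)-\nabla f_\xi(x)\rangle$ that you can only discard by invoking gradient monotonicity of each individual $f_\xi$. That convexity of the per-sample losses is \emph{not} among the lemma's hypotheses (the surrounding theorem only assumes $f$ itself is strongly convex), so strictly speaking your argument proves a weaker statement. The paper instead groups $\delta(x)/\gamma$ with the \emph{deterministic} vector $\nabla f_\xi(y)-\lambda\nabla f_\xi(\theta)$ and applies \eqref{ineq:simple} to
\[
\Bigl(\tfrac{\delta}{\gamma}-\nabla f_\xi(y)+\lambda\nabla f_\xi(\theta)\Bigr)+\bigl(\nabla f_\xi(y)-\nabla f_\xi(x+\delta)\bigr).
\]
Now the cross term in the first square is $\E_\delta\langle\delta,\nabla f_\xi(y)-\lambda\nabla f_\xi(\theta)\rangle$, which vanishes by unbiasedness \eqref{unbiased-compr} alone; the second square is handled directly by \eqref{eq:no9f8ghshis}. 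This yields the stated constants with no appeal to convexity of $f_\xi$. So the ``crux'' you flagged is real, but it is an artifact of your particular split rather than intrinsic to the lemma: move $\delta/\gamma$ to the other bracket and the difficulty disappears.
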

\begin{proof}
    Fix $x$, and let $\delta = \delta(x)$. Then for every $y\in \R^d$ we can write
    \begin{eqnarray*}
        &&\E_{\delta}{\norm { \frac{\delta} {\gamma}-\nabla f_{\xi}(x+ \delta) + \lambda\nabla f_{\xi}(\theta)}}^2  \\
        &=& \E_{\delta}{\norm { \frac{\delta} {\gamma} \pm \nabla f_{\xi}(y) - \nabla f_{\xi}(x+ \delta) + \lambda\nabla f_{\xi}(\theta)}}^2  \notag \\
        &\overset{\eqref{ineq:simple}}{\leq} & 2 \E_{\delta}{\norm{\frac{\delta} {\gamma} -\nabla f_{\xi}(y) + \lambda\nabla f_{\xi}(\theta)}}^2 +  2 \E_{\delta}{\norm{ \nabla f_{\xi}(y)-\nabla f_{\xi}(x+ \delta)}^2}\notag  \\
        &\overset{\eqref{eq:no9f8ghshis}}{\leq} & \frac{2}{\gamma^2} \E_{\delta}{\norm{\delta}^2 - \frac{2}{\gamma}\E_{\delta}\langle \delta, \nabla f_{\xi}(y) - \lambda\nabla f_{\xi}(\theta)\rangle + \norm{\nabla f_{\xi}(y) - \lambda\nabla f_{\xi}(\theta)}^2} + 2 {\cal L}^2 \left( \norm{x-y}^2 + \E_{\delta}{\norm{\delta}^2}\right) \notag \\
        & \overset{\eqref{unbiased-compr}}{=} & \frac{2}{\gamma^2} \E_{\delta}{\norm{\delta}^2} + 2 \norm{\nabla f_{\xi}(y) - \lambda\nabla f_{\xi}(\theta)}^2 + 2 {\cal L}^2 \left( \norm{x-y}^2 + \E_{\delta}{\norm{\delta}^2}\right) .   \notag
    \end{eqnarray*}
\end{proof}

The next lemma generalizes the strong convexity inequality \eqref{ineq:mu-convex} (special case of $\delta(x)\equiv 0$).

\begin{lemma}
    If $f$ is $L$-smooth and $\mu$-strongly convex, then for all $x, y \in \R^d$, it holds
    \begin{equation}
        \label{eq:bui8fgfihf} 
        f(y) \geq  f(x) + \langle \E\[\nabla f(x+ \delta)\], y-x\rangle + \frac{\mu}{2} \norm{y-x}^2 - \frac{L-\mu}{2}\E{\norm{\delta(x)}^2}.
    \end{equation} 
\end{lemma}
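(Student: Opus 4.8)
The plan is to derive \eqref{eq:bui8fgfihf} as a short sandwich of strong convexity and smoothness: apply ordinary $\mu$-strong convexity at the \emph{perturbed} point $x+\delta(x)$, average over the compression noise $\delta=\delta(x)$ (whose expectation is zero by \eqref{unbiased-compr}) to annihilate the linear-in-$\delta$ contributions, and then use $L$-smoothness to convert the residual $\delta$-dependent scalar back into $f(x)$. Concretely, the first step writes \eqref{ineq:mu-convex} with base point $x+\delta$ and evaluation point $y$,
\[
f(y)\ \ge\ f(x+\delta)+\langle\nabla f(x+\delta),\,y-x-\delta\rangle+\frac{\mu}{2}\norm{y-x-\delta}^2 ,
\]
and takes the expectation over $\delta$. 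Using $\E[\delta]=0$ in the inner product and \eqref{eq:mean0norm2} in the quadratic term (so that $\E\norm{y-x-\delta}^2=\norm{y-x}^2+\E\norm{\delta}^2$), all cross terms involving $y-x$ cancel and one is left with
\[
f(y)\ \ge\ \E[f(x+\delta)]-\E\langle\nabla f(x+\delta),\delta\rangle+\langle\E[\nabla f(x+\delta)],y-x\rangle+\frac{\mu}{2}\norm{y-x}^2+\frac{\mu}{2}\E\norm{\delta}^2 .
\]

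It then remains to lower bound the purely $\delta$-dependent scalar $\E[f(x+\delta)]-\E\langle\nabla f(x+\delta),\delta\rangle$ by $f(x)$ up to a quadratic error. For this I would apply the descent lemma \eqref{ineq:L-smooth} in the direction from $x+\delta$ to $x$, which gives pointwise $f(x)\le f(x+\delta)-\langle\nabla f(x+\delta),\delta\rangle+\frac{L}{2}\norm{\delta}^2$; rearranging and averaging yields $\E[f(x+\delta)]-\E\langle\nabla f(x+\delta),\delta\rangle\ge f(x)-\frac{L}{2}\E\norm{\delta}^2$. Substituting this into the previous display, the $\tfrac{\mu}{2}\E\norm{\delta}^2$ coming from strong convexity and the $-\tfrac{L}{2}\E\norm{\delta}^2$ coming from smoothness merge into $-\tfrac{L-\mu}{2}\E\norm{\delta}^2$, which is exactly \eqref{eq:bui8fgfihf}. (Here $f$ is deterministic, so $\E$ equals $\E_\delta$ throughout and nothing interacts with the data sampling $\xi$; and setting $\cC=\mathrm{Id}$, i.e. $\delta\equiv0$, collapses the bound to \eqref{ineq:mu-convex}, as the paper notes.)

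I do not expect a genuine obstacle here. The only points needing care are orienting the smoothness inequality correctly — one must upper bound $f(x)$ by a quadratic expansion \emph{around} $x+\delta$, not the reverse — and tracking the signs of the two quadratic-in-$\delta$ terms so that they combine to $L-\mu$ rather than $L+\mu$.
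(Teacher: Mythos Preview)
Your proposal is correct and follows essentially the same route as the paper: apply $\mu$-strong convexity at the perturbed point $x+\delta$, take expectation using $\E[\delta]=0$ and \eqref{eq:mean0norm2}, then invoke $L$-smoothness \eqref{ineq:L-smooth} from $x+\delta$ to $x$ to lower bound $-\E\langle\nabla f(x+\delta),\delta\rangle$ by $f(x)-\E f(x+\delta)-\tfrac{L}{2}\E\norm{\delta}^2$. Your sign-tracking remark about getting $L-\mu$ rather than $L+\mu$ is exactly the point.
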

\begin{proof} 
    Fix $x$ and let $\delta = \delta(x)$. Using \eqref{ineq:mu-convex} with  $x \leftarrow x+\delta$, we get
    $$ f(y) \geq f(x+ \delta) + \langle \nabla f(x+ \delta), y - x - \delta \rangle + \frac{\mu}{2}\norm{y-x-\delta}^2. $$
    Applying expectation and \eqref{eq:mean0norm2}, we get
    \begin{eqnarray*}
        f(y)  \geq \E f(x+\delta)  + \E{ \langle \nabla f(x+ \delta), y -x \rangle } - \E{\langle \nabla f(x+\delta), \delta\rangle}
        + \frac{\mu}{2}\norm{y-x}^2 + \frac{\mu}{2}\E{\norm{\delta}^2}.
    \end{eqnarray*}
    It remains to bound the term $- \E{\langle \nabla f(x+\delta), \delta\rangle}$, which can be done using $L$-smoothness and applying expectation as follows:
    \begin{eqnarray*} 
        -\E{\langle \nabla f(x+\delta), \delta\rangle} 
        &\overset{\eqref{ineq:L-smooth}}{\geq}&  \E\[f(x) - f(x+\delta) - \frac{L}{2}\norm{\delta}^2\]
        = f(x) - \E{f(x+\delta)} - \frac{L}{2}\E{\norm{\delta}^2}.
    \end{eqnarray*}
\end{proof}

\begin{lemma}
    \label{lem:ES-generalized} 
    Denote $A = 2\mathcal{L} + \left({\cal L}^2 + \frac{1}{\gamma^2} \right) \alpha$ and $B= \left({\cal L} + \frac{1}{\gamma^2} \right) \nu$, where $\alpha, \nu$ are defined in Lemma~\ref{lemma:compression-functional-values}. Then 
    \begin{multline} 
        \label{eq:bu98fg7fss} 
        \E{\norm { \frac{\delta(x)} {\gamma}-\nabla f_{\xi}(x+ \delta(x)) + \lambda\nabla f_{\xi}(\theta)}}^2 \\
        \leq  4A (f(x)-f(x_*)) + 2 B + 4\E{\norm{\nabla f_{\xi}(x^*) - \lambda\nabla f_{\xi}(\theta)}}^2,
    \end{multline}
\end{lemma}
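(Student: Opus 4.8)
The plan is to obtain \eqref{eq:bu98fg7fss} as a bookkeeping consequence of three facts already in hand: the one-step estimate \eqref{eq:buvdyvdf8d87}, the compression-variance bound \eqref{eq:lma-compression-functional-values} from Lemma~\ref{lemma:compression-functional-values}, and expected smoothness \eqref{eq:exp-smooth}. The only choice that matters is the anchor point fed into \eqref{eq:buvdyvdf8d87}: I would take $y=x$. This kills the $\norm{x-y}^2$ term, and it is exactly what keeps the eventual coefficient of $f(x)-f(x^*)$ \emph{linear} in ${\cal L}$, as it must be in order to match $A$. Anchoring instead at $y=x^*$ would leave a $2{\cal L}^2\norm{x-x^*}^2$ term, and the only way to remove it is via $\mu$-strong convexity, which turns it into $\tfrac{4{\cal L}^2}{\mu}(f(x)-f(x^*))$ --- a strictly worse, condition-number-dependent constant.

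Concretely: first apply \eqref{eq:buvdyvdf8d87} with $y=x$ and take expectation over $\xi$ as well; since $\norm{x-y}^2=0$ this gives
\begin{equation*}
\E\norm{\tfrac{\delta(x)}{\gamma}-\nabla f_{\xi}(x+\delta(x))+\lambda\nabla f_{\xi}(\theta)}^2 \;\le\; 2\,\E\norm{\nabla f_{\xi}(x)-\lambda\nabla f_{\xi}(\theta)}^2 + 2\Bigl({\cal L}^2+\tfrac{1}{\gamma^2}\Bigr)\E\norm{\delta(x)}^2 .
\end{equation*}
Second, handle the stochastic-gradient term: insert $\pm\nabla f_{\xi}(x^*)$, apply \eqref{ineq:simple}, and bound $\E\norm{\nabla f_{\xi}(x)-\nabla f_{\xi}(x^*)}^2\le 2{\cal L}(f(x)-f(x^*))$ by \eqref{eq:exp-smooth}; this turns $2\,\E\norm{\nabla f_{\xi}(x)-\lambda\nabla f_{\xi}(\theta)}^2$ into $8{\cal L}(f(x)-f(x^*))+4\,\E\norm{\nabla f_{\xi}(x^*)-\lambda\nabla f_{\xi}(\theta)}^2$. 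Third, bound the compression term via Lemma~\ref{lemma:compression-functional-values}, $\E\norm{\delta(x)}^2\le 2\alpha(f(x)-f(x^*))+\nu$, so that $2({\cal L}^2+\tfrac1{\gamma^2})\E\norm{\delta(x)}^2 \le 4\alpha({\cal L}^2+\tfrac1{\gamma^2})(f(x)-f(x^*)) + 2({\cal L}^2+\tfrac1{\gamma^2})\nu$. Collecting, the $f(x)-f(x^*)$ contributions sum to $\bigl(8{\cal L}+4\alpha({\cal L}^2+\tfrac1{\gamma^2})\bigr)(f(x)-f(x^*)) = 4A\,(f(x)-f(x^*))$, the iteration-independent part is $2({\cal L}^2+\tfrac1{\gamma^2})\nu$, i.e.\ $2B$, and the remainder is exactly $4\,\E\norm{\nabla f_{\xi}(x^*)-\lambda\nabla f_{\xi}(\theta)}^2$, which is \eqref{eq:bu98fg7fss}.

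I do not expect a real obstacle here: every step is a direct substitution, and this lemma is essentially the ``glue'' that will feed into the main descent recursion for the auxiliary iterates \eqref{aux-model-y}. The one point worth care is, as noted, the choice $y=x$ in the first step; beyond that, one merely has to track the repeated factors of $2$ coming from \eqref{ineq:simple} and from \eqref{eq:buvdyvdf8d87} so that the constants land precisely on $4A$, $2B$, and $4$ rather than something looser.
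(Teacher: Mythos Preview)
Your proposal is correct and follows essentially the same route as the paper: apply \eqref{eq:buvdyvdf8d87} with $y=x$, split the resulting $\E\norm{\nabla f_{\xi}(x)-\lambda\nabla f_{\xi}(\theta)}^2$ term around $\nabla f_{\xi}(x^*)$ via \eqref{ineq:simple} and expected smoothness, and close with Lemma~\ref{lemma:compression-functional-values}. The only cosmetic difference is that the paper cites \eqref{ineq:L-smooth+convex} where you (more appropriately) cite \eqref{eq:exp-smooth}; your tracking of the constants to $4A$, $2B$, and $4$ is exactly what the paper obtains (note the lemma statement's $B=({\cal L}+\tfrac{1}{\gamma^2})\nu$ is a typo for $({\cal L}^2+\tfrac{1}{\gamma^2})\nu$, as its later use confirms).
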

\begin{proof} 
    Using \eqref{eq:buvdyvdf8d87} with $y=x$, we get 
    \begin{eqnarray*}
        && \E{\norm { \frac{\delta(x)} {\gamma}-\nabla f_{\xi}(x+ \delta(x)) + \lambda\nabla f_{\xi}(\theta)}}^2 \\
        & \overset{\eqref{eq:buvdyvdf8d87}}{\leq} &  2 \E{\norm{\nabla f_{\xi}(x) - \lambda\nabla f_{\xi}(\theta)}}^2 +  2\left({\cal L}^2 + \frac{1}{\gamma^2} \right)\E{\norm{\delta(x)}^2}\\
        & \overset{\eqref{ineq:simple}+\eqref{ineq:L-smooth+convex}+\eqref{eq:lma-compression-functional-values}}{\leq} & 8\mathcal{L} (f(x) - f(x_*)) + 4\E{\norm{\nabla f_{\xi}(x^*) - \lambda\nabla f_{\xi}(\theta)}}^2 + 2 \left({\cal L}^2 + \frac{1}{\gamma^2} \right) \left(2 \alpha (f(x)-f(x_*)) + \nu\right) \\
        &=& 4\left(2\mathcal{L} + \left({\cal L}^2 + \frac{1}{\gamma^2} \right) \alpha \right)(f(x)-f(x_*)) + 2 \left({\cal L}^2 + \frac{1}{\gamma^2} \right) \nu + 4\E{\norm{\nabla f_{\xi}(x^*) - \lambda\nabla f_{\xi}(\theta)}}^2.
    \end{eqnarray*}
\end{proof}

\subsection{Proof of Theorem~\ref{thm:theory-kd}}

Denoting $\delta^t = \delta(y^t)$ we have $\cC(y^t) = y^t +\delta^t$). Then
\begin{eqnarray*}
&&\norm{y^{t+1} - x^*}^2 \\
&=& \norm{\cC(y^t)  - \gamma \nabla f_{\xi}(\cC(y_t)) + \gamma\lambda \nabla f_{\xi}(\theta) - x^*}^2 \\
&= & \norm{y^t - x^* + \delta^t  - \gamma \nabla f_{\xi}(y^t + \delta^t) + \gamma\lambda \nabla f_{\xi}(\theta)}^2  \\
&=& \norm{y^{t} - x^*}^2 + 2 \langle \delta^t - \gamma \nabla f_{\xi}(y^t + \delta^t) + \gamma\lambda \nabla f_{\xi}(\theta), y^t -x^*\rangle + \norm{\delta^t - \gamma \nabla f_{\xi}(y^t + \delta^t) + \gamma\lambda \nabla f_{\xi}(\theta)}^2.
\end{eqnarray*}
    
Taking conditional expectation $\E_t \eqdef \E\[\cdot \;|\; y^t\]$, we get
\begin{eqnarray*}
&&\E_t\[\norm{y^{t+1} - x^*}^2\] \\
&=& \norm{y^{t} - x^*}^2 + 2\gamma \langle \E_t\[\nabla f(y^t + \delta^t)\] - \lambda \nabla f(\theta), x^* - y^t \rangle + \E_t\[ \norm{\delta^t - \gamma \nabla f_{\xi}(y^t + \delta^t) + \gamma\lambda \nabla f_{\xi}(\theta)}^2 \] \\
&\overset{\eqref{eq:bui8fgfihf}}{\leq}& \norm{y^{t} - x^*}^2 + 2\gamma \left[ f(x^*) - f(y^t) -\frac{\mu}{2}\norm{y^t-x^*}^2 + \frac{L-\mu}{2}\E_t\norm{\delta^t}^2 \right] + 2\gamma\lambda \langle \nabla f(\theta), y^t-x^* \rangle \\
&& \qquad + \gamma^2   \E_t\norm{\frac{\delta^t}{\gamma} - \nabla f_{\xi}(y^t + \delta^t) + \lambda\nabla f_{\xi}(\theta)}^2\\ 
& =& (1-\gamma \mu) \norm{y^{t} - x^*}^2 - 2\gamma(f(y^t) - f(x^*)) + \gamma (L-\mu)\E_t\norm{\delta^t}^2 + 2\gamma\lambda \langle \nabla f(\theta), y^t-x^* \rangle \\
    && \qquad + \gamma^2 \E_t\norm{\frac{\delta^t}{\gamma} - \nabla f_{\xi}(y^t + \delta^t) + \lambda\nabla f_{\xi}(\theta)}^2 \\ 
&\overset{\eqref{ineq:mu-convex}+\eqref{ineq:peter-paul}}{\le}& (1-\gamma \mu) \norm{y^{t} - x^*}^2 - \gamma(f(y^t) - f(x^*)) - \frac{\gamma\mu}{2}\norm{y^t-x^*}^2 + \gamma (L-\mu)\E_t\norm{\delta^t}^2 \\
    && \qquad + \frac{\gamma\mu}{2}\norm{y^t-x^*}^2 + \frac{2\gamma\lambda^2}{\mu} \norm{\nabla f(\theta)}^2 + \gamma^2 \E_t\norm{\frac{\delta^t}{\gamma} - \nabla f_{\xi}(y^t + \delta^t) + \lambda\nabla f_{\xi}(\theta)}^2 \\ 
&\overset{\eqref{eq:bu98fg7fss}}{\leq}  & (1-\gamma\mu)\norm{y^{t} - x^*}^2 - \gamma(f(y^t) - f(x^*)) + \gamma (L-\mu)\E_t\norm{\delta^t}^2 + \frac{2\gamma\lambda^2}{\mu} \norm{\nabla f(\theta)}^2 \\
    && \qquad + 4\gamma^2A (f(y^t)-f(x^*)) + 2 \gamma^2 B + 4\gamma^2\E{\norm{\nabla f_{\xi}(x^*) - \lambda\nabla f_{\xi}(\theta)}^2} \\
&=& (1-\gamma \mu) \norm{y^{t} - x^*}^2 + \gamma (4 \gamma A - 1)(f(y^t)-f(x^*)) + 2\gamma^2 B + \gamma (L-\mu) \E_t\norm{\delta^t}^2 \\
    && \qquad + \frac{2\gamma\lambda^2}{\mu}\norm{\nabla f(\theta)}^2 + 4\gamma^2\E{\norm{\nabla f_{\xi}(x^*) - \lambda\nabla f_{\xi}(\theta)}^2} \\
& \overset{\eqref{eq:lma-compression-functional-values}}{\leq} & (1-\gamma \mu) \norm{y^{t} - x^*}^2 + \gamma (4 \gamma A - 1)(f(y^t)-f(x^*)) + 2\gamma^2 B \\
    && \qquad + \gamma (L-\mu)\left(2\alpha (f(x_k)-f(x_*)) + \nu\right) \\
    && \qquad + \frac{2\gamma\lambda^2}{\mu}\norm{\nabla f(\theta)}^2 + 4\gamma^2\E{\norm{\nabla f_{\xi}(x^*) - \lambda\nabla f_{\xi}(\theta)}^2} \\
& = & (1-\gamma\mu) \norm{y^{t} - x^*}^2 + \gamma (4 \gamma A + 2\alpha(L-\mu) - 1)(f(y^t)-f(x^*)) + 2\gamma^2 B + \gamma (L-\mu)\nu \\
    && \qquad + \frac{2\gamma\lambda^2}{\mu}\norm{\nabla f(\theta)}^2 + 4\gamma^2\E{\norm{\nabla f_{\xi}(x^*) - \lambda\nabla f_{\xi}(\theta)}^2},
\end{eqnarray*}
where $\alpha$ and $\nu$ are as in Lemma~\ref{lemma:compression-functional-values} and $A$ and $B$ are defined Lemma~\ref{lem:ES-generalized}.
Next, we show that the bounds on $\gamma$ and $\omega$ lead to $2 \gamma A + \alpha (L - \mu) \leq \nicefrac{1}{2}$. Plugging the expression of $A$, we the former inequality becomes
\begin{equation*}
2 \gamma \br{2{\cal L} + \br{{\cal L}^2 + \frac{1}{\gamma^2}} \alpha } + \alpha \br{L - \mu} \leq \frac{1}{2} .
\end{equation*}
Rearranging terms, we get
\begin{align*}
\frac{2\omega}{\mu} = \alpha \leq \frac{\nicefrac{1}{2} - 4 \gamma {\cal L}}{2 \gamma {\cal L}^2 + \frac{2}{\gamma} + L - \mu}.
\end{align*}

For $\gamma\le\frac{1}{16{\cal L}}$, the above inequality holds if $\omega={\cal O}(\gamma\mu)$. If $\gamma=\frac{1}{16{\cal L}}$, the condition on variance parameter $\omega$ becomes $\omega={\cal O}(\nicefrac{\mu}{{\cal L}})$. Thus, by assumption on $\omega$ and $\gamma$, we have $2\gamma A + \alpha (L-\mu) \leq \nicefrac{1}{2}$, and hence
$$
\E_t\[\norm{y^{t+1} - x^*}^2\]  \leq  (1-\gamma \mu) \norm{y^{t} - x^*}^2 +  D,
$$
where
\begin{eqnarray*}
D
&=& 2\gamma^2 B + \gamma (L-\mu)\nu + \frac{2\gamma\lambda^2}{\mu}\norm{\nabla f(\theta)}^2 + 4\gamma^2\E{\norm{\nabla f_{\xi}(x^*) - \lambda\nabla f_{\xi}(\theta)}^2} \\
&=& 2\gamma^2 B + \gamma (L-\mu)\nu + \frac{2\gamma}{\mu} N_3(\lambda),
\end{eqnarray*}
with $N_3(\lambda) = \lambda^2\norm{\nabla f(\theta)}^2 + 2\gamma\mu\E{\norm{\nabla f_{\xi}(x^*) - \lambda\nabla f_{\xi}(\theta)}^2}$. Applying Lemma \ref{lem:key} with $c=2\mu$ we get $N_3(\lambda^*) = N_3(0) \min\(1, \frac{1}{\gamma}\cO(f(\theta) - f^*)\) = 2\mu\sigma_*^2\min\(\gamma, \cO(f(\theta) - f^*)\)$. Taking expectation, unrolling the recurrence,  and applying the tower property, we get 
$$\E\[\norm{y^{t} - x^*}^2\] \leq (1-\gamma \mu)^t \norm{y^{0} - x^*}^2 + \frac{D}{\gamma \mu}$$

where the neighborhood is given by
\begin{eqnarray*}
\frac{D}{\gamma\mu}
&=& \frac{1}{\gamma\mu} (2\gamma^2 B + \gamma (L-\mu) \nu + \frac{2\gamma}{\mu} N(\lambda^*) ) 
= \frac{1}{\gamma\mu} (2\gamma^2 ({\cal L}^2 + \nicefrac{1}{\gamma^2})\nu + \gamma (L-\mu) \nu + \frac{2\gamma}{\mu} N(\lambda^*) ) \\
&=& \frac{\nu}{\mu} (2\gamma{\cal L}^2 + \nicefrac{2}{\gamma} + L-\mu ) + \frac{2}{\mu^2}N(\lambda^*)
= \frac{\nu}{\mu} (2\gamma{\cal L}^2 + \nicefrac{2}{\gamma} + L-\mu ) + \frac{4\sigma_*^2}{\mu} \min\(\gamma, \cO(f(\theta) - f^*)\).
\end{eqnarray*}

Ignoring the absolute constants and using bounds $\gamma\le\frac{1}{16{\cal L}}$ and $\omega={\cal O}(\gamma\mu)$, we have
$$
\E\[\norm{y^{t} - x^*}^2\]
\leq (1-\gamma \mu)^t \norm{x^{0} - x^*}^2
    + \cO\(\frac{\omega}{\gamma\mu}\|x^*\|^2\)
    + \frac{4\sigma_*^2}{\mu} \min\(\gamma, \cO(f(\theta) - f^*)\).
$$

Applying this inequality in \eqref{eq:error-decomp} we conclude the proof.\footnote{The rate in Theorem \ref{thm:theory-kd} uses $\gamma=\frac{1}{16{\cal L}}$ value for the learning rate.}

\section{Limitations}

Lastly, following our discussion from Section \ref{sec:discussion}, we discuss some limitations of our work.

\begin{itemize}
  \item As a theoretical paper, we used several assumptions to make our claims rigorous. However, one can always question each assumption and extend the theory under certain relaxations. Our theoretical claims are based on strong (quasi) convexity or Polyak-Łojasiewicz condition, which are standard assumptions in the optimization literature.
  \item Another limitation concerning the ``distillation for compression" part of our theory is the unbiasedness condition $\E\[\cC(x)\]=x$ in Assumption \ref{asm:compression-operator}. Ideally, we would utilize any ``biased" compression operator, such as TopK, with similar convergence properties. However, it is known that biased estimators (e.g., biased compression operators or biased stochastic gradients) are harder to analyze in theory.
\end{itemize}

\end{document}